\documentclass[12pt]{article}

\usepackage[english]{babel}
\usepackage[a4paper,top=3cm,bottom=3cm,left=3cm,right=3cm,marginparwidth=2cm]{geometry}
\usepackage{amsmath,amssymb,amsthm,amsfonts,dsfont}
\usepackage{mathrsfs,mathtools,thmtools}
\usepackage{graphicx,xcolor}
\usepackage{ragged2e,fancyhdr}
\usepackage{multirow}
\usepackage[style=authoryear,citestyle=authoryear,backend=bibtex8,bibstyle=authoryear,giveninits=true,maxbibnames=99,isbn=false,doi=false,url=false,hyperref=auto,eprint=true,natbib]{biblatex}
\usepackage{csquotes}
\usepackage{bm,bbm}
\usepackage{caption,subcaption}
\usepackage{algorithm,algpseudocode}
\usepackage{soul}

\usepackage{amsmath,amssymb,amsthm,amsfonts,dsfont}
\usepackage{mathrsfs,mathtools,thmtools}
\usepackage{bm}

\usepackage{xspace}
\usepackage{glossaries-extra}
\newcommand{\myacronym}[4] 
{
	\newglossaryentry{#1}
	{
		type=\acronymtype,
		name={#2},
		description={#3},
		text={#2},
		first={#3 (#2)},
		firstplural={#4 (#2s)},
		short={#2}
	}
	\expandafter\newcommand\csname #1\endcsname{\gls{#1}\xspace} 
	\expandafter\newcommand\csname #1s\endcsname{\glspl{#1}\xspace} 
}

\usepackage[dvipsnames]{xcolor}
\usepackage[colorinlistoftodos,textwidth=2cm]{todonotes}

\usepackage[colorlinks=true, allcolors=red]{hyperref}
\usepackage[capitalize, noabbrev]{cleveref} 

\usepackage{crossreftools}
\DeclareMathOperator{\Var}{Var}
\DeclareMathOperator{\Tr}{tr}
\DeclareMathOperator{\sign}{sign}
\renewcommand{\Pr}{\mathbb{P}}
\newcommand{\Ex}{\mathbb{E}}
\DeclareMathOperator*{\argmin}{arg\,min}

\DeclareMathOperator{\spn}{span}

\newcommand{\R}{\mathbb{R}}
\newcommand{\N}{\mathbb{N}}
\newcommand{\C}{\mathbb{C}}

\newcommand{\Z}{\mathcal{Z}}
\newcommand{\X}{\mathcal{X}}
\newcommand{\Y}{\mathcal{Y}}
\renewcommand{\H}{\mathbb{H}}
\newcommand{\GP}{\mathcal{G}}
\newcommand{\defeq}{\overset{\mathrm{def}}{=}}

\newcommand{\conv}{\underset{n \rightarrow \infty}{\longrightarrow}}
\newcommand{\dconv}{\overset{\mathrm{d}}{\conv}}
\newcommand{\pconv}{\overset{\Pr}{\conv}}

\renewcommand{\d}{\mathrm{d}}

\newcommand{\ppi}{{\bm\pi}}

\newcommand{\lz}{{\lambda_0}}
\renewcommand{\ln}{{\lambda_n}}
\newcommand{\lsn}{{\lambda_{s_n}^*}}

\newcommand{\hz}{{h_0}}
\newcommand{\hn}{{\hat h_n}}
\newcommand{\hsn}{{h^*_{s_n}}}
\newcommand{\hu}{{h_{s_n}^{*\mathrm{U}}}}
\newcommand{\hasn}{{h_{s_n}^{*\alpha}}}
\newcommand{\hp}{{\hat h_{k_n}}}

\renewcommand{\L}{{\ensuremath{L}}}
\newcommand{\Ln}{\L_n}
\newcommand{\Lsn}{\L^*_{s_n}}
\newcommand{\Lsna}{\L^{*\alpha}_{s_n}}
\newcommand{\risk}{{\ensuremath{\mathcal{R}}}}
\renewcommand{\Rn}{\risk_n}
\newcommand{\Rsn}{\risk^*_{s_n}}

\newcommand{\ds}[1]{{\mathcal{D}_{#1}}}

\newcommand{\dn}{\ds{n}}

\newcommand{\F}[2]{{\mathcal{F}_{#1,#2}}}
\newcommand{\Fn}{{\mathcal{F}_{n}}}
\newcommand{\Fnj}{\F{n}{j}}
\newcommand{\Fnjm}{\F{n}{j-1}}
\newcommand{\Fni}{\F{n}{i}}
\newcommand{\Fnim}{\F{n}{i-1}}
\newcommand{\tF}[2]{{\tilde{\mathcal{F}}_{#1,#2}}}
\newcommand{\tFnj}{\tF{n}{j}}
\newcommand{\tFnjm}{\tF{n}{j-1}}
\newcommand{\tFni}{\tF{n}{i}}
\newcommand{\tFnim}{\tF{n}{i-1}}

\DeclarePairedDelimiter{\pa}{(}{)}
\DeclarePairedDelimiter{\av}{|}{|}
\DeclarePairedDelimiter{\no}{\|}{\|}
\DeclarePairedDelimiter{\cb}{\{}{\}}
\DeclarePairedDelimiter{\br}{[}{]}
\newcommand{\E}[1]{\Ex\br*{#1}}
\newcommand{\dprod}[2]{\langle#1,#2\rangle_\H}
\newcommand{\dprodE}[2]{\langle#1,#2\rangle_E}

\newcommand{\V}[1]{\Var\pa*{#1}}
\renewcommand{\P}[1]{\Pr\pa*{#1}}

\makeatletter
\@ifclassloaded{beamer}
{}
{
	\declaretheorem[name=Theorem,numberwithin=section]{theorem}
	\declaretheorem[name=Lemma,numberlike=theorem,refname={lemma,lemmas},Refname={Lemma,Lemmas}]{lemma}
	\declaretheorem[name=Proposition,numberlike=theorem,refname={proposition,propositions},Refname={Proposition,Propositions}]{proposition}
	\declaretheorem[name=Corollary,numberlike=theorem,refname={corollary,corollaries},Refname={Corollary,Corollaries}]{corollary}
	\declaretheorem[name=Assumption,refname={assumption,assumptions},Refname={Assumption,Assumptions}]{assumption}
	
	\declaretheorem[name=Remark,numbered=no,refname={remark,remarks},Refname={Remark,Remarks}]{remark} 
	\declaretheorem[name=Example,refname={example,examples},Refname={Example,Examples}]{example}
	\newtheorem{primeassumptioninner}{Assumption}
	\newenvironment{primeassumption}[1]{
		\renewcommand\theprimeassumptioninner{#1'}
		\primeassumptioninner
	}{\endprimeassumptioninner}

}
\makeatother

\title{
  Subsampling for supervised learning in reproducing kernel Hilbert spaces
  }
\date{\today}
\author{Eyal \textsc{Vayness} \& Maxime \textsc{Sangnier} \\ {\footnotesize Sorbonne Université, LPSM, Paris, France}}

\graphicspath{{./}}

\myacronym{iid}{i.i.d.}{independent and identically distributed}{-}
\myacronym{rkhs}{RKHS}{reproducing kernel Hilbert space}{reproducing kernel Hilbert spaces}
\myacronym{erm}{ERM}{empirical risk minimizer}{empirical risk minimizers}
\myacronym{srm}{SRM}{subsampled risk minimizer}{subsampled risk minimizers}
\myacronym{ssp}{SSP}{subsampling probability}{subsampling probabilities}
\myacronym{sspv}{SSPV}{subsampling probability vector}{subsampling probability vector}
\myacronym{rff}{RFF}{random Fourier features}{random Fourier features}

\begin{document}
	\justifying
	\maketitle
	\pagenumbering{arabic}


\begin{abstract}

In the era of big data, subsampling became a common practice in statistical learning.
By selecting a subgroup of individuals based on which the learner is trained, subsampling aims at reducing the computational cost and time of the estimation step, and ideally leads to a decrease of its energy consumption and carbon footprint.
This work focuses on a nonparametric setting, in which the hypotheses set lies in a reproducing kernel Hilbert space, and the estimator is a minimizer of an empirical risk reweighted à la Horvitz-Thompson.
By studying the asymptotic properties of this estimator, we reveal an optimal subsampling scheme (regarding the trace of the covariance operator) and show that it can be used via plug-in.
A numerical study on synthetic and real-world datasets shows the practicability and the benefit of the proposed approach.
\end{abstract}

\section{Introduction}

	As of the last couple decades, the overwhelming data accumulation in various domains such as medical research, road safety, bioinformatics, physics, genetics and social media, led scientists to rethink standard statistical learning methods.
	Beyond the computational barrier (in time and memory) induced by the data deluge and the economic cost of training models on large datasets, the overconsumption of energy resources also became a concerning topic for the scientific community.
	With this in mind, it is of interest to reduce the computational cost of learning methods, particularly regarding their training phase, which often involves minimizing an objective function and is more demanding than inference. 

	A first thought to limit the computational burden of training procedures, is to leverage more efficient optimization algorithms, such as active set strategies \citep{johnson_blitz_2015}, stochastic gradient methods \citep{bottou2018optimization,clemencon_optimal_2019} and random coordinate descents \citep{shalev-shwartz_stochastic_2013,fercoq_coordinatedescent_2019a}, to cite only a few.
	However, these approaches still work on the whole training set, which may include either redundant or quite uninformative observations.
	Data set reduction methods can then be used in order to reduce the effective size of the dataset, while producing only informative datapoints.
	Sketching, for instance, is a way to reduce the dataset by generating new individuals, which are linear combinations of the original ones \citep{pilanci_randomized_2015, raskutti_statistical_2016,wang_sketched_2018}. 
	Another approach is the coreset technique, which randomly pick points from de dataset in order to uniformly approximate the objective function \citep{huggins2016coresets,munteanu_coresets_2018,karnin_discrepancy_2019,samadian_unconditional_2020,braverman2021efficient}.
	Nonuniform subsampling of points can also be implemented thanks to their leverage scores \citep{mahoney_randomized_2011,drineas_faster_2011,woodruff_sketching_2014,ma_statistical_2015}, which are based on the covariates but not on the target values.
	Stepping back, subsampling can also be analyzed for general inclusion probabilities, from the excess risk point of view \citep{clemencon2016learning,han_complex_2021}
	and the estimator asymptotic distribution side \citep{wang_optimal_2018,wang2022sampling,han_leverage_2025a}.
	In the latter case, it is even possible to exhibit an optimal subsampling scheme, which minimizes the trace of the limit covariance matrix.
	Several other approaches exist, such as sampling with influence functions \citep{ting_optimal_2018} and leveraging minimum enclosing ball from computational geometry \citep{tsang2005core}.

	At this step, we remark that many methods, such as stochastic optimization and sketching, are task-agnostic, which leaves an area of improvement. 
	Moreover, if coreset techniques propose data-dependent sampling procedures, these ones are aimed at uniform approximation the objective function and consequently often leverage intractable probabilities \citep{munteanu_coresets_2018,braverman2021efficient}.
	On the contrary, optimal subsampling proposes an easy-to-implement way to extract informative individuals from the original dataset, such that the trace of the asymptotic covariance matrix is minimal \citep{wang2022sampling}.

	In this work, we focus on nonparametric supervised learning, and in particular on kernel methods, where the hypotheses lie in a \rkhs \citep{berlinet_reproducing_2004}.
	In this case, few of the methods described above can be used (they have been mainly designed for parametric inference), the most notable being sketching \citep{yang_randomized_2017,elahmad2023fast,elahmad2024sketch} and specific approaches tailored for support vector machines, such as minimum enclosing balls \citep{tsang2005core} and stochastic subgradient algorithm \citep{shalev-shwartz_pegasos_2011}.
	However, kernel learning can rely on kernel approximation techniques to lighten the numerical training cost,
	which consists in replacing either the Gram matrix by a low-rank approximation (for instance, thanks to the Nyström method \citep{williams2000using,drineas_nystrom_2005,rudi_less_2015}), or the feature map by a finite-dimensional mapping (thanks to \rff \citep{rahimi2007random,brault_random_2016,rudi_generalization_2017,li_unified_2021}).
	Since the computational cost of a kernel method is mainly function of the size of the optimization problem to solve (which is basically represented by the spatial complexity of the design or Gram matrix of the effective dataset),
	the appeal of the Nyström method, \rff and sketching is to cut down the computational burden from \(O(n^2)\) (for an initial dataset of size \(n\)) to \(O(nm)\), where \(m\) an approximation parameter (the dimension of the subspace for the Nyström method, the number of random projections for \rff, the number of linear combinations for the sketching approach).
	If this is a big gain, we remark that the computational cost still depends on the size \(n\) of the original dataset, which may be very large in practice.
	On the contrary, subsampling \(s\) datapoints would lead to a \(O(s^2)\) cost, which is largely less.

	We thus aim at designing a data-dependent subsampling procedure for nonparametric supervised learning in \rkhss.
	From the methodological side, we follow the line of research initiated by \citet{wang_optimal_2018} (then followed by numerous papers including \citep{wang_more_2019,wang2022sampling,han_leverage_2025a,shao_optimal_2025}), which determines the subsampling scheme minimizing the trace of the asymptotic covariance matrix of a M-estimator of an empirical risk reweighted à la Horvitz-Thompson.
	From the theoretical side, we study asymptotics in \rkhss with tools coming from functional random probability spaces \citep{christmann2008support,hsing2015theoretical} with pieces of elements from martingale theory \citep{hall1980martingale}.
	These two aspects distinguish our work from that by \citet{wang2022sampling} since we place ourself in a particular nonparametric setting and employ different techniques of proof.
	The latter are also different from the empirical process techniques used by \citet{hable2012asymptotic,sancetta2021estimation} for analyzing usual \erms.

	In \Cref{sec:background}, we first revisit asymptotic normality of the \erm in \rkhss, based on the asymptotically finite-dimensional property \citep{hsing2015theoretical,vandervaart2023weak}.
	This result corroborates the ones presented in \citep{hable2012asymptotic,sancetta2021estimation} and comes with a complementary technique of proof.
	Then, in \Cref{sec:asymp_erm}, the \srm is presented as well as its convergence and asymptotic distribution.
	\Cref{sec:optimal} exhibits the optimal subsampling scheme and \Cref{sec:piloted_estimator} presents a way to implement it in practice.
	Le latter strategy is numerically evaluated on synthetic and real-world datasets in \Cref{sec:numerical}, and favorably compared to state-of-the-art approximation techniques for kernel methods.\footnote{The Python source code for reproducing the numerical experiments is provided at \href{https://gitlab.com/Yaloude/rkhs-subsampling}{https://gitlab.com/Yaloude/rkhs-subsampling}.}
	All proofs are gathered in \Cref{sec:proofs}.


\section{Background} \label{sec:background}
  Let $Z \in \Z \subset \R^{d+1}$ be the random variable of interest with unknown distribution.
  We focus here on supervised learning and assume that \(Z\) has a canonical decomposition: $Z = (X, Y) \in \X \times \Y \subset \R^d \times \R$ where $X$ is the vector of covariates and $Y$ is the response.
  Denote $\ell : \R \times \Y \rightarrow \R$ the loss function associated to the learning task of interest,
  and assume that \(\ell\) is non-negative, convex and twice-differentiable in its first argument.
  The setting defined this way suits both binary classification: $\Y = \cb{\pm 1}$ and for instance $\ell(\hat y, y) = \log(1 + \exp(-y \hat y))$ for logistic regression (where $\hat y \in \R$ is a prediction compared to $y \in \Y$);
  as well as regression: $\Y = \R$ and for example $\ell(\hat y, y) = \frac{1}{p} |\hat y - y|^p$ for $L^p$ regression.

  The working set of predictors is $\H$, the \rkhs induced by some kernel $k : \X \times \X \rightarrow \R$ (i.e.\ a symmetric and positive definite function).
  In other words, $\H$ is the closure of the span of the map by $k$ of elements of $\X$, i.e.
  \begin{equation*}
	  \H \defeq \overline{\spn \cb{K_x : x \in \X}},
  \end{equation*}
  where $K_x = k(x, \cdot)$ for all $x \in \X$.

  Given the previous elements, the population risk function of the learning task considered is:
  \begin{equation} \label{eq:true_loss}
	  \L : h \in \H \mapsto \E{\ell(h(X), Y)},
  \end{equation}
  which could be minimized so as to obtain an optimal predictor.
  However two problems arise:
  first, the distribution of \(Z = (X, Y)\) is assumed unknown (such that the expectation in \eqref{eq:true_loss} cannot be computed);
  second, the current assumptions do not ensure existence of a minimizer of \(L\) in \(\H\).
  To solve those problems, we turn towards the empirical version of \eqref{eq:true_loss}:
  \begin{equation} \label{eq:empirical_loss}
	  \Ln(h) \defeq \frac{1}{n} \sum_{i = 1}^n \ell(h(X_i), Y_i),
  \end{equation}
  based on a dataset $\dn = \cb{ Z_1, \dots, Z_n } = \cb{ (X_1, Y_1), \dots, (X_n, Y_n) }$ made of \iid random variables sharing the same distribution as \(Z = (X, Y)\),
  and add a ridge penalization tuned by some parameter $\lz > 0$ for the true risk and by $\ln > 0$ (a possibly random parameter) for the empirical one.
  The objective functions, namely the true and empirical regularized risks now read:
  \begin{equation} \label{eq:true_risk}
	  \risk(h) \defeq \L(h) + \frac{\lz}{2} \| h \|_\H^2,
  \end{equation}
  and
  \begin{equation} \label{eq:empirical_risk}
	  \Rn(h) \defeq \Ln(h) + \frac{\ln}{2} \| h \|_\H^2.
  \end{equation}
  Let us remark that minimizing \(\risk\) (respectively \(\Rn\)) is equivalent to minimizing \(L\) (respectively \(L_n\)) inside a ball of \(\H\), the radius of which depending both on the penalization parameter \(\lz\) (respectively \(\ln\)) and the data (see for example \citep{sancetta2021estimation}).
  In addition, the framework above offers the chance to take \(\ln \neq \lz\), which makes sens in practice since penalizing is also a way to prevent overfitting, on the condition that the penalization parameter \(\ln\) is tuned with respect to the sample size \(n\) and the data (roughly speaking, the less the data, the stronger the penalization).

  Now, by convexity of the loss \(\ell\) (with respect to its first argument) and \(\lz > 0\), the true regularized risk \(\risk\) is lower semi-continuous and strongly convex, and admits a unique minimizer in \(\H\), denoted \(\hz\) \citep[][Lemma 2.15]{christmann2008support}.
  The same holds true for the empirical regularized risk \(\Rn\) (since \(\ln > 0\)), the minimizer of which is denoted \(\hn \in \H\).

  \subsection{Assumptions}
    Before introducing the proposed subsampling estimator, we aim at presenting the asymptotic distribution of \(\hn\) as a stepping stone to our main result.
    This requires a few assumptions, which will be needed all along our analysis.

    \begin{assumption} \label{hyp:kernel}
	    The kernel $k : \X \times \X \rightarrow \R$ is continuous and bounded: there exists $\kappa > 0$ such that $\sup_{x \in \X} \sqrt{k(x, x)} \leq \kappa$.
    \end{assumption}

    \Cref{hyp:kernel} is standard in the analysis of estimators lying in \rkhss (see for instance \citep{bartlett_rademacher_2002}).
    In particular, the continuity of \(k\) (jointly with $\X$ being separable) leads the \rkhs \(\H\) to be separable \citep[][Lemma 4.33]{christmann2008support}.
    This means that there exists a complete orthonormal system $(e_j)_{j \in \N}$ of $\H$ (i.e.\ $\| e_j \|_\H = 1$ and $\dprod{e_j}{e_k} = 0$ for all $j,k \in \N$, $k \neq j$),
    which will enable to define the trace of a linear operator in \(\H\) in \Cref{sec:asymp_erm}.
    The bound \(\kappa\), as for it, leads to bounded predictors (\(\sup_{\bm x \in \X}|h(\bm x)| < \infty\), for all \(h \in \H\)) \citep[Lemma 4.28]{christmann2008support},
    which helps in obtaining convergence of empirical quantities (see \Cref{sec:proofs}).

    Henceforth, let $\ell^\prime$ and $\ell^{\prime\prime}$ be respectively the first and second partial derivatives of $\ell$ with respect to its first variable.

    \begin{assumption} \label{hyp:loss}
	    For all $y \in \Y$ and $B > 0$, the loss $\ell$ and its derivatives, $\ell^\prime$ and $\ell^{\prime\prime}$, are $\phi_B(y)$-Lipschitz in their first variable on $[-B, B]$, i.e.\ for $k \in \{ 0, 1, 2 \}$:
	    \begin{equation*}
		    | \ell^{(k)}(\hat y_1, y) - \ell^{(k)}(\hat y_2, y) | \leq \phi_B(y) | \hat y_1 - \hat y_2 |,
	    \end{equation*}
	    for all $\hat y_1, \hat y_2 \in [-B, B]$.
    \end{assumption}

    \begin{assumption} \label{hyp:distribution}
	    For all $B > 0$ and $k \in \{ 0, 1, 2 \}$, $\E{ \phi_B(Y)^4 }$ and $\E{ \ell^{(k)}(0, Y)^4 }$ exists and are finite.
    \end{assumption}

    \Cref{hyp:loss} asks for $\ell$ and its derivatives to be locally Lipschitz continuous in their first argument.  
    For simplicity, we consider the same Lipschitz coefficient $\phi_B(\cdot)$ for all derivatives.
    In particular, if \(\ell\) is thrice-differentiable, then one may choose, for all \(y \in \Y\) and \(B > 0\), \(\phi_B(y) = \max_{k \in \{1, 2, 3\}} \sup_{t \in [-B, B]} \ell^{(k)}(t, y)\).
    \Cref{hyp:distribution} as for it, is a common moment assumption in asymptotic analyses, except that it is required up to the degree \(4\) in order to handle the Horvitz-Thompson reweighting introduced in \Cref{sec:asymp_erm}. 
    In addition, \Cref{hyp:loss,hyp:distribution} (with \Cref{hyp:kernel}) yields that $\ell^\prime(\hz(X), Y) K_X$ and $\ell^{\prime\prime}(\hz(X), Y) K_X$ have a finite fourth order moment.

    At last, \Cref{hyp:ln_speed} asks for the convergence of the regularization parameter $\ln$ towards $\lz$ at a rate greater than $1/\sqrt{n}$, which is basically that of \(L_n(h)\) towards \(L(h)\), for any \(h \in \H\).

    \begin{assumption} \label{hyp:ln_speed}
	    The (random) regularization parameter sequence $(\ln)_{n>0}$ verifies that $\ln > 0$ almost surely for all $n > 0$ and
	    \begin{equation*}
		    \sqrt{n} (\ln - \lz) \pconv 0.
	    \end{equation*}
    \end{assumption}

  \subsection{Asymptotic normality of \erm}
    With the assumptions stated above, our first step is to derive gradients of the regularized risks.
    Indeed, since $\ell \in \mathscr{C}^2$, the empirical regularized risk $\Rn$ is twice-differentiable
    and its minimizer $\hn$ verifies the following stationarity condition:
    \begin{equation}
	    \nabla \Rn(\hn) = 0 \quad \Leftrightarrow \quad \ln \hn = - \nabla \Ln(\hn),
    \end{equation}
    with
    \begin{equation}
	    \nabla \Ln(h) = \frac{1}{n} \sum_{i = 1}^n \ell^\prime (h(X_i), Y_i) K_{X_i}, 
    \end{equation}
    for all $h \in \H$.
    In addition, letting ``$\otimes$'' being the outer product in \(\H\)
    (i.e.\ for all $h, f \in \H$,
    \((h \otimes f)\) is a linear operator from \(\H\) to \(\H\) such that,
    for all \(g \in \H\), $(h \otimes f)g \defeq \dprod{f}{g}h$)
    one may also compute the Hessian operators of the empirical risks:
    \begin{equation}
	    \nabla^2\Ln(h) = \frac{1}{n} \sum_{i=1}^n \ell^{\prime\prime}(h(X_i), Y_i) K_{X_i} \otimes K_{X_i}
	    \quad \text{and} \quad
	    \nabla^2\Rn(h) = \nabla^2\Ln(h) + \ln \mathrm{id}_\H, 
    \end{equation}
    where \(\mathrm{id}_\H\) is the identity operator in \(\H\).

    The same applies to the true regularized risk \(\risk\):
    according to \Cref{lem:leibniz}, under \Cref{hyp:kernel,hyp:loss,hyp:distribution}, \(\risk\) is twice differentiable at \(\hz\), with:
    \[
      \nabla\risk(\hz) = \nabla\L(\hz) + \lz \hz
      \quad \text{and} \quad
      \nabla^2\risk(\hz) = \nabla^2\L(\hz) + \lz \mathrm{id}_\H,
    \]
    where
    \[
      \nabla \L(\hz) = \E{ \ell^\prime (\hz(X), Y) K_X } 
      \quad \text{and} \quad
      \nabla^2\L(\hz) = \E{ \ell^{\prime\prime}(\hz(X), Y) K_X \otimes K_X }. 
    \]
    Let us remark that \(\nabla^2 \risk\) is continuous at \(\hz\) and that \(\nabla^2 \risk(\hz)\) is invertible \citep[Lemma A.5]{hable2012asymptotic}.

    We are now ready to state asymptotic normality of \(\hn\).
    For this purpose, let us recall that we say that \(H\) is a Gaussian process in \(\H\) with mean \(0\) and covariance operator \(\Sigma : \H \to \H\), denoted \(H \sim \GP(0, \Sigma)\),
    if for any positive integer \(m\) and \(h_1, \dots, h_m \in \H\), \((\dprod{H}{h_1}, \dots, \dprod{H}{h_m})^\top \sim \mathcal N(0, (\dprod{\Sigma h_i}{h_j})_{1 \le i, j \le m})\).

    \begin{theorem}[Asymptotic normality of $\hn$] \label{thm:erm_normality}
	    Under \Cref{hyp:kernel,hyp:loss,hyp:distribution,hyp:ln_speed}, one has
	    \begin{equation*}
		    \sqrt{n} \nabla^2\risk(\hz) (\hn - \hz) \dconv \GP(0, \Sigma),
	    \end{equation*}
	    with $\Sigma \defeq \E{ \ell^\prime(\hz(X), Y)^2 K_X \otimes K_X } - \nabla\L(\hz) \otimes \nabla\L(\hz)$.
    \end{theorem}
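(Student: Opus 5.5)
The plan is a Taylor expansion of the stationarity condition around \(\hz\), combined with a Hilbert-space CLT and a Slutsky argument. Since \(\nabla\risk(\hz)=0\), we have \(\lz\hz = -\nabla\L(\hz)\). Writing \(\nabla\Rn(\hn)=0\) and expanding \(\nabla\Ln\) between \(\hz\) and \(\hn\) gives
\begin{equation*}
0 = \nabla\Ln(\hz) + \ln\hz + \pa*{\nabla^2\Ln(\tilde h_n) + \ln \mathrm{id}_\H}(\hn-\hz).
\end{equation*}
Here \(\nabla^2\Ln(\tilde h_n)\) stands for the integral form \(\int_0^1 \nabla^2\Ln(\hz+t(\hn-\hz))\,\d t\), which is needed because we are in a Hilbert space. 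Adding and subtracting \(\nabla\L(\hz)+\lz\hz=0\) yields
\begin{equation*}
\nabla^2\risk(\hz)\sqrt n(\hn-\hz) = -\sqrt n\pa*{\nabla\Ln(\hz)-\nabla\L(\hz)} - \sqrt n(\ln-\lz)\hz - \pa*{\nabla^2\Rn(\tilde h_n)-\nabla^2\risk(\hz)}\sqrt n(\hn-\hz).
\end{equation*}
The leading term is a normalized sum of \iid centered \(\H\)-valued variables \(\xi_i=\ell'(\hz(X_i),Y_i)K_{X_i}-\nabla\L(\hz)\). They have finite second moment by \Cref{hyp:kernel,hyp:loss,hyp:distribution}, since \(|\hz(x)|\le\kappa\|\hz\|_\H\). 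The CLT in separable Hilbert spaces therefore gives convergence to \(\GP(0,\Sigma)\) with \(\Sigma=\E{\xi\otimes\xi}\), which matches the stated covariance. If one prefers to avoid citing the Hilbert-space CLT, I would prove it directly. First, finite-dimensional convergence of \((\dprod{\cdot}{e_1},\dots,\dprod{\cdot}{e_m})\) follows from the multivariate CLT. Second, the sequence is asymptotically finite-dimensional, i.e. \(\E{\sum_{j>m}\dprod{\sqrt n\,\overline\xi_n}{e_j}^2}=\sum_{j>m}\E{\dprod{\xi}{e_j}^2}\to0\) uniformly in \(n\). The second term vanishes in probability by \Cref{hyp:ln_speed}.

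The remaining work is to show the remainder is \(o_\Pr(1)\). This needs two ingredients.

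The first is consistency \(\|\hn-\hz\|_\H\pconv0\). I would obtain it from strong convexity. Using \(\nabla^2\Ln\succeq 0\) and the monotonicity of gradients,
\begin{equation*}
\ln\|\hn-\hz\|_\H^2\le\dprod{\nabla\Rn(\hn)-\nabla\Rn(\hz)}{\hn-\hz},
\end{equation*}
so \(\|\hn-\hz\|_\H\le\|\nabla\Rn(\hz)\|_\H/\ln\). Moreover \(\nabla\Rn(\hz)=\nabla\Ln(\hz)-\nabla\L(\hz)+(\ln-\lz)\hz=O_\Pr(n^{-1/2})\) by the CLT above. Since \(\ln\pconv\lz>0\), this gives directly \(\sqrt n\|\hn-\hz\|_\H=O_\Pr(1)\), i.e. tightness as well as consistency.

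The second ingredient is convergence of the Hessians in operator norm: \(\|\nabla^2\Ln(\tilde h)-\nabla^2\L(\hz)\|_{op}\pconv0\) uniformly over \(\tilde h\) on the segment. Because the remainder is bounded by this operator norm times \(\sqrt n\|\hn-\hz\|_\H=O_\Pr(1)\), these two ingredients together give \(o_\Pr(1)\), and Slutsky's lemma concludes.

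The main obstacle is this Hessian step, and I would split it in two. The first piece is \(\|\nabla^2\Ln(\hz)-\nabla^2\L(\hz)\|\pconv0\). This follows from the law of large numbers for the Hilbert–Schmidt-valued \iid operators \(\ell''(\hz(X_i),Y_i)K_{X_i}\otimes K_{X_i}\), each of Hilbert–Schmidt norm at most \(\kappa^2|\ell''|\) and hence integrable. The second piece is the perturbation from \(\hz\) to \(\tilde h\). On the event \(\|\hn-\hz\|_\H\le1\), all predictors on the segment are bounded by \(B=\kappa(\|\hz\|_\H+1)\). \Cref{hyp:loss} then gives
\begin{equation*}
\|\nabla^2\Ln(\tilde h)-\nabla^2\Ln(\hz)\|\le\kappa^3\|\tilde h-\hz\|_\H\,\frac1n\sum_i\phi_B(Y_i),
\end{equation*}
and this is \(o_\Pr(1)\) by consistency and the moments in \Cref{hyp:distribution}. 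The \(\ln\) part of the Hessian is handled by \Cref{hyp:ln_speed}. Invertibility of \(\nabla^2\risk(\hz)\) is not even needed, because the statement is formulated with \(\nabla^2\risk(\hz)\) applied to \(\hn-\hz\).
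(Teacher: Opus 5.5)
Your proposal is correct and follows the same route as the paper. Both arguments use an integral-form Taylor expansion of $\nabla\Rn(\hn)=0$ around $\hz$, a Hilbert-space CLT for $\sqrt n(\nabla\Ln(\hz)-\nabla\L(\hz))$ obtained through asymptotic finite-dimensionality, $\sqrt n$-tightness of $\hn-\hz$ from a strong-convexity bound, and Slutsky; your bound comes from gradient monotonicity and gives the constant $1/\ln$ instead of the paper's $2/\ln$, which is immaterial. The one substantive difference is the Hessian step, where you prove $\|\int_0^1\nabla^2\Rn(\bar h_{n,t})\,\d t-\nabla^2\risk(\hz)\|_{op}\pconv 0$ with a law of large numbers in Hilbert--Schmidt norm, whereas the paper only tests the remainder against each fixed $h\in\H$ using $\nabla^2\Rn(\hz)h\pconv\nabla^2\risk(\hz)h$; that fixed-$h$ control does not by itself give the $\H$-norm $o_\Pr(1)$ that Slutsky in $\H$ needs, so your version actually closes a small gap in the paper's argument.
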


    The proof of \Cref{thm:erm_normality} (see \Cref{app:proof_erm_normality}) is close to that in a parametric model, thanks to a Cramér-Wold-type result.
    The latter holds for, given assumptions on the considered \rkhs \(\H\), random sequences are asymptotically finite-dimensional in the sense of \citep[][Chapter 1.8]{vandervaart2023weak} or \citep[][Theorem 7.7.5]{hsing2015theoretical}.

    Asymptotic properties of the \erm in the non-parametric framework of \rkhss are known.
    Main results include consistency \citep{christmann2008support},
    outlining of a limit Gaussian process \citep{hable2012asymptotic},
    and asymptotic normality in a ball of \(\H\) \citep{sancetta2021estimation}.
    \Cref{thm:erm_normality} is similar to a result by \citet{hable2012asymptotic,hable2012asymptotica},
    but the proofs differ since \citet{hable2012asymptotic,hable2012asymptotica} make use of a functional delta method on the empirical process.
    \Cref{thm:erm_normality} is nonetheless of interest for it and its proof serve as stepping stones to our main result.



\section{Asymptotics of subsampling} \label{sec:asymp_erm}
  We now introduce, in this section, the two considered subsampling strategies (with and without replacement),
  along with the corresponding estimators, nicknamed \srm in both cases.
  We then prove asymptotic normality of both estimators with general subsampling schemes,
  and discuss in the next section the choice of the latter.

  Let \(\Delta_{n-1} = \{\boldsymbol x \in (0, 1)^n, \sum_{i=1}^n x_i = 1\}\) be the set of probability vectors of size \(n\),
and consider \(\ppi_n \in \Delta_{n-1}\) a \ssp vector, which may ultimately depend on the dataset \(\dn\).
  A subsampling scheme of size \(s_n \le n\) consists in randomly picking points \(Z_i\) in the dataset \(\dn\) with inclusion probabilities \(\pi_{n, i}\).
  More formally, we consider, in the context of this paper, the two following subsampling strategies: one with replacement and another without.

  \paragraph{Subsampling with replacement}
  In subsampling \emph{with} replacement (also called Multinomial subsampling),
  one draws independently \(s_n\) point indices:
  \begin{equation*}
	  I_{n,1}, \dots, I_{n,s_n} | \dn \overset{\iid}{\sim} \sum_{i = 1}^n \pi_{n, i} \delta_{\{ i \}},
  \end{equation*}
  where \(\delta_{\{ i \}}\) is the Dirac measure at \(i\).
  Since the choice is with replacement, each point \(Z_i\) appears \(\gamma_{n, i} = \sum_{j = 1}^{s_n} \mathbbm{1}_{\{ I_{n,j} = i \}}\) times in the subsampled dataset,
  with \(\gamma_{n, i}\) being referred to as the multiplicity or the number of occurrences of \(Z_i\) (which may be null if \(Z_i\) is not picked).
  In this situation
  \begin{equation}\label{equ:gamma_multinomial}
    (\gamma_{n, 1}, \dots, \gamma_{n, n}) | \dn \sim \text{Mul}(s_n, \ppi_n),
  \end{equation}
  (Multinomial distribution with parameters \(s_n\) and \(\ppi_n\))
  and the subsample size (counting multiplicities) is equal to \(s_n\): \(\sum_{i=1}^n \gamma_{n, i} = s_n\) almost surely.

  \paragraph{Subsampling without replacement}
  In subsampling \emph{without} replacement (or Poisson subsampling),
  ones draws independently for each point \(Z_i\):
  \begin{equation}\label{equ:gamma_poisson}
    \gamma_{n, i} | \dn \overset{i.i.d.}{\sim} \text{Ber}(s_n \pi_{n, i}),
  \end{equation}
  (Bernoulli distribution with parameter \(s_n \pi_{n, i}\))
  and decides that it should enter the subsampled dataset when the multiplicity (or inclusion indicator) verifies \(\gamma_{n, i} = 1\).
  This is possible assuming that $\pi_{n, i} \leq s_n^{-1}$ for all $n > 0$ and $i \in \{ 1, \dots, n \}$,
  and in this situation, the subsample size is in average equal to \(s_n\): \(\E{ \sum_{i=1}^n \gamma_{n, i} | \dn} = s_n\).

  In both situation, the subsampling schemes can be summarized as incorporating the \(i\)-th observation \(Z_i\), \(\gamma_{n, i}\) times to the subsampled dataset, where \(\gamma_{n,i} \in \N\) is a random variable such that $\E{ \gamma_{n,i} \middle| \dn} = s_n \pi_{n,i}$.
  The subsamples are then the $Z_i$s such that $\gamma_{n,i} > 0$, for $i \in \{ 1, \dots, n \}$,
  and we define the subsampled and regularized risks as follows:
  \begin{gather}
	  L^*_{s_n}(h) \defeq \frac{1}{n} \sum_{i=1}^{n} \frac{\ell(h(X_i), Y_i)}{s_n \pi_{n,i}} \gamma_{n,i}, \label{eq:subsample_loss} \\
	  \Rsn(h) \defeq L^*_{s_n}(h) + \frac{\lsn}{2} \| h \|^2_\H, \label{eq:subsample_risk}
  \end{gather}
  where \(\lsn > 0\) is a (possibly random) regularization parameter.
  Let also \(\hsn\) be the minimizer of \(\Rsn\).
  By Fermat's rule, one has:
  \begin{equation}
  -\lsn \hsn = \nabla \Lsn(\hsn).
  \end{equation}

  The inverse probability weighting à la Horvitz-Thompson of \Cref{eq:subsample_loss} is a mean to produce unbiased estimations of the empirical and true risks (\Cref{eq:empirical_loss,eq:true_loss}):
   $\E{ \Lsn(h) \middle| \dn} = \Ln(h)$ and $\E{ \Lsn(h)} = \L(h)$.
   This is a critical point in order to get limit results, such as consistency and asymptotic normality.
   However, this is not enough.
   First, the regularization parameter \(\lsn\) has to converge to \(\lz\) (for consistency),
   and with a rate at least \(1/\sqrt{s_n}\), for asymptotic normality.
   This is formalized by \Cref{hyp:lsn_speed} below.

   \begin{primeassumption}{\ref{hyp:ln_speed}} \label{hyp:lsn_speed}
	  The (random) regularization parameter sequence $(\lambda^*_{s_n})_{n>0}$ verifies that $\lsn > 0$ almost surely for all $n > 0$ and
	  \begin{equation*}
		  \sqrt{s_n} (\lsn - \lz)	 \pconv 0.
	  \end{equation*}
  \end{primeassumption}

  Next, the \ssps \(\pi_{n, i}\) should not converge too quickly to \(0\), in order for the subsampled risk \eqref{eq:subsample_loss} not to explode.
  This is described in \Cref{hyp:ssp}, which states that \ssps  have to be asymptotically of the order \(1/n\).

  \begin{assumption} \label{hyp:ssp}
	  The probability vector sequence $(\ppi_n)_{n \in \N}$ satisfies
	  \begin{equation*}
		  \E{ \max_{1 \le i \le n} \cb*{\frac{1}{n \pi_{n,i}}}^2 } = O(1).
	  \end{equation*}
  \end{assumption}

\begin{remark}

	The intuition that \ssps \(\pi_{n, i}\) cannot converge to \(0\) faster than \(1/n\) is not surprising,
	and can also be read in previous works.
	As an example, \citet{han_leverage_2025a} leverage an assumption in expectation similar to \Cref{hyp:ssp} but with averaged inverse \ssps instead of their maximum.
	Remark that considering such type of assumption with our sketch of proof would require to go up to the order 8.

	Similarly, \citet{wang2022sampling} require $\max_{1 \le i \le n} \cb*{n \pi_{n,i}}^{-1} = O_\Pr(1)$, which can be obtained from \Cref{hyp:ssp} by Markov's inequality:
	for all $M > 0$,
	\begin{equation*}
		\P{ \max_{1 \le i \le n} \cb*{\frac{1}{n \pi_{n,i}}} > M } \leq \frac{1}{M^2} \E{ \max_{1 \le i \le n} \cb*{\frac{1}{n \pi_{n,i}}}^2 } = \frac{1}{M^2} O(1),
	\end{equation*}
	hence, $\max_{1 \le i \le n} \cb*{n \pi_{n,i}}^{-1} = O_\Pr(1)$ and more generally $\max_{1 \le i \le n} \cb*{n \pi_{n,i}}^{-p} = O_\Pr(1)$ for all positive integer $p$.
\end{remark}

The next result states \srm consistency (which can actually be obtained without a rate of convergence for \(\lsn\)) in both subsampling schemes, as \(n \to \infty\) and \(s_n \to \infty\).

\begin{theorem}[Consistency of $\hsn$] \label{thm:srm_consistency_core}
	Under \Cref{hyp:kernel,hyp:loss,hyp:distribution,hyp:ssp},
	and \(\lsn \pconv \lz\) (or \Cref{hyp:lsn_speed}),
	one has
	\begin{equation*}
		\| \hsn - \hz \|_\H \pconv 0,
	\end{equation*}
	whether $\hsn$ is the Multinomial (\Cref{equ:gamma_multinomial}) or the Poisson (\Cref{equ:gamma_poisson}) subsampling estimator.
\end{theorem}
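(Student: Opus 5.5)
The approach is to exploit the strong convexity of the regularized risks and compare first-order conditions. This reduces consistency to convergence of a single Hilbert-valued random element. Since \(\hsn\) minimizes \(\Rsn\), which is \(\lsn\)-strongly convex, and \(\hz\) satisfies \(\nabla\L(\hz) + \lz\hz = 0\), the monotonicity of \(\nabla\Lsn\) (a positive combination of convex losses) gives
\begin{equation*}
\lsn \|\hsn - \hz\|_\H^2 \le \dprod{\nabla\Rsn(\hsn) - \nabla\Rsn(\hz)}{\hsn - \hz} = -\dprod{\nabla\Rsn(\hz)}{\hsn - \hz}.
\end{equation*}
Hence
\begin{equation*}
\|\hsn - \hz\|_\H \le \frac{1}{\lsn}\|\nabla\Rsn(\hz)\|_\H \le \frac{1}{\lsn}\Big(\|\nabla\Lsn(\hz) - \nabla\L(\hz)\|_\H + |\lsn - \lz|\,\|\hz\|_\H\Big).
\end{equation*}
Because \(\lsn \pconv \lz > 0\), the factor \(1/\lsn\) is \(O_\Pr(1)\) and the second term vanishes in probability. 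It therefore suffices to show \(\|\nabla\Lsn(\hz) - \nabla\L(\hz)\|_\H \pconv 0\). This argument uses only the value \(\hz\), so no uniform-in-\(h\) control is needed, and it is identical for both subsampling schemes up to the variance computation.

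I would split the remaining quantity as \(\nabla\Lsn(\hz) - \nabla\L(\hz) = (\nabla\Lsn(\hz) - \nabla\Ln(\hz)) + (\nabla\Ln(\hz) - \nabla\L(\hz))\). The second term is an average of \iid centered \(\H\)-valued variables \(\ell'(\hz(X_i),Y_i)K_{X_i} - \nabla\L(\hz)\). These have finite second moment, since \(\|K_X\|_\H \le \kappa\) and \(|\ell'(\hz(X),Y)| \le |\ell'(0,Y)| + \phi_B(Y)B\) with \(B = \kappa\|\hz\|_\H\) (Assumptions \ref{hyp:kernel}, \ref{hyp:loss}, \ref{hyp:distribution}). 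Its squared norm therefore has expectation \(O(1/n)\), and it tends to zero in probability by Chebyshev.

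For the first term, write \(g_i = \ell'(\hz(X_i),Y_i)K_{X_i}\). The difference is \(\frac1n\sum_i g_i(\gamma_{n,i}/(s_n\pi_{n,i}) - 1)\), which is conditionally centered given \(\dn\). I would compute its conditional second moment. In the Poisson case the \(\gamma_{n,i}\) are conditionally independent, and the moment equals \(\frac{1}{n^2}\sum_i \|g_i\|^2 \frac{1 - s_n\pi_{n,i}}{s_n\pi_{n,i}} \le \frac{1}{s_n}\max_i\frac{1}{n\pi_{n,i}}\cdot\frac1n\sum_i\|g_i\|^2\). In the Multinomial case, write the sum as \(\frac{1}{s_n}\sum_{j=1}^{s_n}(g_{I_{n,j}}/(n\pi_{n,I_{n,j}}) - \nabla\Ln(\hz))\), a sum of conditionally \iid centered terms. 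This gives the same bound up to the subtracted mean.

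The main obstacle is converting these conditional bounds into unconditional convergence, since \(\ppi_n\) depends on \(\dn\). I would avoid taking expectations of the product and instead argue in probability. \(\max_i (n\pi_{n,i})^{-1} = O_\Pr(1)\) follows by Markov from Assumption \ref{hyp:ssp}, as in the Remark, and \(\frac1n\sum_i\|g_i\|^2 = O_\Pr(1)\) by the law of large numbers. So the conditional second moment is \(O_\Pr(1/s_n)\). A conditional Chebyshev inequality, \(\Pr(\|\cdot\| > \epsilon \mid \dn) \le \min(1, \epsilon^{-2} V_n)\), followed by dominated convergence, then yields convergence to zero in probability as \(s_n \to \infty\). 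Alternatively, Cauchy–Schwarz with Assumption \ref{hyp:ssp} and the fourth moments of Assumption \ref{hyp:distribution} gives an \(O(1/s_n)\) bound in expectation directly. Combining the three pieces proves the claim.
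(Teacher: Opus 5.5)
Your proof is correct. Its skeleton matches the paper's: a convexity bound at $\hz$ reduces everything to $\|\nabla\Lsn(\hz)-\nabla\L(\hz)\|_\H\pconv 0$, and a law of large numbers for the Horvitz--Thompson gradient finishes. The reduction differs only cosmetically. The paper reruns the proof of \Cref{thm:erm_consistency}, using the convexity inequality on function values, and gets the constant $2/\lsn$. Your strong-monotonicity argument gives the same bound with $1/\lsn$.

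The real difference is in the law of large numbers. The paper invokes \Cref{prop:wlln}, which bounds the \emph{unconditional} second moment $\E{\|\nabla\Lsn(\hz)-\nabla\L(\hz)\|_\H^2}$ by $O(s_n^{-1})+O(n^{-1})$. To take the expectation of $\frac{1}{s_n}\max_i(n\pi_{n,i})^{-1}\cdot\frac1n\sum_i\|g_i\|_\H^2$, it uses Cauchy--Schwarz. That step is why it needs the $L^2$ form of \Cref{hyp:ssp} and fourth moments of $\|\ell'(\hz(X),Y)K_X\|_\H$; it is exactly your ``alternative'' route.

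Your primary route instead conditions on $\dn$ and combines conditional Chebyshev with bounded convergence. It therefore needs only $\max_i(n\pi_{n,i})^{-1}=O_\Pr(1)$ and $\E{\|\ell'(\hz(X),Y)K_X\|_\H^2}<\infty$. This is the weaker, in-probability condition discussed in the remark following \Cref{hyp:ssp}. Your argument is thus slightly more general than the paper's.

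What the paper gains is a single reusable Hilbert-valued statement, \Cref{prop:wlln}, which also yields the rate $O_\Pr(s_n^{-1/2})$. That rate is applied repeatedly later, e.g.\ to the Hessian and $\phi(Y)$ terms in \Cref{lem:srm_normality_2}. Your argument gives that rate as well: since $s_nV_n=O_\Pr(1)$, the same conditional Chebyshev step shows $\sqrt{s_n}\|\nabla\Lsn(\hz)-\nabla\Ln(\hz)\|_\H=O_\Pr(1)$. Both routes need $s_n\to\infty$.
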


We will show in the next theorems that the \srm \(\hsn\) is asymptotically normal to the target \(\hz\).
Obviously, it is expected that the rate of convergence of \(\hsn\) as well as its asymptotic covariance operator are different from that of the \erm \(\hn\).
In order to reveal the alteration of the asymptotic covariance operator, we consider inclusion probabilities, which depend on the observation of interest as \(\pi_{n, i} \propto \psi(Z_i)\), where \(\psi\) is a positive absolute interest (or importance) function.
More formally, let $\psi : \Z \rightarrow \R^*_+$ be a deterministic map independent from \(n\).
Then, for all $n > 0$, the \ssp vector $\ppi_n$ reads
\begin{equation} \label{eq:ssp}
	\pi_{n, i} = \frac{\psi(Z_i)}{\sum_{j = 1}^n \psi(Z_j)}, \qquad \forall i \in \{ 1, \dots, n \}.
\end{equation}
If this writing seems restrictive (since, except for the normalization, each \ssp \(\pi_{n, i}\) depends only on \(Z_i\)), we shall remark that it is consistent with optimal \ssps found by \citet{wang2022sampling} for parametric models, without considering beforehand a particular form for the vector \(\ppi_n\).
It is however required in order to understand how the asymptotic covariance operator is affected by the subsampling procedure.
Now, \Cref{hyp:nonexplosive_cov} provides a condition on \(\psi\) for the asymptotic covariance operator not to explode.
Equipped with it, \Cref{thm:srm_normality_R,thm:srm_normality_P} state asymptotic normality of the \srm \(\hsn\).

\begin{assumption} \label{hyp:nonexplosive_cov}
  The interest function \(\psi\) is such that
  \[
    \E{\psi(Z)} < \infty
    \quad \text{and} \quad
    \E{ \psi^{-1}(Z) \ell^\prime(\hz(X), Y)^2 k(X, X) } < \infty.
  \]
\end{assumption}

\begin{theorem}[Asymptotic normality of Multinomial subsampling (\Cref{equ:gamma_multinomial})] \label{thm:srm_normality_R}
	Under \Cref{hyp:kernel,hyp:loss,hyp:distribution,hyp:lsn_speed,hyp:ssp,hyp:nonexplosive_cov},
	if one furthermore assumes that $s_n / n \underset{n \rightarrow \infty}{\longrightarrow} c \geq 0$,
	then
	\begin{equation} \label{eq:hsn_asymp_normality_R}
		\sqrt{s_n} \nabla^2\risk(\hz) (\hsn - \hz) \dconv \GP(0, \Sigma^*_\mathrm{M}(\psi) + c \Sigma),
	\end{equation}
	where
	\begin{equation*}
		\Sigma^*_\mathrm{M}(\psi) \defeq \E{ \frac{\E{ \psi(Z) }}{\psi(Z)} \ell^\prime(\hz(X), Y)^2 K_X \otimes K_X } - \nabla\L(\hz) \otimes \nabla\L(\hz).
	\end{equation*}
\end{theorem}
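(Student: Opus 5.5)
We follow the scheme of the proof of \Cref{thm:erm_normality}: linearize the stationarity condition, reduce to a CLT for the subsampled gradient at \(\hz\), and conclude with the Cramér--Wold-type argument for asymptotically finite-dimensional sequences. Since \(-\lsn\hsn = \nabla\Lsn(\hsn)\) and \(-\lz\hz = \nabla\L(\hz)\), a first-order expansion of \(h\mapsto\nabla\Lsn(h)\) around \(\hz\) (integral form of the mean value theorem, valid thanks to \Cref{hyp:loss}) gives
\[
\Big(\textstyle\int_0^1 \nabla^2\Lsn(\hz+t(\hsn-\hz))\,\d t + \lsn\,\mathrm{id}_\H\Big)(\hsn-\hz) = -\nabla\Lsn(\hz) - \lsn\hz .
\]
The right-hand side equals \(-(\nabla\Lsn(\hz)-\nabla\L(\hz)) - (\lsn-\lz)\hz\), and the second term is \(o_\Pr(s_n^{-1/2})\) by \Cref{hyp:lsn_speed}. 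For the left-hand operator, consistency (\Cref{thm:srm_consistency_core}) together with the bounded kernel makes \(\sup_x|\hsn(x)-\hz(x)|\le\kappa\|\hsn-\hz\|_\H\to0\). The Lipschitz property of \(\ell''\) then gives, in operator norm,
\[
\Big\|\textstyle\int_0^1\nabla^2\Lsn(\cdot)\,\d t-\nabla^2\Lsn(\hz)\Big\| \le \kappa^3\|\hsn-\hz\|_\H\,\frac1n\sum_i\frac{\gamma_{n,i}}{s_n\pi_{n,i}}\phi_B(Y_i).
\]
This is \(o_\Pr(1)\), because \(\E{\gamma_{n,i}|\dn}=s_n\pi_{n,i}\) makes the last average bounded in probability. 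Next, \(\nabla^2\Lsn(\hz)-\nabla^2\L(\hz)\to0\) in probability in Hilbert--Schmidt norm. To see this, split it as \((\nabla^2\Lsn-\nabla^2\Ln)+(\nabla^2\Ln-\nabla^2\L)\). The second term is the usual LLN in the Hilbert space of HS operators. The first is controlled by its conditional second moment, which is at most \(\frac{\kappa^4}{s_n}\max_i(n\pi_{n,i})^{-1}\frac1n\sum_i\ell''(\hz(X_i),Y_i)^2\), and this is \(O_\Pr(1/s_n)\) by \Cref{hyp:ssp,hyp:distribution}. Since \(\nabla^2\risk(\hz)\) is invertible, we get \(\sqrt{s_n}\nabla^2\risk(\hz)(\hsn-\hz) = -\sqrt{s_n}(\nabla\Lsn(\hz)-\nabla\L(\hz)) + o_\Pr(1)\), provided the leading term is tight. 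Tightness follows from the CLT below.

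The core step is a CLT for \(\sqrt{s_n}(\nabla\Lsn(\hz)-\nabla\L(\hz))\), which we decompose as
\[
\sqrt{s_n}\big(\nabla\Lsn(\hz)-\nabla\Ln(\hz)\big) + \sqrt{s_n/n}\,\sqrt n\big(\nabla\Ln(\hz)-\nabla\L(\hz)\big).
\]
The second piece converges to \(\sqrt c\,\GP(0,\Sigma)\) by the Hilbert-space CLT already used for \Cref{thm:erm_normality}. For the first piece, write it as a sum over draws: \(\frac{1}{\sqrt{s_n}}\sum_{j=1}^{s_n}\xi_{n,j}\) with
\[
\xi_{n,j}=\frac{\ell'(\hz(X_{I_{n,j}}),Y_{I_{n,j}})K_{X_{I_{n,j}}}}{n\pi_{n,I_{n,j}}}-\nabla\Ln(\hz).
\]
Conditionally on \(\dn\), these are i.i.d. and centred. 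We treat this triangular array as a martingale difference array, with filtration \(\Fnj=\sigma(\dn,I_{n,1},\dots,I_{n,j})\), and apply a Hilbert-valued martingale CLT (\citealp{hall1980martingale}). This is done through projections \(\dprod{\cdot}{h}\) combined with asymptotic finite-dimensionality. It requires two things. The first is convergence of the conditional covariance. Using \eqref{eq:ssp}, it equals \(\frac1n\sum_i\frac{\frac1n\sum_k\psi(Z_k)}{\psi(Z_i)}\ell'^2K_{X_i}\otimes K_{X_i}-\nabla\Ln\otimes\nabla\Ln\), which converges to \(\Sigma^*_\mathrm{M}(\psi)\) by the LLN and \Cref{hyp:nonexplosive_cov}. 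The second is a conditional Lindeberg condition. We obtain it from a Lyapunov-type bound on the fourth moments, \(\frac{1}{s_n}\E{\|\xi_{n,1}\|^4|\dn}\le\frac{C}{s_n}\max_i(n\pi_{n,i})^{-3}\frac1n\sum_i\|\ell'K_{X_i}\|^4\), which is \(o_\Pr(1)\) by \Cref{hyp:ssp,hyp:distribution}. Because the first piece is asymptotically Gaussian conditionally on \(\dn\), with a deterministic limit covariance, it is asymptotically independent of the second piece. Combining the two via characteristic functions, in the style of a stable/nested CLT, gives the sum limit \(\GP(0,\Sigma^*_\mathrm{M}(\psi)+c\Sigma)\).

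The main obstacle is making the functional CLT for the first piece rigorous. Finite-dimensional convergence is routine. What needs care is asymptotic finite-dimensionality: for an orthonormal basis \((e_j)\), we need \(\lim_m\limsup_n\Pr\big(\sum_{j>m}\dprod{\sqrt{s_n}(\nabla\Lsn-\nabla\Ln)}{e_j}^2>\varepsilon\big)=0\). I would bound the conditional expectation of this tail by \(\sum_{j>m}\dprod{\hat\Sigma_n e_j}{e_j}\), where \(\hat\Sigma_n\) is the empirical conditional covariance. Its expectation is dominated by \(\E{\max_i(n\pi_{n,i})^{-1}\cdot\frac1n\sum_i\ell'^2\dprod{K_{X_i}}{e_j}^2}\), which we control via Cauchy--Schwarz, \Cref{hyp:ssp}, and the summability \(\sum_j\dprod{K_x}{e_j}^2=k(x,x)\le\kappa^2\) together with dominated convergence. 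A second delicate point is combining the conditional and unconditional limits when \(c>0\), which is why we work conditionally on \(\dn\) throughout.
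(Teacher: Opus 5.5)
Your proposal is correct and follows essentially the paper's route. Both arguments linearize through the integrated Hessian and split $\sqrt{s_n}(\nabla\Lsn(\hz)-\nabla\L(\hz))$ into a subsampling fluctuation and a $\sqrt{s_n/n}$-scaled full-sample fluctuation. Both then apply a conditional (given $\dn$) Lindeberg--Feller CLT to the multinomial draws, with a fourth-moment Lyapunov bound and \Cref{hyp:ssp}, obtain asymptotic finite-dimensionality via Cauchy--Schwarz, and recombine through conditional characteristic functions as in \Cref{lem:srm_normality_final}.

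The one real difference is that you control the Hessian remainder in operator/Hilbert--Schmidt norm and take $\sqrt{s_n}$-tightness from the CLT rather than from \Cref{prop:wlln}. This is a slightly stronger and cleaner version of \Cref{lem:srm_normality_2}, whose second part the paper only establishes direction by direction.
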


\begin{theorem}[Asymptotic normality of Poisson subsampling (\Cref{equ:gamma_poisson})] \label{thm:srm_normality_P}
	Under \Cref{hyp:kernel,hyp:loss,hyp:distribution,hyp:lsn_speed,hyp:ssp,hyp:nonexplosive_cov},
	if one furthermore assumes that $s_n / n \underset{n \rightarrow \infty}{\longrightarrow} c \in [0, 1]$, $\pi_{n,i} \leq 1 / s_n$ for all $n > 0$ and $i \in \{ 1, \dots, n \}$,
	then
	\begin{equation} \label{eq:poisson_hsn_asymp_normality}
		\sqrt{s_n} \nabla^2\risk(\hz) (\hsn - \hz) \dconv \GP(0, \Sigma^*_{\mathrm{P},c}(\psi) + c\Sigma),
	\end{equation}
	where
	\begin{equation*}
		\Sigma^*_{\mathrm{P},c}(\psi) \defeq \E{ \frac{\E{ \psi(Z) }}{\psi(Z)} \ell^\prime(\hz(X), Y)^2 K_X \otimes K_X } - c\E{\ell^\prime(\hz(X), Y)^2 K_X \otimes K_X }.
	\end{equation*}
\end{theorem}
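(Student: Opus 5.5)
We follow the scheme already used for \Cref{thm:erm_normality}, with the i.i.d.\ central limit theorem replaced by a martingale central limit theorem \citep{hall1980martingale}.

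\textbf{Linearization.} Write the stationarity conditions \(\nabla \Rsn(\hsn)=0\) and \(\nabla\risk(\hz)=0\). A first-order Taylor expansion of \(\nabla \Lsn\) around \(\hz\) gives
\[
\nabla^2\risk(\hz)(\hsn-\hz) = -\nabla\Lsn(\hz) - \lz\hz + R_n ,
\]
with
\[
R_n = -(\lsn-\lz)\hsn - \bigl(\nabla^2 \Lsn(\tilde h_n)-\nabla^2\L(\hz)\bigr)(\hsn-\hz).
\]
Here \(\nabla^2\Lsn(\tilde h_n)\) stands for an integrated Hessian along the segment \([\hz,\hsn]\).

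\textbf{Control of \(R_n\).} The first term of \(R_n\) is \(o_\Pr(s_n^{-1/2})\) by \Cref{hyp:lsn_speed} and the boundedness of \(\hsn\). For the second term, we first use \Cref{thm:srm_consistency_core}. Together with \Cref{hyp:kernel}, it bounds \(\sup_x|h(x)|\) along the segment by some \(B\). Then \Cref{hyp:loss} gives
\[
\|\nabla^2\Lsn(\tilde h_n)-\nabla^2\Lsn(\hz)\|_{op}\le \kappa^3\|\hsn-\hz\|_\H\,\frac1n\sum_i \frac{\gamma_{n,i}\phi_B(Y_i)}{s_n\pi_{n,i}}.
\]
The average is \(O_\Pr(1)\): take its conditional expectation given \(\dn\), then use \Cref{hyp:distribution,hyp:ssp}. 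It remains to show \(\nabla^2\Lsn(\hz)\to\nabla^2\L(\hz)\) in operator norm. Split it as \(\nabla^2\Lsn-\nabla^2\Ln\) plus \(\nabla^2\Ln-\nabla^2\L\).
\begin{itemize}
\item The second difference is handled by the Hilbert–Schmidt law of large numbers.
\item The first has conditional mean zero. Its conditional second moment is bounded by \(s_n^{-1}\frac1n\sum_i \ell''(\hz(X_i),Y_i)^2\kappa^4/(n\pi_{n,i})\), which is \(O_\Pr(1/s_n)\) by Cauchy–Schwarz with \Cref{hyp:ssp} and the fourth moments. This is where order 4 is used.
\end{itemize}
Combining these with a preliminary \(\sqrt{s_n}\)-tightness of \(\hsn-\hz\) yields \(\sqrt{s_n}R_n=o_\Pr(1)\). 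That tightness follows from the same display, since \(\nabla^2\risk(\hz)\) is invertible.

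\textbf{Leading term.} Since \(\nabla \risk(\hz)=0\), we have \(-\lz\hz=\nabla\L(\hz)\). Decompose
\[
\sqrt{s_n}\bigl(\nabla\Lsn(\hz)-\nabla\L(\hz)\bigr) = \sqrt{s_n}\bigl(\nabla\Lsn(\hz)-\nabla\Ln(\hz)\bigr)+\sqrt{s_n/n}\,\sqrt n\bigl(\nabla\Ln(\hz)-\nabla\L(\hz)\bigr).
\]
By the proof of \Cref{thm:erm_normality}, the second term tends to \(\GP(0,c\Sigma)\). The first term is conditionally centered given \(\dn\), and the two components are asymptotically independent. Using the Cramér–Wold-type device plus asymptotic finite-dimensionality, it suffices to treat \(\langle\cdot,g\rangle_\H\) for fixed \(g\). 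Set
\[
\xi_{n,i}=\sqrt{s_n}\,(\gamma_{n,i}-s_n\pi_{n,i})\,a_i/(ns_n\pi_{n,i}),\qquad a_i=\ell'(\hz(X_i),Y_i)g(X_i).
\]
These are conditionally independent given \(\dn\) and centered, with conditional variances \(s_n\pi_{n,i}(1-s_n\pi_{n,i})\). Conditional variance summation gives
\[
\sum_i \frac{a_i^2(1-s_n\pi_{n,i})}{n^2\pi_{n,i}}=\frac1n\sum_i \frac{\frac1n\sum_j\psi(Z_j)}{\psi(Z_i)}a_i^2-\frac{s_n}{n}\cdot\frac1n\sum_i a_i^2 .
\]
By the law of large numbers and \Cref{hyp:nonexplosive_cov}, this converges to \(\langle(\Sigma^*_{\mathrm P,c}(\psi)+c\,\E{\ell'^2K_X\otimes K_X})g,g\rangle_\H-c\,\E{\ell'^2g(X)^2}\), which equals \(\langle\Sigma^*_{\mathrm P,c}(\psi) g,g\rangle_\H\). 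Adding the \(c\Sigma\) part then gives the stated covariance. Here the \(-\nabla\L\otimes\nabla\L\) term is absent from \(\Sigma^*_{\mathrm P,c}\), in agreement with the statement.

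\textbf{Main obstacle.} The key step is the conditional Lindeberg condition, i.e.\ a conditional Lyapunov bound on \(\sum_i \E{|\xi_{n,i}|^4\mid\dn}\). This is where \(\pi_{n,i}\le 1/s_n\), \Cref{hyp:ssp} and the fourth moments of \Cref{hyp:distribution} are needed, since \(\max_i (n\pi_{n,i})^{-1}=O_\Pr(1)\). The other delicate point is the joint convergence of the two terms. We handle it by conditioning on \(\dn\) and using characteristic functions, via a martingale array ordered first by data and then by subsampling indicators.
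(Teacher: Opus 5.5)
Your proposal is correct and follows essentially the paper's route. That route has four parts:
\begin{itemize}
\item linearization with the integrated Hessian;
\item the split of $\sqrt{s_n}\,\nabla\Rsn(\hz)$ into the subsampling part $\sqrt{s_n}(\nabla\Lsn-\nabla\Ln)(\hz)$, the data part $\sqrt{s_n/n}\,\sqrt{n}(\nabla\Ln-\nabla\L)(\hz)$ and the negligible $\sqrt{s_n}(\lsn-\lz)\hz$;
\item a conditional Lindeberg--Feller argument for the same Bernoulli array, with the same conditional variance and fourth-moment Lyapunov bound (\Cref{lem:poisson_srm_normality_1});
\item joint convergence by conditioning on $\dn$ inside the characteristic function (\Cref{lem:srm_normality_final}).
\end{itemize}

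There are two minor differences. Your operator-norm control of the Hessian (Hilbert--Schmidt law of large numbers plus the conditional second-moment bound) gives the remainder in norm, not direction by direction as in \Cref{lem:srm_normality_2}, which is a slight strengthening. On the other hand, you only invoke the asymptotic finite-dimensionality of $\sqrt{s_n}(\nabla\Lsn-\nabla\Ln)(\hz)$ without proving it. That step still needs the tail estimate of \Cref{lem:srm_normality_1_bis}: the same conditional covariance bound, \Cref{hyp:ssp}, and dominated convergence applied to $\sum_{j>J}\ell^\prime(\hz(X),Y)^2\dprod{K_X}{e_j}^2$.
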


\begin{remark}
	If \(c > 0\), \Cref{eq:hsn_asymp_normality_R,eq:poisson_hsn_asymp_normality} can be written:
	\begin{gather*}
		\sqrt{n} \nabla^2\risk(\hz) (\hsn - \hz) \dconv \GP(0, c^{-1} \Sigma^*_\mathrm{M}(\psi) + \Sigma)
		\quad \text{(\Cref{thm:srm_normality_R})}\\
		\sqrt{n} \nabla^2\risk(\hz) (\hsn - \hz) \dconv \GP(0, c^{-1} \Sigma^*_{\mathrm{P},c}(\psi) + \Sigma)
		\quad \text{(\Cref{thm:srm_normality_P})},
	\end{gather*}
	showing that the \srm \(\hsn\) has the same \(1/\sqrt n\)-rate of convergence to \(\hz\) than the \erm \(\hn\),
	but \emph{increased} covariance operators.
	In this situation, as expected, the less data is used (\(c \ll 1\)), the \emph{greater} the covariance operator.
\end{remark}

As a byproduct, \Cref{thm:srm_normality_R,thm:srm_normality_P} also give the asymptotic normality of the \srm, denoted \(\hu\) below, when subsampling with uniform \ssps:
for all \(n > 0\) and all \(i \in \{1, \dots, n\}\), \(\pi_{n, i} = 1/n\).
Let us remark that, in this case, \Cref{hyp:ssp,hyp:nonexplosive_cov} are directly granted.

\begin{corollary}[Asymptotic normality of uniform Multinomial subsampling (\Cref{equ:gamma_multinomial})] \label{cor:uniform_ssp_R}
	Under \Cref{hyp:kernel,hyp:loss,hyp:distribution,hyp:lsn_speed},
	if one furthermore assumes that $s_n / n \underset{n \rightarrow \infty}{\longrightarrow} c \geq 0$,
	then the uniform \srm \(\hu\) verifies:
	\begin{equation*}
		\sqrt{s_n} \nabla^2\risk(\hz) (\hu - \hz) \dconv \GP(0, (1+c) \Sigma).
	\end{equation*}
\end{corollary}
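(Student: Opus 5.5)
The corollary follows from \Cref{thm:srm_normality_R} by specialising to uniform probabilities. Take the constant interest function \(\psi \equiv 1\). Then \eqref{eq:ssp} gives exactly \(\pi_{n,i} = 1/n\) for all \(n>0\) and \(i\in\{1,\dots,n\}\). So \(\hu\) is the Multinomial \srm \(\hsn\) built from this \(\psi\).

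The plan is to check the extra hypotheses of \Cref{thm:srm_normality_R} that are not listed in the corollary, namely \Cref{hyp:ssp,hyp:nonexplosive_cov}.

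\textbf{\Cref{hyp:ssp}.} Since \(n\pi_{n,i}=1\), we have \(\max_i (n\pi_{n,i})^{-1}=1\) deterministically. Its second moment is therefore \(1=O(1)\).

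\textbf{\Cref{hyp:nonexplosive_cov}.} First, \(\E{\psi(Z)}=1<\infty\). Second, \(\E{\psi^{-1}(Z)\ell'(\hz(X),Y)^2k(X,X)} \le \kappa^2\,\E{\ell'(\hz(X),Y)^2}\) by \Cref{hyp:kernel}. It remains to bound \(\E{\ell'(\hz(X),Y)^2}\).
\begin{itemize}
\item By \Cref{hyp:kernel} and the reproducing property, \(|\hz(X)|\le \kappa\|\hz\|_\H =: B\).
\item By \Cref{hyp:loss}, \(|\ell'(\hz(X),Y)|\le |\ell'(0,Y)|+\phi_B(Y)B\).
\item By \Cref{hyp:distribution}, both terms on the right have finite fourth moments, hence finite second moments.
\end{itemize}
So the second condition of \Cref{hyp:nonexplosive_cov} holds as well.

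\Cref{thm:srm_normality_R} now applies and gives \(\sqrt{s_n}\nabla^2\risk(\hz)(\hu-\hz)\dconv\GP(0,\Sigma^*_\mathrm{M}(1)+c\Sigma)\).

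The last step identifies the covariance. With \(\psi\equiv 1\), the ratio \(\E{\psi(Z)}/\psi(Z)\) equals \(1\). Hence
\[
\Sigma^*_\mathrm{M}(1)=\E{\ell'(\hz(X),Y)^2K_X\otimes K_X}-\nabla\L(\hz)\otimes\nabla\L(\hz)=\Sigma,
\]
which is exactly the operator of \Cref{thm:erm_normality}. The limit covariance is therefore \((1+c)\Sigma\), as claimed.

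The only step needing any care is the moment bound for \(\ell'(\hz(X),Y)\). It rests on \(\hz\) being a fixed element of \(\H\) and hence uniformly bounded by \(\kappa\|\hz\|_\H\). This is also the fact the paper already notes after \Cref{hyp:distribution}, so no real difficulty is expected.
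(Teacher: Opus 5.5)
Your proposal is correct and follows the paper's proof: you specialise \Cref{thm:srm_normality_R} to \(\psi \equiv 1\), check \Cref{hyp:ssp,hyp:nonexplosive_cov}, and observe that the resulting \(\Sigma^*_\mathrm{M}\) equals \(\Sigma\). You give more detail than the paper on the moment bound behind \Cref{hyp:nonexplosive_cov}; one small nit is that \Cref{hyp:loss} is stated only for \(B>0\), so take \(B = \kappa(\|\hz\|_\H + 1)\), as in the proof of \Cref{lem:leibniz}, to also cover the case \(\hz = 0\).
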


\begin{corollary}[Asymptotic normality of uniform Poisson subsampling (\Cref{equ:gamma_poisson})] \label{cor:uniform_ssp_P}
	Under \Cref{hyp:kernel,hyp:loss,hyp:distribution,hyp:lsn_speed},
	if one furthermore assumes that $s_n / n \conv c \in [0, 1]$,
	then the uniform \srm \(\hu\) verifies:
	\begin{equation*}
		\sqrt{s_n} \nabla^2\risk(\hz) (\hu - \hz) \dconv \GP(0, \Sigma^\mathrm{U}_{\mathrm{P}, c}),
	\end{equation*}
	where $\Sigma^\mathrm{U}_{\mathrm{P}, c} \defeq \E{\ell^\prime(\hz(X), Y)^2 K_X \otimes K_X } - c\nabla\L(\hz) \otimes \nabla\L(\hz)$.
\end{corollary}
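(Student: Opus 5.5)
This is a direct specialization of \Cref{thm:srm_normality_P} to the uniform scheme \(\psi \equiv 1\). With this choice, \Cref{eq:ssp} gives \(\pi_{n,i} = 1/n\) for every \(n\) and \(i\). The plan has two parts. First we check the extra hypotheses of \Cref{thm:srm_normality_P} that are not in the corollary. Then we compute the limiting covariance operator and show that it reduces to \(\Sigma^\mathrm{U}_{\mathrm{P}, c}\).

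For the hypotheses, take \(\psi \equiv 1\).
\begin{itemize}
\item \Cref{hyp:ssp} holds trivially, since \(\max_i \{ 1/(n\pi_{n,i}) \}^2 = 1\) deterministically.
\item For \Cref{hyp:nonexplosive_cov}, \(\E{\psi(Z)} = 1 < \infty\). The second condition becomes \(\E{\ell^\prime(\hz(X), Y)^2 k(X,X)} \le \kappa^2 \E{\ell^\prime(\hz(X),Y)^2}\). This is finite because \(|\hz(X)| \le \kappa \|\hz\|_\H =: B\) by \Cref{hyp:kernel}. Then \Cref{hyp:loss} gives \(|\ell^\prime(\hz(X),Y)| \le |\ell^\prime(0,Y)| + \phi_B(Y) B\), which has finite fourth moments by \Cref{hyp:distribution}.
\item The Poisson feasibility constraint \(\pi_{n,i} \le 1/s_n\) reads \(s_n \le n\). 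This is part of the standing definition of a subsampling scheme of size \(s_n \le n\), and it is consistent with \(c \in [0,1]\).
\end{itemize}
\Cref{thm:srm_normality_P} therefore applies.

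For the covariance, \(\E{\psi(Z)}/\psi(Z) = 1\), so
\[
\Sigma^*_{\mathrm{P},c}(1) = (1-c)\,\E{\ell^\prime(\hz(X), Y)^2 K_X \otimes K_X }.
\]
Adding \(c\Sigma = c\,\E{\ell^\prime(\hz(X), Y)^2 K_X \otimes K_X } - c\,\nabla\L(\hz)\otimes\nabla\L(\hz)\) cancels the \(c\)-terms in the second moment. The result is exactly \(\E{\ell^\prime(\hz(X), Y)^2 K_X \otimes K_X } - c\,\nabla\L(\hz)\otimes\nabla\L(\hz) = \Sigma^\mathrm{U}_{\mathrm{P},c}\), and the convergence in distribution then follows directly from \Cref{eq:poisson_hsn_asymp_normality}.

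No step is a genuine obstacle. The only point needing care is the moment bound on \(\ell^\prime(\hz(X),Y)\), which rests on \(\hz\) being bounded via the reproducing property. One should also note explicitly that the standing requirement \(s_n \le n\) is what makes the uniform Poisson scheme well-defined.
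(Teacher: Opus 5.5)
Your proposal is correct and takes the same route as the paper: specialize \Cref{thm:srm_normality_P} to \(\psi \equiv 1\), check that the hypotheses hold, and note that \(\Sigma^*_{\mathrm{P},c}(1) + c\Sigma\) collapses to \(\Sigma^\mathrm{U}_{\mathrm{P},c}\). Your explicit checks spell out what the paper asserts in a single line: \Cref{hyp:nonexplosive_cov} via the boundedness of \(\hz\), and the feasibility constraint \(\pi_{n,i} = 1/n \le 1/s_n\), which follows from \(s_n \le n\).
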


\begin{remark}
  For Poisson subsampling, choosing \(c = 1\) implies that, asymptotically, all observations are part of the subsample.
  In this case, \(\hsn\) has convergence rate \(1/\sqrt n\) (see the preceding remark) and asymptotic covariance operator \(\Sigma^\mathrm{U}_{\mathrm{P}, 1} = \Sigma\).
  In other words, the \srm has the same performance as the \erm.
\end{remark}

\section{Variance reduction} \label{sec:optimal}

  Since the role of subsampling is to reduce the dataset size by randomly picking points, it is of interest to choose these few points in order to maximize the \srm estimator precision.
  Once the limit ratio \(c = \lim_{n\to\infty} s_n/n\) has been chosen (see \Cref{thm:srm_normality_R,thm:srm_normality_P}), it is about \emph{minimizing} the asymptotic covariance operator.
  The most common way to do it is to reduce the covariance information to a scalar, and to minimize the latter.
  Several manners of reducing the covariance information exist and are well documented in the literature of optimal design (see for instance \citep{imberg_optimal_2023}).
  In this study, we follow the work by \citet{wang2022sampling} and focus on L-optimality, which consists in computing the trace of the covariance, preconditioned by a well-chosen operator (here \((\nabla^2 R(h_0))^{-1}\), which exists according to \citep[Lemma A.5]{hable2012asymptotic}).
  This choice is consistent with the dependence of the covariance operator to \(\psi\) and is computationally advantageous.

  In our infinite dimensional framework based on \rkhss, the trace of a self-adjoint bounded linear map does not differ much from the finite dimensional case.
  For such an operator $\Lambda : \H \to \H$, it can be defined as $\Tr(\Lambda) \defeq \sum_{j=1}^\infty \dprod{\Lambda e_j}{e_j}$,
  and, in the particular case where \(\Lambda\) is the outer product of two vectors, i.e.\ $\Lambda = f \otimes h$ (with $f, h \in \H$), then $\Tr(\Lambda) = \dprod{f}{h}$
  (see \citep{hsing2015theoretical} for more details).

  Going back to \Cref{thm:srm_normality_P,thm:srm_normality_R}, let \(\Sigma^*(\psi)\) be either \(\Sigma^*_\mathrm{M}(\psi)\) or \(\Sigma^*_{\mathrm{P},c}(\psi)\), depending on the subsampling strategy.
  In both cases, L-optimality boils down to minimizing
  \[
  	\Tr(\Sigma^*(\psi)) = \E{ \frac{\E{ \psi(Z) }}{\psi(Z)} \ell^\prime(\hz(X), Y)^2 k(X, X) } + C,
  \]
  with respect to \(\psi\), where \(C\) is a constant term.
  \Cref{prop:optimal_ssp_R} gives a closed-form expression of such an L-optimal absolute interest map \(\psi\).

  \begin{remark}
  	Since the two subsampling strategies considered here lead to the same convergence rate of the estimator \(\hsn\) (depending on the value of the limit ratio \(c = \lim_{n\to\infty} s_n/n\)),
  	they can be compared to each other based on L-optimality.
	For this purpose, consider a fixed absolute interest function \(\psi\) and let
	\begin{align*}
		\Delta &\defeq \Tr (\Sigma^*_\mathrm{M}(\psi)) - \Tr(\Sigma^*_{\mathrm{P},c}(\psi)) = c \E{\ell^\prime(\hz(X), Y)^2 k(X, X)} - \| \nabla L(\hz) \|_\H^2.
	\end{align*}
	Whenever \(c = 0\) (i.e.\ $s_n = o(n)$), one has \(\Delta < 0\), hence Multinomial subsampling surpasses Poisson subsampling.
	On the contrary, when \(c = 1\), one has \(\Delta = \Tr(\Sigma) > 0\), thus Poisson subsampling is superior to Multinomial subsampling.

	It results that the choice between the two strategies depends on the limit ratio \(c\), which is a quantity defined by the practitioner:
	if \(c\) is small, Multinomial subsampling is preferable to Poisson subsampling, while the converse holds when \(c\) is close to \(1\).
	Let us remark that this conclusion is slightly different from the parametric case studied in \citep{wang2022sampling}, which concludes that both subsampling schemes have the same asymptotically estimation efficiency when \(c = 0\).
	This is due to the presence of a quadratic regularization in our setting, which does not exist in the parametric framework.

  \end{remark}

	\begin{proposition}[L-optimal map \(\psi\)] \label{prop:optimal_ssp_R}
		The minimization problem
		\begin{equation} \label{opb:l-optimality}
			\operatorname{minimize}_{\psi : \Z \rightarrow \R^*_+} \ \Tr\pa*{ \Sigma^*(\psi) },
			\tag{$\mathcal{P}_\psi$}
		\end{equation}
		has a unique solution (up to a multiplicative factor and an \(L^2\) equivalence w.r.t.\ the law of \(Z\)),
		being \(\psi^\star\), defined for all $\bm z = (\bm x, y) \in \Z$, by
		\begin{equation} \label{eq:optimal_psi_R}
			\psi^\star(\bm z) \defeq | \ell^\prime(\hz(\bm x), y) | \sqrt{k(\bm x, \bm x)}.
		\end{equation}
	\end{proposition}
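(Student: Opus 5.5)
The plan is to reduce \eqref{opb:l-optimality} to a scalar variational problem and solve it with the Cauchy--Schwarz inequality. As noted just before the statement, in both subsampling schemes
\[
\Tr(\Sigma^*(\psi)) = \E{\psi(Z)}\,\E{\frac{g(Z)^2}{\psi(Z)}} + C,
\qquad g(\bm z) \defeq |\ell^\prime(\hz(\bm x), y)|\sqrt{k(\bm x, \bm x)},
\]
where \(C\) does not depend on \(\psi\). First I will check this identity. The trace is linear and satisfies \(\Tr(f\otimes f) = \|f\|_\H^2\). Moreover \(\|K_X\|_\H^2 = k(X,X)\) by the reproducing property. I will exchange trace and expectation by monotone convergence, since the summands \(\dprod{(K_X\otimes K_X)e_j}{e_j} = \dprod{K_X}{e_j}^2\) are non-negative. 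The remaining terms are \(-\|\nabla\L(\hz)\|_\H^2\) in the Multinomial case and \(-c\E{\ell^\prime(\hz(X),Y)^2k(X,X)}\) in the Poisson case; both are free of \(\psi\). So the problem amounts to minimizing \(J(\psi) \defeq \E{\psi(Z)}\E{g(Z)^2/\psi(Z)}\) over positive \(\psi\). Note that \(J\) is invariant under \(\psi \mapsto a\psi\) for \(a>0\), which explains the ``up to a multiplicative factor'' clause.

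Second, the lower bound. Write \(g = \sqrt{\psi}\cdot (g/\sqrt{\psi})\) and apply Cauchy--Schwarz in \(L^2(\Pr_Z)\):
\[
\E{g(Z)}^2 \le \E{\psi(Z)}\,\E{\frac{g(Z)^2}{\psi(Z)}} = J(\psi).
\]
The bound \(\E{g(Z)}^2\) does not depend on \(\psi\), and it is finite by \Cref{hyp:kernel,hyp:distribution}. Plugging in \(\psi = g\) gives \(J(g) = \E{g(Z)}\E{g(Z)} = \E{g(Z)}^2\), so the bound is attained and \(\psi^\star = g\) is a minimizer. Both factors are finite, so \(\psi^\star\) satisfies \Cref{hyp:nonexplosive_cov}.

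Third, uniqueness. Equality holds in Cauchy--Schwarz exactly when \(\sqrt{\psi}\) and \(g/\sqrt{\psi}\) are collinear in \(L^2(\Pr_Z)\). This means \(\psi = a g\) almost surely for some \(a>0\), which is the claimed uniqueness up to scaling and \(L^2\)-equivalence.

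The main obstacle is a technicality. The paper requires \(\psi\) to take values in \(\R^*_+\), but \(g\) vanishes wherever \(\ell^\prime(\hz(\bm x),y)=0\), so \(\psi^\star\) is only admissible with the convention \(0/0=0\) on that set. This is consistent with \(\Tr(\Sigma^*(\psi))\) only involving the product \(g^2/\psi\), and it is harmless up to the stated equivalence. If strict positivity must be kept, I would fall back on an approximation argument: consider \(\psi_\varepsilon = g+\varepsilon\), show that \(J(\psi_\varepsilon) \to \E{g(Z)}^2\) by dominated convergence, and conclude that this value is the infimum.
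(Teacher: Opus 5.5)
Your proposal is correct and follows essentially the same route as the paper: both write $\Tr(\Sigma^*(\psi))$ as $\E{\psi(Z)}\,\E{\ell^\prime(\hz(X),Y)^2 k(X,X)/\psi(Z)}$ plus a $\psi$-free constant, bound it below by $\E{\psi^\star(Z)}^2$ via Cauchy--Schwarz, and read off $\psi\propto\psi^\star$ from the equality case. Your additional care goes beyond the paper's proof, which does not address these points: you justify exchanging trace and expectation, you check the Poisson constant, and you note that $\psi^\star$ may vanish. In that last case, when $\Pr(\psi^\star(Z)=0)>0$, the infimum over strictly positive maps is not attained, so the convention $0/0=0$ (or the enlarged class $\psi\ge 0$) is genuinely needed.
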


	If it is clear that, given our technical \Cref{hyp:kernel,hyp:loss,hyp:distribution}, the L-optimal absolute interest function \(\psi^\star\) verifies \Cref{hyp:nonexplosive_cov},
	it is not true neither for \Cref{hyp:ssp}, nor for the basic requirement of Poisson subsampling
	(i.e.\ that all \ssps are bounded by \(1/s_n\)).
	This is discussed in the two forthcoming remarks.

	\begin{remark}
		As one may notice, \Cref{prop:optimal_ssp_R} does not take into account that $\psi$ has to verify \Cref{hyp:ssp} in order to be used in application of \Cref{thm:srm_normality_R,thm:srm_normality_P}.
		Indeed, incorporating \Cref{hyp:ssp} as a constraint into Problem~\eqref{opb:l-optimality} would lead to an intractable problem.
		It results that \(\psi^\star\) may not satisfy \Cref{hyp:ssp}.
		However, in practice, the optimal \ssps computed thanks to \(\psi^\star\) are then adjusted such that \Cref{hyp:ssp} is granted (see \Cref{sec:piloted_estimator}).
	\end{remark}

	\begin{remark}
		Poisson subsampling (\Cref{equ:gamma_poisson}) also requires that, for all \(n > 0\),
		\begin{equation}\label{equ:Poisson_contraints}
			\text{for all } i \in \{1, \dots, n\},
			\quad
			\frac{\psi(Z_i)}{\sum_{j=1}^n \psi(Z_j)} \leq \frac{1}{s_n},
		\end{equation}
		which, again, may not be true for \(\psi^\star\).
		If one would like to consider these inequalities as part of Problem~\eqref{opb:l-optimality},
		it would mean handling an infinite amount of constraints (inequalities have to hold true for all \(n > 0\)).
		Moreover, even for a fixed value of \(n\) (keeping in mind the practical use of the proposed procedure),
		the latter constraints would lead to inconsistent (even lack of) solutions, since a candidate \(\psi\) may always be smoothly altered so as to satisfy pointwise conditions.
		As a result, Conditions~\eqref{equ:Poisson_contraints} are not considered when determining an optimal absolute interest function \(\psi^\star\), yet, as explained in \Cref{sec:piloted_estimator}, optimal \ssps computed with \(\psi^\star\) are then post-processed in order to be valid inclusion probabilities for Poisson subsampling.

		Let us remark that, in the parametric case, \citet{wang2022sampling} follow the same path but minimizing (for a fixed value of \(n\)) the trace of an empirical version of \(\Sigma^*(\psi)\) subject to Conditions~\eqref{equ:Poisson_contraints}.
		They then exhibit optimal \ssps proportional to \(\psi^\star \wedge H\), where \(H\) is a data-dependent threshold.
		This workaround does not fit our framework, since the absolute interest map \(\psi\) is thought deterministic and independent from \(n\).
	\end{remark}

	Let us now investigate some examples of supervised learning settings,
	which illustrate the shape of the L-optimal absolute interest function \(\psi^\star\), revealed in \Cref{prop:optimal_ssp_R}.
	For this purpose and for simplicity, we consider a radial basis function kernel \(k\), implying that for all \(\bm x \in \X\), \(k(\bm x, \bm x)\) is constant.

  \begin{example}[Logistic regression] \label{ex:log_reg}
	  In the classification setting of penalized kernel logistic regression, one has,
	  for $y \in \{ -1, 1 \}$ and $\hat y \in \R$:
	  \begin{equation*}
		  \ell(\hat y, y) = \log (1 + \exp(- y \hat y))
		  \quad \text{and} \quad
		  \ell^\prime(\hat y, y) = -y \sigma(- y \hat y),
	  \end{equation*}
	  where $\sigma : t \mapsto 1 / (1 + e^{-t})$ is the sigmoid function.
	  The optimal \ssps are then proportional to
	  \begin{equation*}
		  | \ell^\prime(\hz(\bm x), y) | = \sigma(-y \hz(\bm x)),
		  \quad (\bm x, y) \in \X \times \Y.
	  \end{equation*}
	  Hence a datapoint \(Z_i = (X_i, Y_i)\) has a high probability of being picked if $Y_i \hz(X_i) \ll 0$,
	  i.e.\ if \(X_i\) is wrongly classified by the true function \(\hz\) (\(\hz(X_i)\) and \(Y_i\) have different signs) and with a high confidence (\(|\hz(X_i)|\) is large).
  \end{example}

  \begin{example}[$L^p$ regression] \label{ex:lp_reg}
	  In the $L^p$ (with $p \geq 1$) kernel ridge regression setting, one has,
	  for \(y, \hat y \in \R\):
	  \begin{equation*}
		  \ell(\hat y, y) = \frac{1}{p} |\hat y - y|^p
		  \quad \text{and} \quad
		  \ell^\prime(\hat y, y) = \sign(\hat y - y) | \hat y - y |^{p-1}.
	  \end{equation*}
	  The optimal \ssps are thus proportional to
	  \begin{equation*}
		  | \ell^\prime(\hz(\bm x), y) | = | \hz(\bm x) - y |^{p-1},
		  \quad (\bm x, y) \in \X \times \Y.
	  \end{equation*}
	  Hence a datapoint \(Z_i = (X_i, Y_i)\) has a high probability of being picked if \(Y_i\) is poorly predicted by the true function \(\hz\) (\(|Y_i - \hz(X_i)| \gg 0\)).
	  This is for instance the case for noisy observations.

	  In addition, let us remark that for high values of \(p\), the \ssps (through \(| \ell^\prime(\hz(\bm x), y) |\)) are squashed to \(0\) when \(\hz(\bm x) \approx y\), thus imitating an \(\epsilon\)-insensitive loss \citep{sangnier_data_2017}, with the nice property of being infinitely differentiable,
	  and producing \ssps of different magnitudes.
  \end{example}

  \begin{example}[Quartic-quadratic Huber] \label{ex:qqh_reg} 
	  As stated above, the $L^p$ loss with a high value of $p$ produces \ssps of different magnitudes, attributing near zero inclusion probabilities to well predicted datapoints.
	  On the converse, it assigns high \ssps to datapoints, the outputs \(y\) of which are far from the predictions \(\hz(\bm x)\).
	  As a result, and as it may be expected by using an \(L^p\) loss, the \ssps are not robust to outliers.

	  To keep the benefit of an almost \(\epsilon\)-insensitive loss (i.e.\ \ssps with different magnitudes) while being more robust to outliers,
	  one may defined a ``Huberized'' \(L^2\) loss (see \citep[Section~10.6]{hastie_elements_2013}),
	  which is quartic around \(0\) and quadratic elsewhere.
	  For this purpose, let $\epsilon > 0$ be a given threshold, and consider
	  \begin{equation*}
		  \ell_\epsilon(\hat y, y) \defeq \begin{cases}
			  \frac{1}{4} (\hat y - y)^4 &\text{if}\ | \hat y - y | \leq \epsilon, \\
			  \frac{3}{2} \epsilon^2 (\hat y - y)^2 - 2 \epsilon^3 | \hat y - y | + \frac{3}{4} \epsilon^4 &\text{otherwise},
		  \end{cases}
	  \end{equation*}
	  which is twice-differentiable, with
	  \begin{equation*}
		  \ell^\prime_\epsilon(\hat y, y) = \begin{cases}
			  (\hat y - y)^3 &\text{if}\ | \hat y - y | \leq \epsilon, \\
			  3 \epsilon^2 (\hat y - y) - 2 \epsilon^3 \sign(\hat y - y) &\text{otherwise}.
		  \end{cases}
	  \end{equation*}

	  In the same manner as for \(L^p\) regression, the optimal \ssps are as large as the output \(y\) is wrongly predicted by \(\hz(\bm x)\),
	  nevertherless, there is a regime shift whether the prediction is at least (or at most) $\epsilon$ far away from the output $y$.
	  This leads to \ssps of different magnitudes, which are more robust to outliers than those obtained with an \(L^p\) loss.
  \end{example}

\section{Piloted subsampling estimator} \label{sec:piloted_estimator}

	We address in this section the practical side of subsampling with the L-optimal absolute interest function \(\psi^\star\), exhibited in \Cref{prop:optimal_ssp_R}.
	First, as it can be noticed, \(\psi^\star\) makes use of the unknown regularized risk minimizer \(\hz\).
	Nevertheless, the latter can be estimated by plug-in
	as soon as we are provided with prior information on \(\hz\), which is assumed to be true in the form of a naive estimator \(\hp\) of \(\hz\) (called a pilot estimator, and computed with \(k_n\) points).
	Based on it, L-optimal \ssps can be estimated as
	\(\tilde \pi_{n, i} \propto |\ell^\prime(\hp(X_i), Y_i)| \sqrt{k(X_i, X_i)}\)
	for all \(i \in \{1, \dots, n\}\).
	Second, as explained in the previous section, there is no guarantee that the map \(\psi^\star\) leads to \ssps which verify \Cref{hyp:ssp}.
	For this reason, it is proposed to smooth estimated \ssps as:
	\[
		\tilde \pi_{n, i}^\alpha = (1-\alpha) \tilde \pi_{n, i} + \frac \alpha n,
	\]
	for all \(i \in \{1, \dots, n\}\), where \(\alpha \in (0, 1]\) is a hyperparameter.
	As a last point, since \ssps should be less than \(1/s_n\) for Poisson subsampling, one may apply a post hoc truncation, such as \(\tilde \ppi_n^\alpha \wedge s_n^{-1}\).
	Everything together provides data driven \ssps satisfying all requirements and usable to build an \srm \(\hasn\), called an \(\alpha\)-piloted subsampling estimator.

	\Cref{thm:srm_alpha_pilot_normality_R,thm:srm_alpha_pilot_normality_P} below state that the latter estimator is consistent to \(\hz\) and asymptotically normal.
	In particular, it exhibits a covariance operator (which is well-defined and finite by the very definition of \(\psi^\star\) and \Cref{hyp:kernel,hyp:loss,hyp:distribution}), which, compared to that of \Cref{thm:srm_normality_R,thm:srm_normality_P}, is slightly impacted by smoothing the \ssps (tuned by the hyperparameter \(\alpha\)) but not by the use of a plug-in estimation of \(\psi^\star\).

\begin{theorem}[Asymptotic normality of $\alpha$-piloted Multinomial subsampling (\Cref{equ:gamma_multinomial})] \label{thm:srm_alpha_pilot_normality_R}
	Under \Cref{hyp:kernel,hyp:loss,hyp:distribution,hyp:lsn_speed}, let $\hp$ be a consistent estimator of $\hz$, $\alpha \in (0, 1]$ and define the probability vectors $\tilde\ppi_n$ and $\tilde\ppi_n^\alpha$ such that
	\begin{equation*}
		\tilde\pi_{n, i} = \frac{|\ell^\prime(\hp(X_i), Y_i)| \sqrt{k(X_i, X_i)}}{\sum_{j=1}^n |\ell^\prime(\hp(X_j), Y_j)| \sqrt{k(X_j, X_j)}} \quad \text{and} \quad \tilde \pi_{n,i}^\alpha = (1 - \alpha) \tilde \pi_{n, i} + \frac{\alpha}{n},
	\end{equation*}
	for all $n > 0$ and $i \in \{ 1, \dots, n \}$.
	Let $\hasn$ be the \srm of the subsampled regularized risk with \ssp vector $\tilde \ppi_n^\alpha$. If one furthermore assumes that $s_n / n \conv c \geq 0$, then
	\begin{equation*}
		\sqrt{s_n} \nabla^2\risk(\hz) (\hasn - \hz) \dconv \GP(0, \Sigma^*_\mathrm{M}(\alpha) + c\Sigma),
	\end{equation*}
	where
	\begin{equation*}
		\Sigma^*_\mathrm{M}(\alpha) \defeq \E{ \frac{\Ex{ \psi^\star(Z) }}{\Psi_\alpha^\star(Z)} \ell^\prime(\hz(X), Y)^2 K_X \otimes K_X } - \nabla\L(\hz) \otimes \nabla\L(\hz),
	\end{equation*}
	with $\Psi^\star_\alpha(Z) \defeq (1 - \alpha) \psi^\star(Z) + \alpha \Ex\psi^\star(Z)$.
\end{theorem}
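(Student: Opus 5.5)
We would not prove this from scratch. Instead we re-run the argument of \Cref{thm:srm_normality_R} and isolate the one place where the form \eqref{eq:ssp} of the \ssps is used. Write $W_{n,i} \defeq 1/(n\tilde\pi^\alpha_{n,i})$. The proof of \Cref{thm:srm_normality_R} splits into two parts. The first is a linearization $\sqrt{s_n}\nabla^2\risk(\hz)(\hsn-\hz) = -\sqrt{s_n}\,(\nabla \Lsn(\hz)+\lz\hz) + o_\Pr(1)$. It relies only on consistency (\Cref{thm:srm_consistency_core}), on \Cref{hyp:lsn_speed}, and on the moment control of \Cref{hyp:ssp}. The second is a CLT for the centered score
\[
\sqrt{s_n}\big(\nabla\Lsn(\hz)-\nabla\Ln(\hz)\big) + \sqrt{s_n}\big(\nabla \Ln(\hz)+\lz\hz\big).
\]
Its second term gives $c\Sigma$ by the ERM argument of \Cref{thm:erm_normality}. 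Its first term is a conditional multinomial sum, handled with the martingale CLT and the Cramér--Wold device of asymptotically finite-dimensional sequences. The conditional covariance of that first term is
\[
\frac1n\sum_i W_{n,i}\,\ell'(\hz(X_i),Y_i)^2K_{X_i}\otimes K_{X_i} - \nabla\Ln(\hz)\otimes\nabla\Ln(\hz).
\]
So the plan is to check \Cref{hyp:ssp} for $\tilde\ppi^\alpha_n$, and to show that this empirical operator converges in probability, in trace norm, to the operator defining $\Sigma^*_\mathrm{M}(\alpha)$. The Lindeberg condition and the nuclear-norm tightness then follow from the same bounds.

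\Cref{hyp:ssp} is immediate, because smoothing gives $n\tilde\pi^\alpha_{n,i}\ge\alpha$. Hence $\max_i W_{n,i}\le 1/\alpha$ deterministically. This uniform bound is the whole point of the $\alpha$-mixing. It also makes every Lindeberg/fourth-moment bound of the original proof go through verbatim, with \Cref{hyp:distribution} supplying the moments. It also replaces \Cref{hyp:nonexplosive_cov}, since $\Psi^\star_\alpha\ge\alpha\Ex\psi^\star(Z)>0$; the degenerate case $\Ex\psi^\star=0$ is trivial.

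The main obstacle is the plug-in convergence. Here $W_{n,i} = \big[(1-\alpha)\tilde\psi_n(Z_i)/\bar\psi_n + \alpha\big]^{-1}$ with $\tilde\psi_n(z)=|\ell'(\hp(x),y)|\sqrt{k(x,x)}$ and $\bar\psi_n=\frac1n\sum_j\tilde\psi_n(Z_j)$, and the weights depend on the whole sample through $\hp$. We would proceed in three steps. First, since $\|\hp-\hz\|_\H\pconv0$ and $\sup_x|h(x)|\le\kappa\|h\|_\H$, on an event of probability tending to one both $\hp$ and $\hz$ are bounded by $B=\kappa(\|\hz\|_\H+1)$. \Cref{hyp:loss} then gives $|\tilde\psi_n(Z_i)-\psi^\star(Z_i)|\le\kappa^2\phi_B(Y_i)\|\hp-\hz\|_\H$. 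Second, averaging this bound and applying the LLN to $\phi_B(Y)$ shows $\bar\psi_n\pconv\Ex\psi^\star(Z)$. Third, the map $(a,b)\mapsto[(1-\alpha)a/b+\alpha]^{-1}$ is bounded by $1/\alpha$ and Lipschitz in $a$ uniformly for $b$ bounded away from $0$. Hence $\max_i|W_{n,i}-W^\star_{n,i}|\le C(1+\phi_B(Y_i))\,o_\Pr(1)$, where $W^\star_{n,i}=\Ex\psi^\star/\Psi^\star_\alpha(Z_i)$ uses the true quantities. The empirical covariance built with $W$ thus differs in trace norm from the one built with $W^\star$ by at most $o_\Pr(1)\cdot\frac1n\sum_i(1+\phi_B(Y_i))\ell'(\hz(X_i),Y_i)^2\kappa^2$. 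The average is $O_\Pr(1)$ by Cauchy--Schwarz and \Cref{hyp:distribution}. The $W^\star$-version consists of i.i.d.\ terms and converges by the Hilbert-space (trace-class) LLN to $\E{W^\star\ell'^2K_X\otimes K_X}$, which yields $\Sigma^*_\mathrm{M}(\alpha)$.

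Finally, the same perturbation bound controls the second-order remainder in the linearization, which involves $\frac1n\sum_i W_{n,i}\gamma_{n,i}/s_n\cdot\ell''(\cdot)K_{X_i}\otimes K_{X_i}$. Because $W_{n,i}\le1/\alpha$, the Hessian term converges to $\nabla^2\risk(\hz)$ exactly as in \Cref{thm:srm_normality_R}. Consistency of $\hasn$ follows from \Cref{thm:srm_consistency_core}, whose proof uses only \Cref{hyp:ssp}. Combining the conditional CLT (a conditional characteristic-function argument given $\dn$, then dominated convergence) with the independent ERM fluctuation gives the Gaussian limit $\GP(0,\Sigma^*_\mathrm{M}(\alpha)+c\Sigma)$.
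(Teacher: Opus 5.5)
Your proposal is correct and follows the paper's proof essentially step for step. The bound $n\tilde\pi^\alpha_{n,i}\ge\alpha$ yields \Cref{hyp:ssp}, and hence consistency, the linearization and asymptotic finite-dimensionality. The plug-in conditional variance is handled by the same $\kappa^2\phi_B(Y_i)\|\hp-\hz\|_\H$ Lipschitz bound together with $\bar\psi_n\pconv\Ex\psi^\star(Z)$, and the Gaussian limit is assembled from conditional characteristic functions as in \Cref{lem:srm_normality_final}. Two differences are cosmetic. First, you argue through trace-norm convergence of the whole conditional covariance operator, whereas the paper works direction by direction with $f\in\H$. Second, you condition on $\dn$ alone; you should condition on $\sigma(\dn,\hp)$, as the paper does, since $\hp$ need not be $\dn$-measurable.
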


\begin{theorem}[Asymptotic normality of $\alpha$-piloted Poisson subsampling (\Cref{equ:gamma_poisson})] \label{thm:srm_alpha_pilot_normality_P}
	Under \Cref{hyp:kernel,hyp:loss,hyp:distribution,hyp:lsn_speed}, let $\hp$ be a consistent estimator of $\hz$, $\alpha \in (0, 1]$ and define the probability vectors $\tilde\ppi_n$ and $\tilde\ppi_n^\alpha$ such that
	\begin{equation*}
		\tilde\pi_{n, i} = \frac{|\ell^\prime(\hp(X_i), Y_i)| \sqrt{k(X_i, X_i)}}{\sum_{j=1}^n |\ell^\prime(\hp(X_j), Y_j)| \sqrt{k(X_j, X_j)}} \quad \text{and} \quad \tilde \pi_{n,i}^\alpha = (1 - \alpha) \tilde \pi_{n, i} + \frac{\alpha}{n},
	\end{equation*}
	for all $n > 0$ and $i \in \{ 1, \dots, n \}$.
	Let $\hasn$ be the \srm of the subsampled regularized risk with \ssp vector $\tilde \ppi_n^\alpha \wedge s_n^{-1}$.
	If one furthermore assumes that $s_n / n \conv c \in [0, 1]$, then
	\begin{equation*}
		\sqrt{s_n} \nabla^2\risk(\hz) (\hasn - \hz) \dconv \GP(0, \Sigma^*_{\mathrm{P},c}(\alpha) + c\Sigma),
	\end{equation*}
	where
	\begin{equation*}
		\Sigma^*_{\mathrm{P},c}(\alpha) \defeq \E{ \frac{\E{ \psi^\star(Z) } \ell^\prime(\hz(X), Y)^2}{ \Psi^\star_\alpha(Z) \wedge (c^{-1} \E{\psi^\star(Z)})} K_X \otimes K_X } - c \E{ \ell^\prime(\hz,(X), Y)^2 K_X \otimes K_X }.
	\end{equation*}
\end{theorem}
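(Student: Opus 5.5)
The plan is to reduce \Cref{thm:srm_alpha_pilot_normality_P} to the proof scheme of \Cref{thm:srm_normality_P}. That scheme only uses the subsampling probabilities through three things: \Cref{hyp:ssp}, the bound $\pi_{n,i}\le 1/s_n$, and the limit of the conditional covariance of the Horvitz--Thompson gradient. The one new feature here is that the \ssps are data-dependent through the pilot $\hp$, so they do not have the form \eqref{eq:ssp} with a fixed $\psi$.

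First, I check the structural requirements. Write $\bar\pi_{n,i}=\tilde\pi^\alpha_{n,i}\wedge s_n^{-1}$; these are no longer exactly normalised, but Poisson subsampling only needs $\gamma_{n,i}\sim\mathrm{Ber}(s_n\bar\pi_{n,i})$ with Horvitz--Thompson weights $1/(s_n\bar\pi_{n,i})$. The lower bound $n\tilde\pi^\alpha_{n,i}\ge\alpha$, together with $n/s_n\ge 1$, gives $n\bar\pi_{n,i}\ge\alpha$. Hence $\max_i(n\bar\pi_{n,i})^{-1}\le\alpha^{-1}$, so \Cref{hyp:ssp} holds deterministically and $\pi\le 1/s_n$ holds by construction. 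Consistency then follows exactly as in \Cref{thm:srm_consistency_core}.

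Second, I linearise as in the proof of \Cref{thm:srm_normality_P}. A Taylor expansion of $\nabla\Rsn$ around $\hz$ gives $\sqrt{s_n}\nabla^2\risk(\hz)(\hasn-\hz)=-\sqrt{s_n}\,\nabla R^*_{s_n}(\hz)+o_\Pr(1)$. The Hessian replacement needs only the uniform weight bound $\alpha^{-1}$ and the consistency of $\hasn$. I then decompose the score as
\[
\sqrt{s_n}\big(\nabla L^*_{s_n}(\hz)-\nabla\Ln(\hz)\big)+\sqrt{s_n/n}\,\sqrt n\big(\nabla\Ln(\hz)+\lz\hz\big).
\]
Conditionally on $\dn$ the first term is a sum of independent centred terms, and the second term is asymptotically $\GP(0,c\Sigma)$. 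Because the pilot is computed from the data, the conditional covariance of the first term is
\[
\frac1n\sum_i\Big(\frac{1}{n\bar\pi_{n,i}}-\frac{s_n}{n}\Big)\ell'(\hz(X_i),Y_i)^2K_{X_i}\otimes K_{X_i}.
\]
A Lindeberg condition holds given the uniform bound on weights and the fourth moments of \Cref{hyp:distribution}. Finite-dimensional conditional CLT plus asymptotic finite-dimensionality, combined as in \Cref{thm:srm_normality_P}, yields the sum of the two independent Gaussian limits.

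The main obstacle is identifying the limit of this empirical covariance when $\bar\pi$ depends on $\hp$. I would write
\[
n\tilde\pi^\alpha_{n,i}=\frac{(1-\alpha)\hat\psi(Z_i)+\alpha\,\overline{\hat\psi}_n}{\overline{\hat\psi}_n},
\qquad \hat\psi(z)=|\ell'(\hp(x),y)|\sqrt{k(x,x)},
\]
where $\overline{\hat\psi}_n$ is the empirical mean of $\hat\psi(Z_j)$. By \Cref{hyp:loss} and $\|\hp-\hz\|_\infty\le\kappa\|\hp-\hz\|_\H\to 0$, one has $\max_i|\hat\psi(Z_i)-\psi^\star(Z_i)|\le\kappa\,\phi_B(Y_i)\,\kappa\|\hp-\hz\|_\H$ on an event of high probability with a fixed $B$. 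Averages of $\phi_B(Y_i)$ times bounded quantities are $O_\Pr(1)$, so $\overline{\hat\psi}_n\to\Ex\psi^\star(Z)$ in probability.

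Since $x\mapsto 1/(x\wedge s_n/n)$ is Lipschitz on $[\alpha,\infty)$, with $s_n/n\ge$ some positive quantity or else handled by monotonicity, I can replace $1/(n\bar\pi_{n,i})$ by $\Ex\psi^\star/(\Psi^\star_\alpha(Z_i)\wedge c^{-1}\Ex\psi^\star)$. The cost, in trace norm, is bounded by $\frac1n\sum_i|\,\cdot\,|\,\ell'^2\kappa^2$, which is $o_\Pr(1)$. The case $c=0$ means the truncation is asymptotically inactive and should be read with $c^{-1}=\infty$. The law of large numbers in the space of trace-class operators then gives the stated $\Sigma^*_{\mathrm{P},c}(\alpha)$, with the $-c\,\Ex[\ell'^2K_X\otimes K_X]$ term coming from the $s_n/n$ Bernoulli correction.
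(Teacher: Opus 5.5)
Your proposal is correct and follows essentially the paper's route. The bound $n(\tilde\pi^\alpha_{n,i}\wedge s_n^{-1})\ge\alpha$ gives \Cref{hyp:ssp}, hence consistency, the Hessian replacement and asymptotic finite-dimensionality; a conditional Lindeberg CLT handles the Bernoulli Horvitz--Thompson term; and the limit of the conditional variance is identified from the pilot's consistency together with the passage $n/s_n\to c^{-1}$ in the truncation. You do that last step with a single Lipschitz bound in trace norm, whereas the paper uses $|a\wedge b-a'\wedge b|\le|a-a'|$ followed by dominated convergence, for each fixed $f$.

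A few minor fixes are needed:
\begin{itemize}
\item The truncation level is $n/s_n$, not $s_n/n$. Since $1/(x\wedge n/s_n)=x^{-1}\vee s_n/n$ is $\alpha^{-2}$-Lipschitz on $[\alpha,\infty)$ and lies within $|s_n/n-c|$ of $x^{-1}\vee c$ uniformly, no case distinction on $c$ is needed.
\item You should condition on $\sigma(\dn,\hp)$ rather than on $\dn$ alone, because the pilot may use extra randomness (e.g.\ the permutation in \Cref{alg:alpha_piloted}).
\item The bound on $|\hat\psi(Z_i)-\psi^\star(Z_i)|$ holds for each $i$, not as a $\max_i$.
\item The score also contains the $o_\Pr(1)$ term $\sqrt{s_n}(\lsn-\lz)\hz$, which your decomposition omits.
\end{itemize}
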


A particularity of \(\alpha\)-piloted Poisson subsampling (\Cref{thm:srm_alpha_pilot_normality_P}) is the use of a post hoc truncation of \ssps: \(\tilde \ppi_n^\alpha \wedge s_n^{-1}\),
in order to get valid Bernoulli distribution parameters.
It has two direct consequences.
First, the subsampled dataset size is in expectation less than or equal to \(s_n\) rather than equal to \(s_n\).
Second, a thresholdind based on \(c^{-1} \E{\psi^\star(Z)}\) appears in the
denominator of the asymptotic covariance operator $\Sigma_{\mathrm{P}, c}^*$ of \(\hasn\).

In practice, the pilot estimator \(\hp\) can be easily obtained by picking uniformly a small number of datapoints and by minimizing an empirical regularized risk.
In this situation, \Cref{thm:srm_consistency_core} ensures consistency of \(\hp\).
The proposed workable procedure for computing an \(\alpha\)-piloted subsampling estimator is summarized \Cref{alg:alpha_piloted}.

	\begin{algorithm}
		Require \(s_n \le n\) (desired subsampled dataset size), \(k_n \ll s_n\) (pilot dataset size), \(\alpha \in (0, 1]\) (smoothing hyperparameter).

		\begin{enumerate}
			\item Consider a random permutation \(\sigma\) on \(\{1, \dots, n\}\) and build a pilot estimator
			\[
				\hat h_{k_n} \in \argmin_{h \in \H} \cb*{ \frac{1}{k_n} \sum_{i=1}^{k_n} \ell(h(X_{\sigma(i)}), Y_{\sigma(i)}) + \frac{\lambda_{k_n}^*}{2} \| h \|^2_\H }. 
			\]
			\item Estimate optimal \ssps by plug-in:
			\[
				\tilde\pi_{n, i} = \frac{|\ell^\prime(\hp(X_i), Y_i)| \sqrt{k(X_i, X_i)}}{\sum_{j=1}^n |\ell^\prime(\hp(X_j), Y_j)| \sqrt{k(X_j, X_j)}},
			\]
			for all \(i \in \{1, \dots, n\}\).
			\item Smooth the estimated \ssps:
			\[
				\tilde \pi_{n,i}^\alpha = (1 - \alpha) \tilde \pi_{n, i} + \frac{\alpha}{n},
			\]
			for all \(i \in \{1, \dots, n\}\).
			\item Subsample datapoints either with replacement (with \ssp vector \(\tilde \ppi_n^\alpha\))
			or without replacement (with \ssp vector \(\tilde \ppi_n^\alpha \wedge s_n^{-1}\))
			and get multiplicities \(\gamma_{n, i}^\alpha\).
			\item Build the \(\alpha\)-piloted \srm:
			\[
				\hasn \in \argmin_{h \in \H} \cb*{ \frac{1}{n} \sum_{i=1}^{n} \frac{\ell(h(X_i), Y_i)}{s_n \tilde \pi_{n,i}^\alpha} \gamma_{n,i}^\alpha + \frac{\lsn}{2} \| h \|^2_\H }.
			\]
		\end{enumerate}
		\caption{\(\alpha\)-piloted subsampling estimator}
		\label{alg:alpha_piloted}
	\end{algorithm}


\section{Numerical experiments} \label{sec:numerical}
	\subsection{Setting}
		In this section, we empirically assess the performance of the proposed \(\alpha\)-piloted subsampling method (\Cref{alg:alpha_piloted}) compared to a naive approach (uniform subsampling)
		and three state-of-the-art kernel approximation and dataset reduction techniques: the Nyström method \citep{williams2000using,drineas_nystrom_2005}, Random Fourier Features \citep{rahimi2007random} and kernel sketching \citep{yang_randomized_2017, elahmad2023fast}.\footnote{The Python source code for reproducing the numerical experiments is provided at \href{https://gitlab.com/Yaloude/rkhs-subsampling}{https://gitlab.com/Yaloude/rkhs-subsampling}.}
		By performance, it is meant first the generalization error of the methods (\Cref{fig:varying_alpha}),
		second their computational effort, that is the curve mapping the generalization error to the training CPU time (\Cref{fig:effort,fig:space_budget_effort,fig:synthetic_space_budget_effort_mixed}).
		Let us now describe the competitive methods.

		\paragraph{\(\alpha\)-piloted subsampling (\srm)}
		This is the method proposed in \Cref{alg:alpha_piloted} with \(k_n = s_n/3\) and \(\alpha = 0.2\).
		The \emph{subsampling budget} is here \(b_n = k_n+s_n\), where a quarter is used for computing the pilot estimator \(\hp\) and the remaining for the final subsampling estimator \(\hasn\).

		\paragraph{Uniform subsampling}
		This corresponds to Multinomial or Poisson subsampling with uniform \ssps: $\ppi_n^\mathrm{U} = (1 / n, \dots, 1 / n)$ (see \Cref{cor:uniform_ssp_R,cor:uniform_ssp_P}),
		and constitutes a naive, dataset unrelated and problem agnostic subsampling strategy used as a baseline.
		The subsampling budget is here \(b_n = s_n\).

		\paragraph{Nyström}
		The Nyström method is a kernel approximation technique, where the Gram matrix \( K = (k(X_i, X_j))_{1 \leq i,j \leq n} \) is replaced by a low rank approximation computed on \( m \in \N \) randomly chosen observations from $\dn$ as basis vectors.
		From the memory footprint point of view, it results in an \( n \times m \) data matrix instead of the \( n \times n \) Gram matrix \( K \) for \erm.

		\paragraph{Random Fourier features (\rff)}
		The canonical feature map \(\bm x \in \X \mapsto k(\bm x, \cdot) \in \H\) is approximated by a \(p\)-valued function, where \(p \in \N\) is the number of random cosine basis functions chosen.
		This method yields a data matrix of size \(n \times p\) compared to $n \times n$ for \erm.

		\paragraph{Sketching (SKT)}
		Sketching is a dataset reduction method, basically working on \( l \in \N \) random linear combinations of the observations.
		We chose, for the comparison, the \( p \)-sparsified Rademacher sketch, as introduced by \citet{elahmad2023fast}, with probability \( p = 30 / n \) of non-empty entry in the sketch matrix.
		Let us remark that, if the memory footprint for the optimization procedure is \(n \times l\), the method requires first to compute the \(n \times n\) Gram matrix before getting the sketched one.\\

		The methods above are all compared based on a Gaussian radial basis kernel: \( k(\bm x, \bm x^\prime) = e^{- \gamma\| \bm x - \bm x^\prime \|_2^2} \) for all \( \bm x, \bm x^\prime \in \R^d \),
		with parameter $\gamma$ set to the inverse of the empirical variance of the observations (rounded to two significant digits), unless said otherwise.
		For this comparison, two synthetic datasets (corresponding to a classification and a regression task) and a real-world dataset coming from the UCI repository (Covertype) are employed and described below.

		\paragraph{Synthetic classification}
		The dataset is represented in \Cref{subfig:classification_dataset}:
		the input vector is such that \(X \sim \mathrm{Unif}([0, 1]^2)\) and three random centers are drawn from the same uniform distribution.
		Denoting \(d(\bm x)\) the minimum distance between a spatial observation \(\bm x\) and the latter centers,
		the binary classification labels are such that \(Y | X \sim \mathrm{Rad}(\sigma(100 (d(X) - 0.25)))\),
		where \(\mathrm{Rad}(p)\) is the Rademarcher distribution with parameter \(p \in (0, 1)\),
		and \(\sigma : t \in \R \mapsto 1 / (1 + e^{-t})\) is the sigmoid function.

		For this task, a million observations (independent from the training data) are used for computing the test errors,
		and the logistic loss is used with regularization parameters \(\lsn = \lz = 10^{-5}\).

		\begin{figure}[ht]
			\centering
			\begin{subfigure}{0.48\linewidth}
				\includegraphics[width=\linewidth]{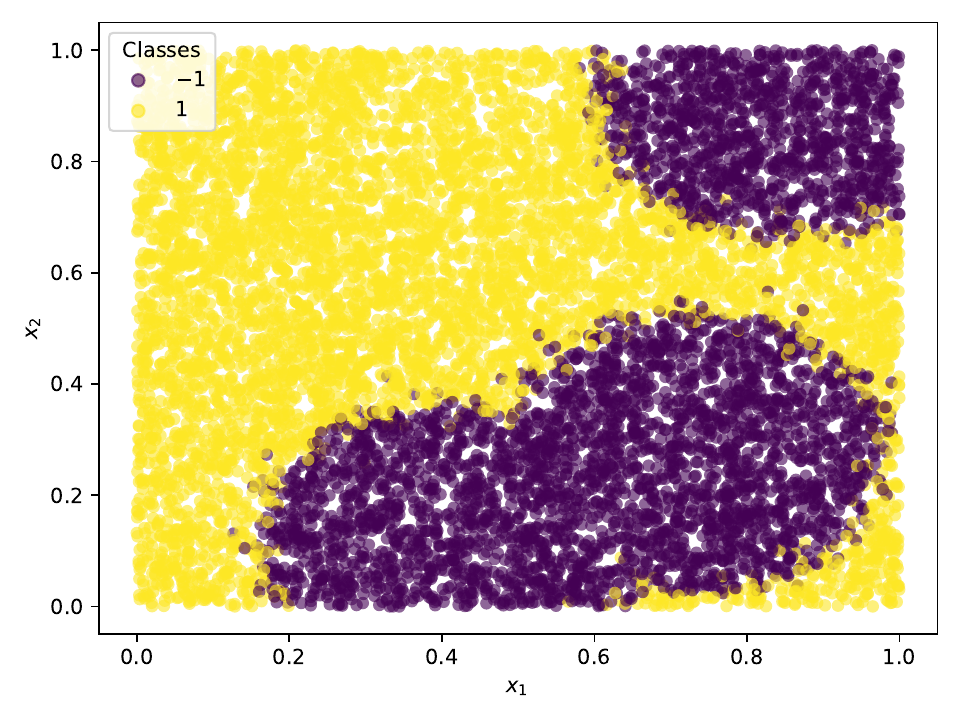}
				\caption{Classification}
				\label{subfig:classification_dataset}
			\end{subfigure}
			\hfill
			\begin{subfigure}{0.48\linewidth}
				\includegraphics[width=\linewidth]{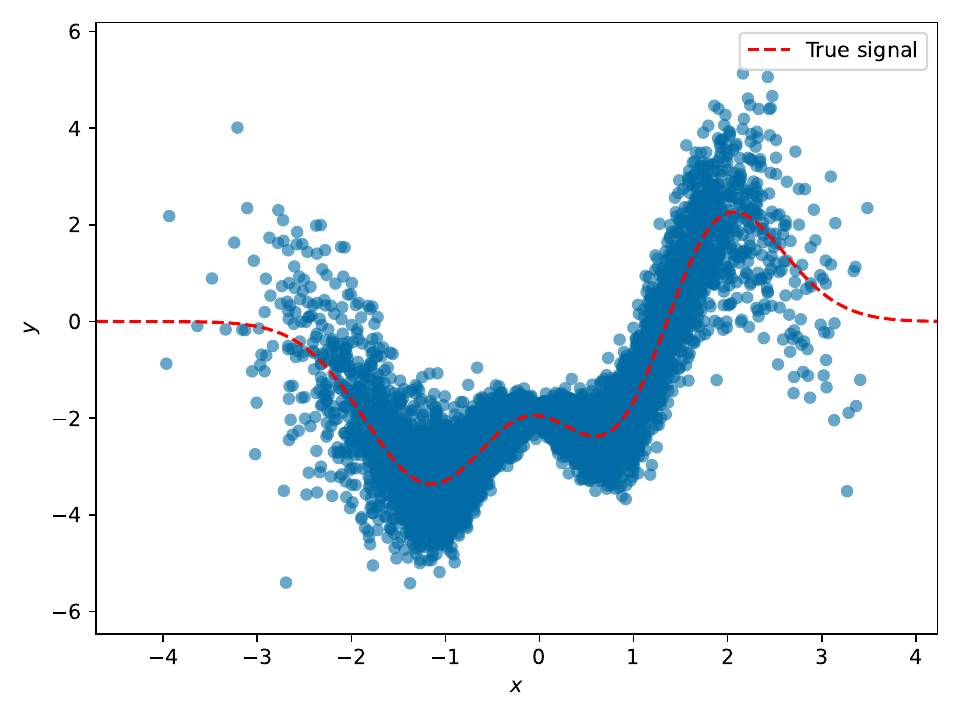}
				\caption{Regression}
				\label{subfig:regression_dataset}
			\end{subfigure}
			\caption{Visualization of the synthetic datasets used in the numerical experiments.}
			\label{fig:datasets}
		\end{figure}

		\paragraph{Synthetic regression}
		The dataset is represented in \Cref{subfig:regression_dataset}:
		the function to estimate is \(h^\star =(5 / \sqrt{\sum_{1 \le i, j \le m} \beta_i^0 \beta_j^0 k(x_i^0, x_j^0)}) \sum_{i=1}^m \beta_i^0 k(x_i^0, \cdot)\) (i.e.\ \( \| h^\star \|_\H = 5 \)),
		where \(m = 10\), \(x_1^0\) to \(x_m^0\) are random centers independently drawn according to a standard Gaussian distribution, and factors \(\beta_1^0\) to \(\beta_m^0\) are sampled from the same distribution.
		Observations are \((X_i, Y_i)\)s, where for all \(i \in \{1, \dots, n\}\),
		\begin{equation*}
			Y_i \defeq h^\star(X_i) + \pa*{ \frac{1}{2} |X_i| + 0.1 } \varepsilon_i,
		\end{equation*}
		with $\varepsilon_i \overset{i.i.d.}{\sim} \mathcal{N}(0, 1)$.

		For this task, a \(10^6\)-sample independent from the training data is used for computing test errors,
		and the previously defined quartic-quadratic Huber loss (with \(\varepsilon = 1\)) is used with regularization parameters \(\lsn = \lz = 10^{-3}\).
		The latter loss has the benefit of being almost insensitive to small residues, while not less robust to outliers than the least squares loss.
		Let us remark that, since \(\lz > 0\), the true regularized risk minimizer \(\hz\) is different from \(h^\star\).

		\begin{figure}[ht]
			\centering
			\begin{subfigure}{0.48\linewidth}
				\includegraphics[width=\linewidth]{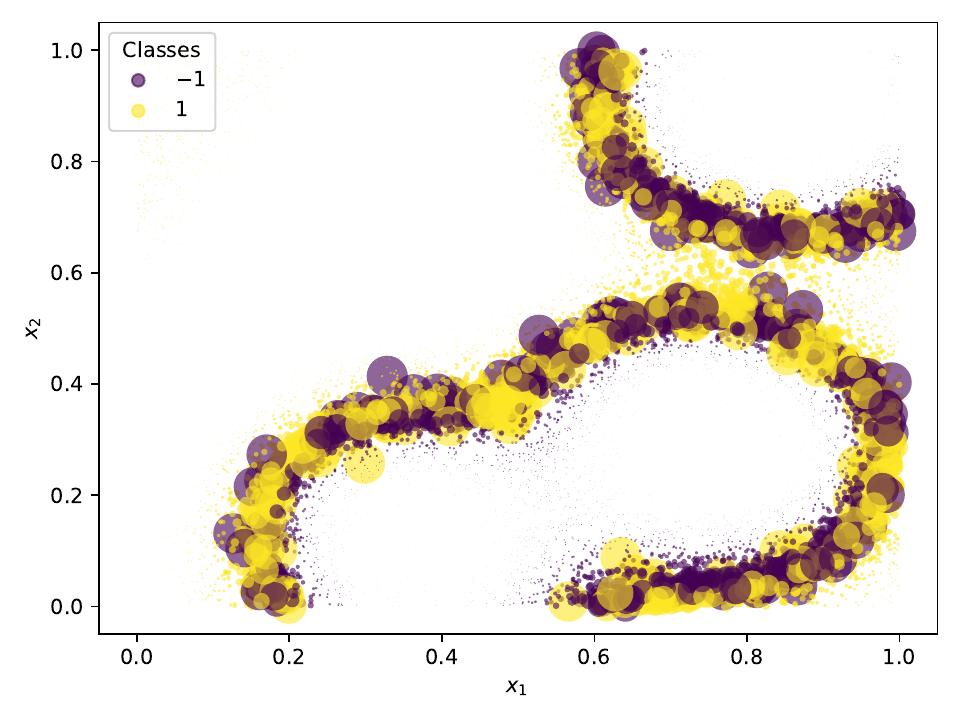}
				\caption{Classification}
				\label{subfig:classification_ssp_visu}
			\end{subfigure}
			\hfill
			\begin{subfigure}{0.48\linewidth}
				\includegraphics[width=\linewidth]{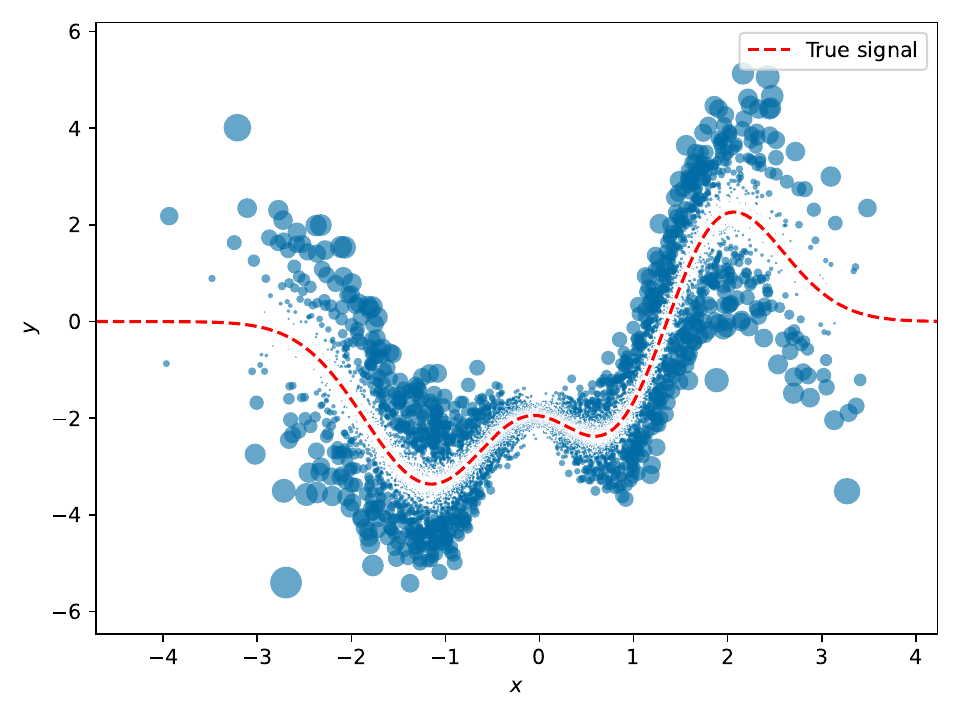}
				\caption{Regression}
				\label{subfig:regression_ssp_visu}
			\end{subfigure}
			\caption{Visualization of the L-optimal \ssps on the synthetic datasets
			(the size of each datapoint is proportional to its probability of being subsampled).}
			\label{fig:ssp_visu}
		\end{figure}

		For both synthetic datasets, \Cref{fig:ssp_visu} depicts the same points as in \Cref{fig:datasets} but with the size of each point proportional to its L-optimal \ssp.
		It clearly reveals that important points for building the \srm \(\hsn\) are at the frontiers of classes for classification, and away from \(h^\star\) for regression.

		\paragraph{UCI Covertype}
		The \href{https://archive.ics.uci.edu/dataset/31/covertype}{CovType} dataset is a real-world repository of biological attributes on seven different forest cover types.
		It contains 581,012 records of 54 features each, with 10 of which being quantitative and the rest being binary.
		We aim at separating the last class from the others, which results in
		an imbalanced binary classification problem,
		handled with the logistic loss (meeting \Cref{hyp:loss}).

		The dataset is split into two parts: 100,000 points are used for training and the rest for testing.
		Quantitative covariates are standardized and hyperparameters are set to \( \gamma = 1 \) and \( \lsn = \lz = 5 \cdot 10^{-6} \).

	\subsection{Results} \label{sec:numerical_results}
		Methods are compared in three different settings, described below.

		\paragraph{Varying smoothing hyperparameter \(\alpha\)}
		As a first numerical experiment, we study the impact of the smoothing hyperparameter \(\alpha \in (0, 1]\) on the generalization error.
		\Cref{fig:varying_alpha} depicts classification and regression errors (computed on the test set) for \srm with Multinomial (\srm (M)) and Poisson (\srm (P)) subsampling, with respect to \(\alpha\).
		Both methods have the same subsampling budget \(b_n = k_n + s_n\) when \(\alpha < 1\),
		and \(b_n = s_n\) when \(\alpha = 1\).
		In both synthetic and real-world situations, it appears that the higher \(\alpha\) (the closer to uniform subsampling), the higher the generalization error (and both subsampling methods perform roughly the same).
		In particular, for \(\alpha \le 0.4\), the \srm error stays roughly constant and is less than that for uniform subsampling (\(\alpha = 1\)).
		As an exception, L-optimal subsampling without smoothing (\(\alpha = 0\)) for the synthetic classification task is as bad as uniform subsampling.
		This may be so because the optimal map \(\psi^\star\) does not verify assumptions of \Cref{thm:srm_normality_R,thm:srm_normality_P}.
		As a result of this first lesson, we fix \( \alpha = 0.2 \) for the forthcoming experiments.

		Moreover, as our framework is based on the consistency of \(\hasn\), \Cref{fig:regression_alpha_distances,fig:classification_alpha_distances} in \Cref{sec:additional} provide \( L^2 \), \( L^\infty \) or \( \H \)-distances between \(\hasn\) and \(\hz\).
		The results are similar than those of \Cref{fig:varying_alpha} and show that a value of \(\alpha\) between \(0.1\) and \(0.4\) leads to a better \srm estimator than uniform subsampling.
		This means that, in this setting, statistical and learning performances are in line with each other.

		\begin{figure}[ht]
			\centering
			\begin{subfigure}{0.48\linewidth}
				\includegraphics[width=\linewidth]{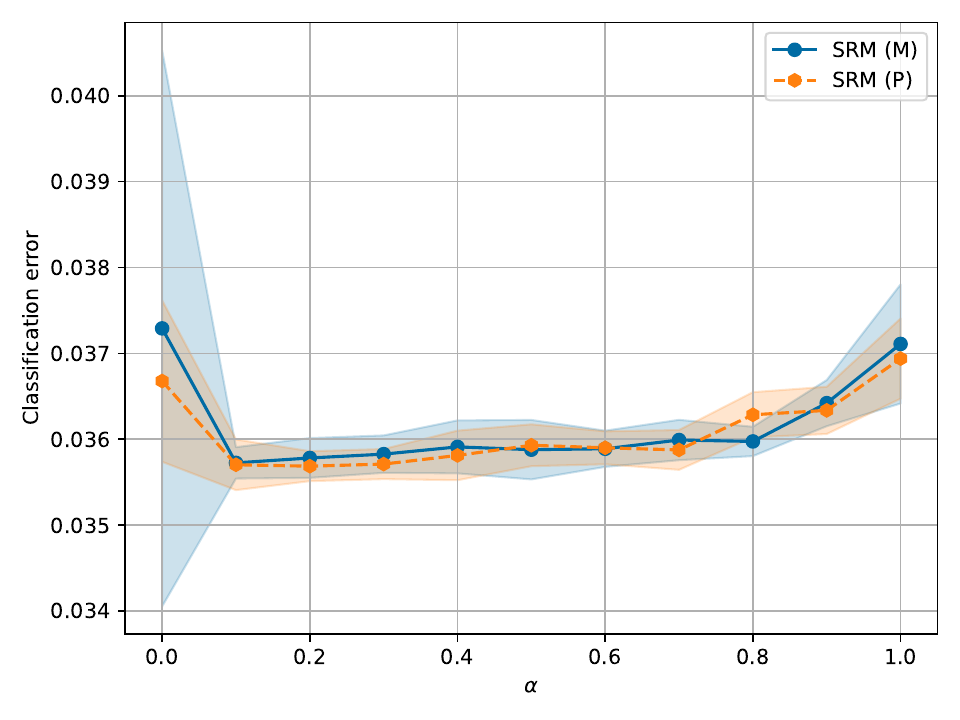}
				\caption{Synthetic classification with \( n = 10^5 \) and \( b_n / n = 0.02 \).}
				\label{subfig:classification_alpha}
			\end{subfigure}
			\hfill
			\begin{subfigure}{0.48\linewidth}
				\includegraphics[width=\linewidth]{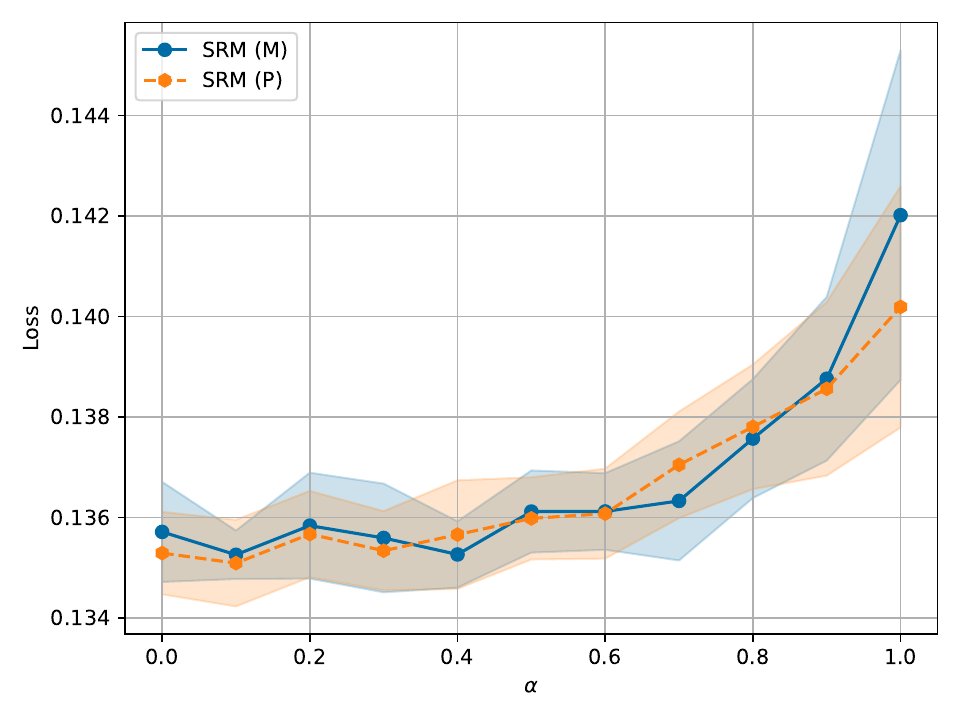}
				\caption{Synthetic regression with \( n = 10^5 \) and \( b_n / n = 0.01 \).}
				\label{subfig:regression_alpha}
			\end{subfigure} \\
			\begin{subfigure}{0.48\linewidth}
				\includegraphics[width=\linewidth]{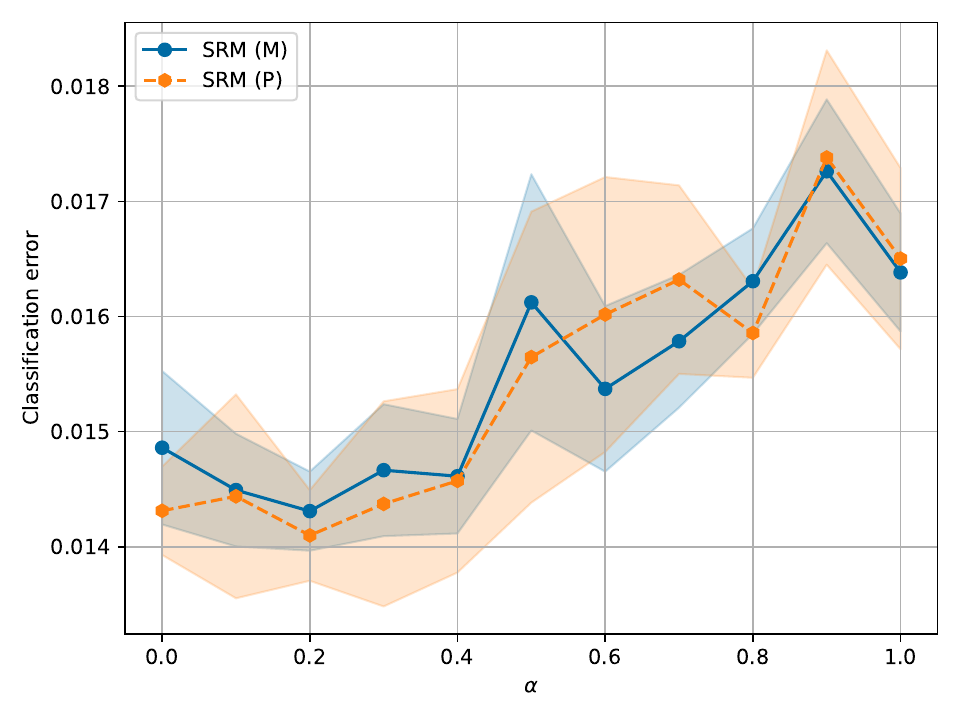}
				\caption{Covtype with \( n = 10^5 \) and \( b_n / n = 0.2 \).}
				\label{subfig:covtype_alpha}
			\end{subfigure}
			\caption{Generalization error with respect to the smoothing hyperparameter \( \alpha \).}
			\label{fig:varying_alpha}
		\end{figure}

		\paragraph{Varying dataset size \( n \), fixed subsampling budget ratio \( b_n / n \)}
		In this second experiment, uniform subsampling is compared to \(\alpha\)-piloted \srm on both synthetic tasks when the dataset size \(n\) increases (from \( 10^4 \) to \( 5 \cdot 10^5 \)) but
		the subsampling budget ratio \(b_n / n\) is fixed.
		\Cref{fig:effort} represents the computational effort of each method, i.e.\ the CPU training time with respect to the generalization error, here as a curve parameterized by \(n\) (in particular, the points from left to right on each curve correspond to the same values of \(n\)).
		It appears that for small datasets (right part of the graphs), \srm is generally less efficient than uniform subsampling: for a given generalization error, \srm is more (time) consuming than uniform subsampling, and conversely, for a given training time, \srm is less accurate.
		This may be explained by a spurious information given by the pilot estimator, which is computed on too few data.
		However, the exact converse is true for large dataset sizes \(n\) (left part of the graphs).
		To sum up, for small datasets, uniform subsampling is preferable, while for large datasets, \srm (which is based on asymptotic results) is more advantageous.
		This holds true for both subsampling techniques, which perform similarly.

		Surprisingly, for a given dataset size \(n\), uniform subsampling is longer to train than \(\alpha\)-piloted \srm, while it includes computing a pilot estimator.
		This can actually be explained by the fact that the computational cost is mainly due to computing \(\hasn\), that is solving an optimization problem of size \((b_n - k_n)^2\) for \srm and \(b_n^2\) for uniform subsampling.
		The forthcoming experiment improves on this interpretation bias.

		\begin{figure}[ht]
			\centering
			\begin{subfigure}{0.48\linewidth}
				\includegraphics[width=\linewidth]{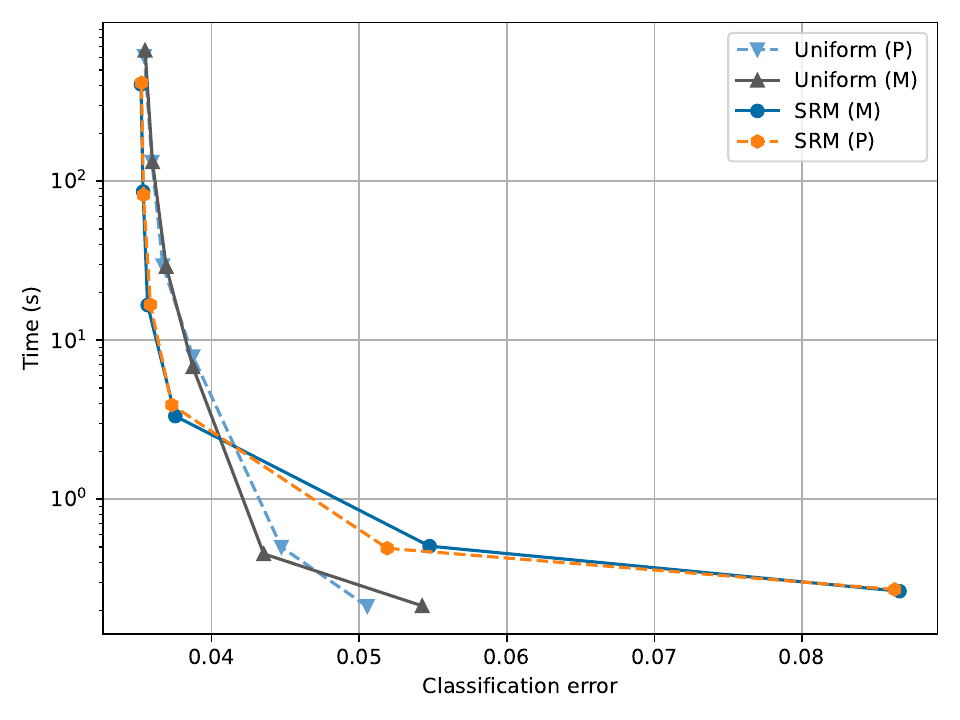}
				\caption{Synthetic classification.}
				\label{subfig:classification_effort}
			\end{subfigure}
			\hfill
			\begin{subfigure}{0.48\linewidth}
				\includegraphics[width=\linewidth]{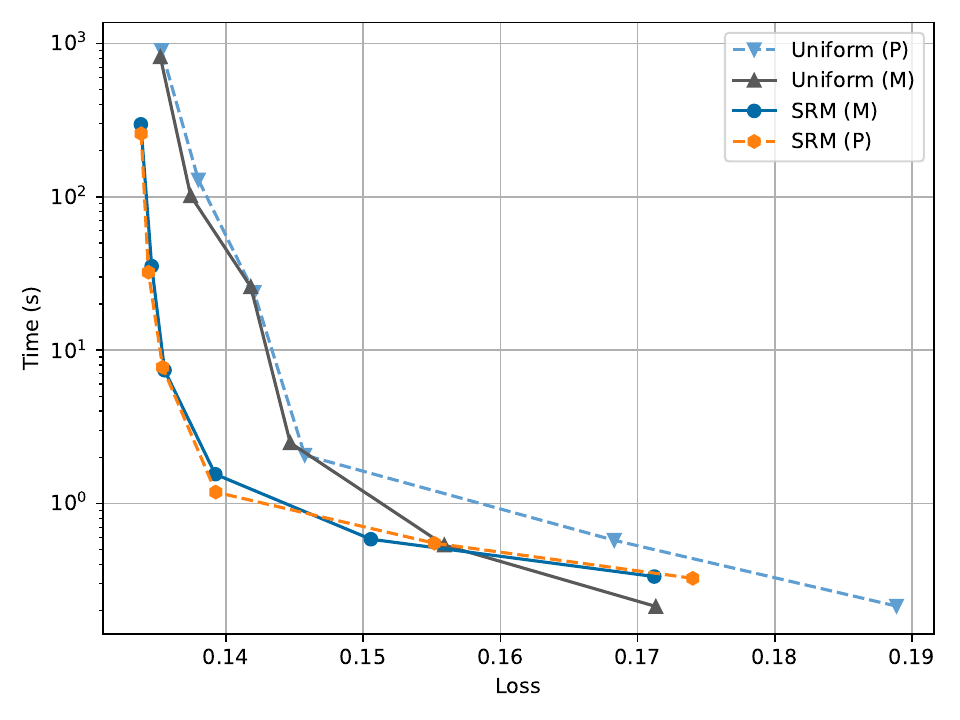}
				\caption{Synthetic regression.}
				\label{subfig:regression_effort}
			\end{subfigure}
			\caption{Computational effort parameterized by the sample size \(n\) (fixed \(b_n/n\)).}
			\label{fig:effort}
		\end{figure}

		\paragraph{Varying space budget, fixed dataset size \(n\)}
		This last numerical experiment is closer to a real situation (and thus also includes the real-world case of cover type classification), where the dataset is given (so \(n\) is fixed), and methods are compared thanks to the computational effort parameterized by the (memory) space budget.
		This one corresponds to the size of the optimization problems to solve, that is \(b_n^2\) for uniform subsampling and \(k_n^2 + (b_n - k_n)^2\) for \(\alpha\)-piloted \srm.
		On that occasion, it is also possible to fairly include state-of-the-art competitors, i.e.\ Nyström, \rff and SKT, for which the space budget is \(nm\), where \(m\) is the approximation parameter of the method: the dimension of the subspace for the Nyström approach, the number of random projections for \rff and the number of linear combinations for the SKT.
		As evidenced by \Cref{fig:space_budget_effort}, since \(n\) is large, \srm is always favorable compared to uniform subsampling.
		It is also more efficient than Nyström, \rff and SKT on the CovType dataset (see \Cref{subfig:covtype_space_budget_effort}).

		Moreover,
		for a given space budget (a given point on the curves), \(\alpha\)-piloted \srm and uniform subsampling are roughly as long to train than each other but the former is more accurate than the latter.
		This means that, for a prescribed memory budget, \srm is preferable to uniform subsampling (for large datasets), but also that estimating the L-optimal \ssps has a negligible computational time.

		\begin{figure}[ht]
			\centering
			\begin{subfigure}{0.48\linewidth}
				\includegraphics[width=\linewidth]{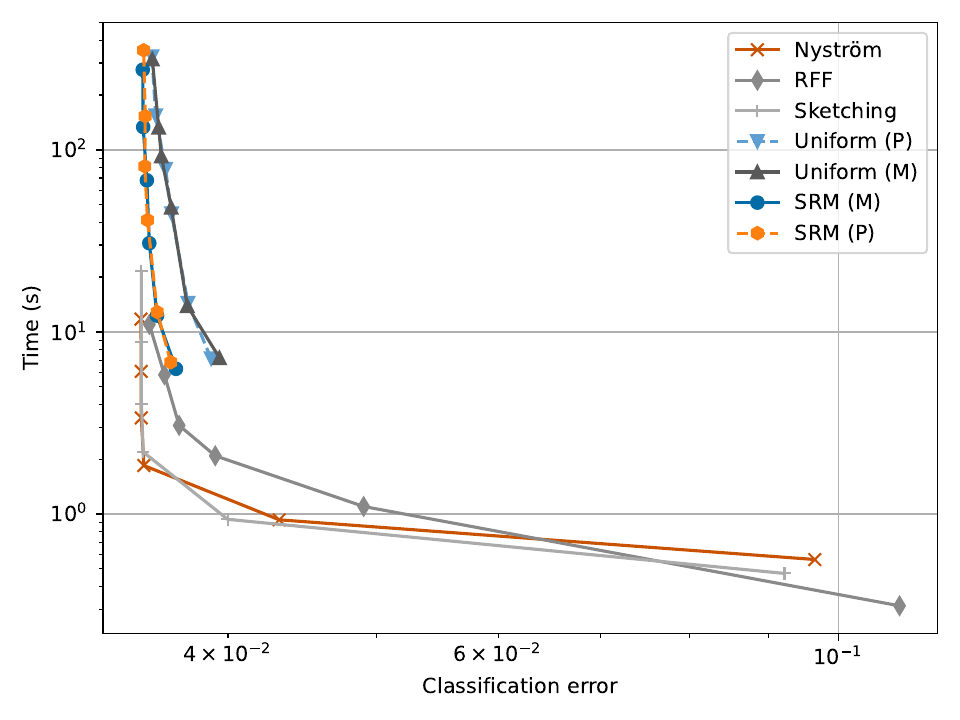}
				\caption{Synthetic classification.}
				\label{subfig:classification_space_budget_effort}
			\end{subfigure}
			\hfill
			\begin{subfigure}{0.48\linewidth}
				\includegraphics[width=\linewidth]{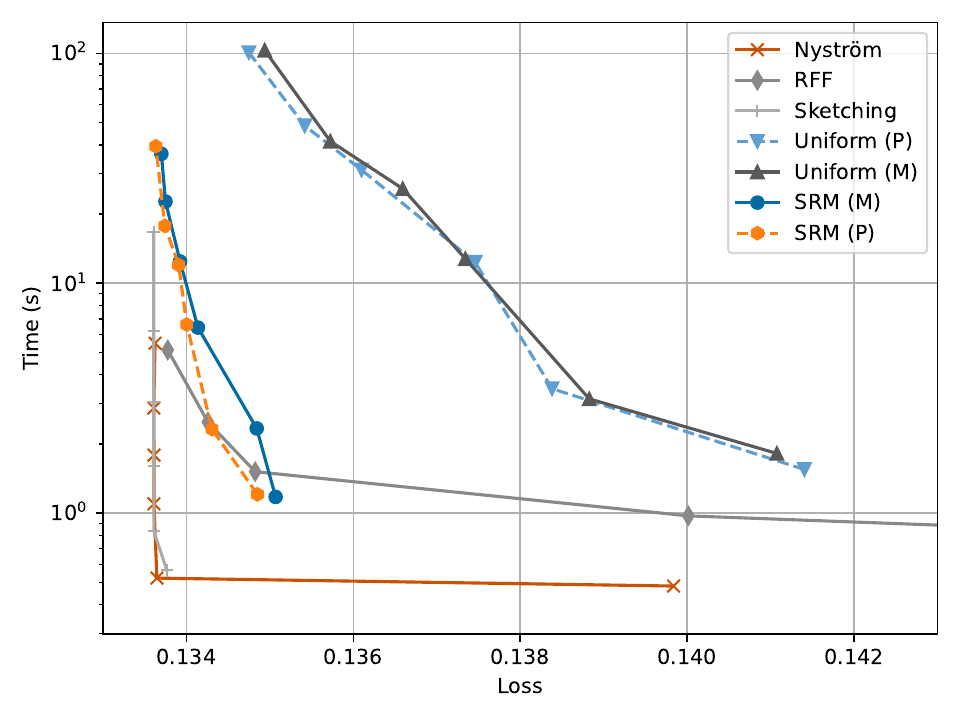}
				\caption{Synthetic regression.}
				\label{subfig:regression_space_budget_effort}
			\end{subfigure} \\
			\begin{subfigure}{0.48\linewidth}
				\includegraphics[width=\linewidth]{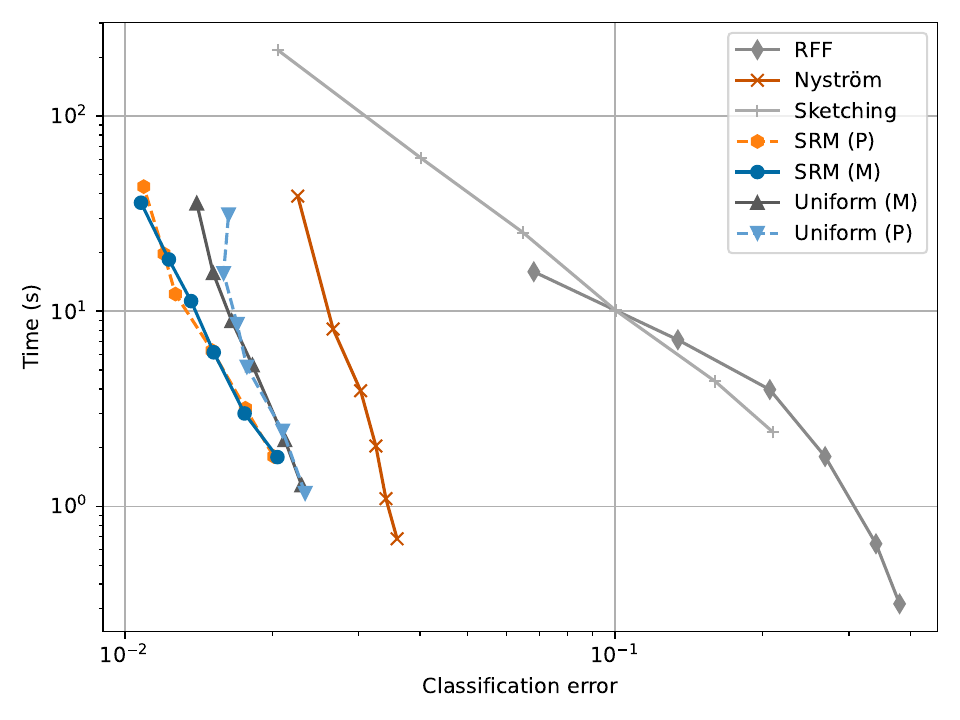}
				\caption{Covtype.}
				\label{subfig:covtype_space_budget_effort}
			\end{subfigure}
			\hfill
			\caption{Computational effort parameterized by the space budget (fixed \( n \)).}
			\label{fig:space_budget_effort}
		\end{figure}

		Regarding the synthetic tasks (\Cref{subfig:classification_space_budget_effort,subfig:regression_space_budget_effort}),
		Nyström, \rff and SKT appear more computationally efficient
		(curves are at the bottom and on the left).
		This may be explained by the low dimension of the input data.
		Nevertheless, subsampling and kernel approximations methods can be considered as complementary techniques for reducing the computational cost of kernel approaches.
		Thus, \Cref{fig:synthetic_space_budget_effort_mixed} presents numerical results when uniform subsampling and \srm estimators are computed with the Nyström approximation method.
		It results in increased performances for subsampling techniques, making \(\alpha\)-piloted \srm more appealing than Nyström, \rff and SKT alone regarding the computational effort.

		\begin{figure}[ht]
			\centering
			\begin{subfigure}{0.48\linewidth}
				\includegraphics[width=\linewidth]{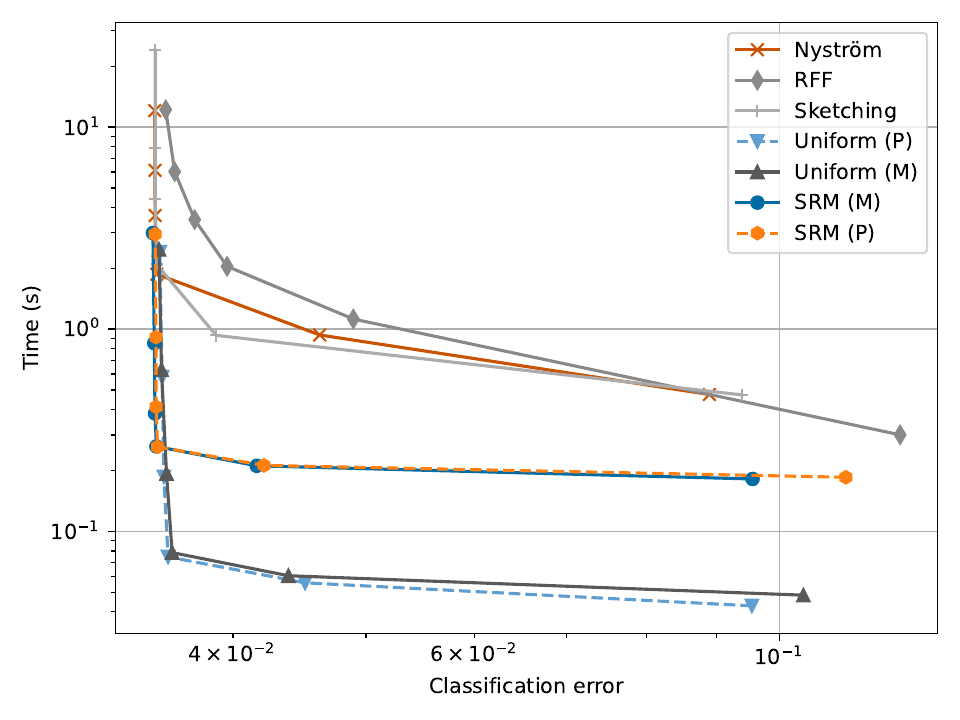}
				\caption{Synthetic classification.}
				\label{subfig:classification_space_budget_effort_mixed}
			\end{subfigure}
			\hfill
			\begin{subfigure}{0.48\linewidth}
				\includegraphics[width=\linewidth]{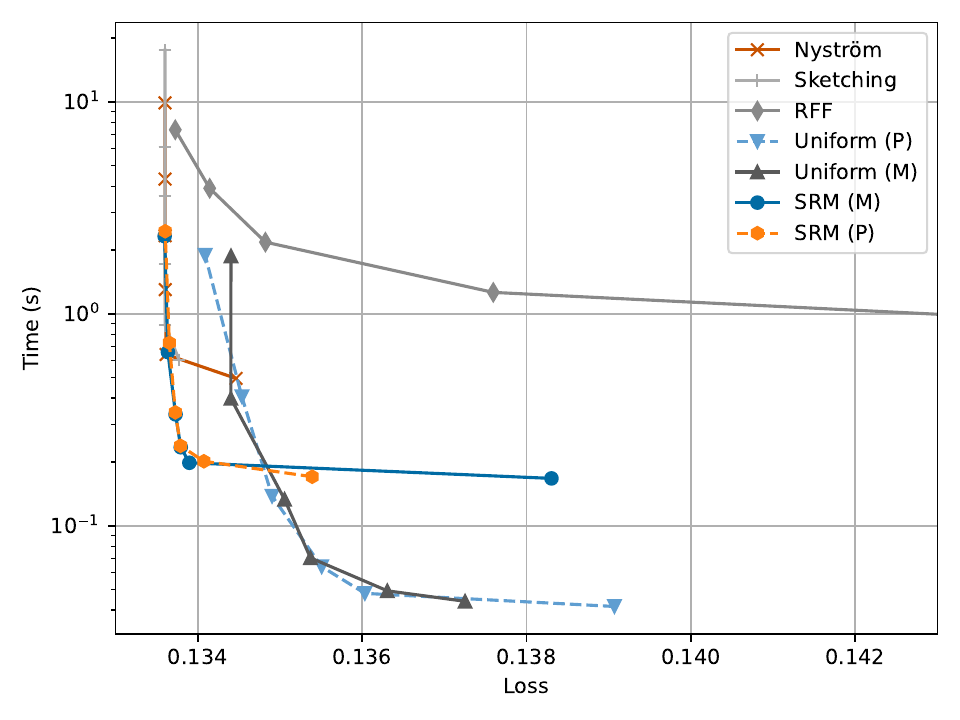}
				\caption{Synthetic regression.}
				\label{subfig:regression_space_budget_effort_mixed}
			\end{subfigure}
			\caption{Computational effort parameterized by the space budget (fixed \( n \)) for mixed Nyström-subsampling estimators.}
			\label{fig:synthetic_space_budget_effort_mixed}
		\end{figure}


\section{Conclusion and discussion}

This paper addresses the design of a reduced dataset by subsampling a large one, with the aim of computing a kernel-based estimator for classification or regression.
Based on asymptotic results, it is possible to randomly choose observations leading to a minimal trace for the limit covariance operator of the subsampling estimator.
This L-optimal subsampling scheme can then be estimated by plug-in, based on a pilot estimator, and smoothed in order to provide a robust practical method.
The latter is compatible with state-of-the-art kernel approximation techniques and can, for instance, be coupled with the Nyström method, leading to a very computationally efficient framework for learning in \rkhss.
Numerical experiments demonstrate the appeal of the method compared to uniform subsampling and kernel approximations, in particular for large initial datasets.

The work presented here opens several directions of research.
First, the theory is based on a regularized population risk and could, at the price of a substantial work, be extended to a non-regularized risk minimizer.
Second, the proposed approach is basically a two-step algorithm, learning first a pilot estimator (as well as estimating the subsampling probabilities and building the subsampled dataset) and then learning the final subsampling estimator.
One may conceive a multistage (or online) procedure, where the pilot estimator and the subsampling probabilities are sequentially adjusted in order to leverage more the random choice of consistent individuals.

\section*{Acknowledgments}
This project was partially funded by the alliance AI4IDF.
The authors are grateful to Jeffrey Näf for insightful discussions on asymptotics in \rkhss.

	\printbibliography
	\appendix
	\setcounter{equation}{0}
	\setcounter{figure}{0}
	\renewcommand{\theequation}{\thesection.\arabic{equation}}
	\renewcommand{\thefigure}{\thesection.\arabic{figure}}

\section{Proofs} \label{sec:proofs}

\subsection{Derivatives of \(L\)}
  \begin{lemma}\label{lem:leibniz}
    Under \Cref{hyp:kernel,hyp:loss,hyp:distribution}, \(L\) is twice differentiable at \(\hz\), with:
    \[
      \nabla \L(\hz) = \E{ \ell^\prime (\hz(X), Y) K_X } 
      \quad \text{and} \quad
      \nabla^2\L(\hz) = \E{ \ell^{\prime\prime}(\hz(X), Y) K_X \otimes K_X }. 
    \]
  \end{lemma}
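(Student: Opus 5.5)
The plan is to differentiate under the expectation, using the reproducing property and the local Lipschitz bounds of \Cref{hyp:loss}. Fix \(h \in \H\) with \(\|h\|_\H \le 1\) and write \(B = \kappa(\|\hz\|_\H + 1)\). By \Cref{hyp:kernel} and the reproducing property, \(|g(x)| = |\dprod{g}{K_x}| \le \kappa \|g\|_\H\) for all \(g \in \H\) and \(x \in \X\). Hence \(\hz(X)\) and \((\hz + t h)(X)\) lie in \([-B, B]\) for all \(|t| \le 1\). This uniform boundedness is what lets us work with the single Lipschitz coefficient \(\phi_B(Y)\).

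\emph{First derivative.} By a first-order Taylor expansion with integral remainder in the first variable,
\[
\ell((\hz+h)(X),Y) - \ell(\hz(X),Y) - \ell'(\hz(X),Y)\, h(X) = \int_0^1 \big(\ell'(\hz(X)+t h(X),Y) - \ell'(\hz(X),Y)\big) h(X)\, \d t .
\]
The right-hand side is bounded in absolute value by \(\phi_B(Y) h(X)^2 \le \phi_B(Y)\kappa^2\|h\|_\H^2\). By \Cref{hyp:distribution}, \(\E{\phi_B(Y)} < \infty\), so after taking expectations the remainder is \(O(\|h\|_\H^2)\).

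We also need \(\ell'(\hz(X),Y) K_X\) to be Bochner integrable, so that \(\E{\ell'(\hz(X),Y) h(X)} = \dprod{\E{\ell'(\hz(X),Y) K_X}}{h}\). This follows from \(|\ell'(\hz(X),Y)| \le |\ell'(0,Y)| + \phi_B(Y) B\), which is integrable by \Cref{hyp:distribution}, together with \(\|K_X\|_\H \le \kappa\). The same estimate shows that \(L\) is finite on the ball. These two facts give Fréchet differentiability at \(\hz\) with the stated gradient. Note that differentiability at \(\hz\) alone would not suffice for a Hessian: we need the gradient to exist at every \(g\) in a neighbourhood. The argument above works verbatim at any \(g\) with \(\|g - \hz\|_\H \le 1\), with the same \(B\) after enlarging it to \(\kappa(\|\hz\|_\H+2)\). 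So \(\nabla \L(g) = \E{\ell'(g(X),Y) K_X}\) near \(\hz\).

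\emph{Second derivative.} We then show that \(g \mapsto \nabla \L(g)\) is Fréchet differentiable at \(\hz\) with derivative \(\E{\ell''(\hz(X),Y) K_X\otimes K_X}\). The difference to control is
\[
\nabla \L(\hz+h) - \nabla\L(\hz) - \E{\ell''(\hz(X),Y) h(X) K_X} = \E{\int_0^1 \big(\ell''(\hz(X)+t h(X),Y) - \ell''(\hz(X),Y)\big)\d t\, h(X) K_X}.
\]
Its \(\H\)-norm is at most \(\E{\phi_B(Y)}\kappa^3\|h\|_\H^2 = o(\|h\|_\H)\). The operator \(\E{\ell''(\hz(X),Y) K_X\otimes K_X}\) is a well-defined Bochner integral in the space of bounded (indeed Hilbert–Schmidt) operators, since \(\|K_X\otimes K_X\| \le \kappa^2\) and \(\ell''(\hz(X),Y)\) is integrable by the same Lipschitz-plus-moment argument. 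Moreover \((K_X\otimes K_X) h = h(X) K_X\), so this operator applied to \(h\) is exactly the linear term above.

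The main obstacle is the measurability and Bochner-integrability bookkeeping: justifying that expectations commute with inner products and operator evaluation, and that the mean-value remainders are measurable. Separability of \(\H\) (from \Cref{hyp:kernel}) and continuity of \(x\mapsto K_x\) should handle this via Pettis' theorem. Everything else is the two Taylor estimates above.
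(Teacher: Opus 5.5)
Your proof is correct, but it follows a more explicit route than the paper's. The paper only builds an envelope. For \(\|h\|_{\H}\le 1\) it bounds \(\ell^{(k)}((\hz+h)(X),Y)^4\le 8\bigl(\ell^{(k)}(0,Y)^4+B_1^4\phi(Y)^4\bigr)\) for \(k\in\{0,1,2\}\), so that \(\ell,\ell',\ell''\) are dominated by an \(L^4\) function near \(\hz\). It then appeals to dominated convergence and the Leibniz integral rule, and handles the Hessian ``in the same manner''.

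You instead use the integral form of Taylor's remainder and the Lipschitz continuity of \(\ell'\) and \(\ell''\). This gives explicit remainder bounds of order \(\mathbb{E}[\phi_B(Y)]\kappa^2\|h\|_{\H}^2\) and \(\mathbb{E}[\phi_B(Y)]\kappa^3\|h\|_{\H}^2\). Your route makes explicit three points the paper leaves implicit:
\begin{itemize}
\item Your bounds are uniform in the direction \(h\). That is exactly what Fr\'echet, as opposed to G\^ateaux, differentiability requires in infinite dimension, whereas a Leibniz rule naturally works along lines.
\item You verify the gradient formula on a whole ball around \(\hz\), after enlarging the radius to \(\kappa(\|\hz\|_{\H}+2)\). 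This is needed to define \(\nabla^2 L(\hz)\) as the derivative of \(\nabla L\).
\item You only need first moments of \(\phi_B(Y)\) and \(\ell^{(k)}(0,Y)\).
\end{itemize}
The paper's version is shorter. Its \(L^4\) envelope is also the computation that later supplies the fourth-moment conditions for the subsampling law of large numbers.

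The measurability issues you flag are indeed routine. The Taylor remainders are differences of measurable functions. The map \(x\mapsto K_x\) (resp.\ \(x\mapsto K_x\otimes K_x\)) is continuous into the separable space \(\H\) (resp.\ the Hilbert--Schmidt class), so Pettis' theorem gives Bochner integrability.
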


  \begin{proof}
    Let $B_1 \defeq \kappa (\| \hz \|_\H + 1)$ and $\phi \defeq \phi_{B_1}$.
    By Cauchy-Schwarz and \Cref{hyp:kernel}, for all $h \in \H$ such that $\| h \|_\H \leq 1$,
    \begin{equation*}
    | (\hz + h)(X) | = | \dprod{\hz + h}{K_X} | \leq \| K_X \|_\H \| \hz + h \|_\H \leq B_1.
    \end{equation*}

    Now, by the inequality $(a + b)^4 \leq 8(a^4 + b^4)$ for all $(a, b) \in \R^2$,
    and \Cref{hyp:loss}:
    \begin{align*}
      \ell((\hz + h)(X), Y)^4 &\leq 8 \pa*{ \ell(0, Y)^4 + | \ell((\hz + h)(X), Y) - \ell(0, Y) |^4 } \\
      &\leq 8 \pa*{ \ell(0, Y)^4 + \phi(Y)^4 | (\hz + h)(X) |^4 } \\
      &\leq 8 \pa*{ \ell(0, Y)^4 + B_1^4\phi(Y)^4 }.
    \end{align*}
    Thus, by \Cref{hyp:distribution}, $\L$ is dominated in a neighborhood of $\hz$ by a four times integrable function.
    As the previous set of inequalities remains true for $\ell^{(k)}$ with $k \in \{ 0, 1, 2 \}$, it yields, by the dominated convergence theorem and Leibniz integral rule, that \(L\) is continuous and Fréchet differentiable at $\hz$ with:
    \[
      \nabla \L(\hz) = \E{ \ell^\prime (\hz(X), Y) K_X }.  
    \]
    In the same manner, it comes:
    \[
      \nabla^2\L(\hz) = \E{ \ell^{\prime\prime}(\hz(X), Y) K_X \otimes K_X }. \qedhere 
    \]
  \end{proof}

\subsection{Proof of \Cref{thm:erm_normality}} \label{app:proof_erm_normality}

\begin{theorem}[Consistency of $\hn$] \label{thm:erm_consistency}
	Under \Cref{hyp:kernel,hyp:loss,hyp:distribution}, if one furthermore assumes that $\ln \pconv \lz$, one has
	\begin{equation*}
		\| \hn - \hz \|_\H \pconv 0.
	\end{equation*}
\end{theorem}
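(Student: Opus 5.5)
To prove consistency of the ERM I would use strong convexity of the regularized risk and compare first-order conditions, which avoids any empirical process theory. The plan has three steps: a deterministic inequality, a law of large numbers in $\H$, and control of the random radius.

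First, the deterministic inequality. Since $\Rn$ is $\ln$-strongly convex and $\nabla\Rn(\hn)=0$, strong monotonicity of the gradient gives
\begin{equation*}
\ln \| \hn - \hz \|_\H^2 \le \dprod{\nabla\Rn(\hz) - \nabla\Rn(\hn)}{\hz - \hn} \le \|\nabla \Rn(\hz)\|_\H \|\hn - \hz\|_\H .
\end{equation*}
It follows that $\|\hn-\hz\|_\H \le \ln^{-1}\|\nabla\Rn(\hz)\|_\H$.

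Next, I expand the gradient at $\hz$. Using $\nabla\risk(\hz)=0$, i.e.\ $\lz\hz = -\nabla\L(\hz)$ (by \Cref{lem:leibniz}), one gets
\begin{equation*}
\nabla\Rn(\hz) = \nabla\Ln(\hz) - \nabla\L(\hz) + (\ln-\lz)\hz .
\end{equation*}
The last term tends to $0$ in probability because $\ln \pconv \lz$. The first difference is an average of centered \iid random elements $\xi_i = \ell'(\hz(X_i),Y_i)K_{X_i} - \nabla\L(\hz)$ of the separable Hilbert space $\H$. Their second moments are finite: $\|K_X\|_\H\le\kappa$ by \Cref{hyp:kernel}, and $|\ell'(\hz(X),Y)| \le |\ell'(0,Y)| + \phi_{B}(Y)B$ with $B=\kappa\|\hz\|_\H$ by \Cref{hyp:loss}, which is square integrable by \Cref{hyp:distribution}. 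Orthogonality of the $\xi_i$ in $L^2(\H)$ then gives $\E{\|\frac1n\sum_i\xi_i\|_\H^2} = \frac1n\E{\|\xi_1\|_\H^2}\to 0$, and Chebyshev yields convergence in probability to $0$.

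Finally, $\ln^{-1} \pconv \lz^{-1} < \infty$ by the continuous mapping theorem, since $\lz>0$. Combining the three facts gives $\|\hn-\hz\|_\H \pconv 0$.

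The only delicate point is the first step: it needs strong convexity of $\Rn$, which requires convexity of $\Ln$ and Fréchet differentiability of $\Ln$ on all of $\H$. This holds because $\ell$ is convex and $\mathscr C^2$ in its first argument. If one prefers to avoid gradients of $\Rn$ away from $\hz$, the same bound follows from the subgradient inequality for convex functions plus the quadratic term. Everything else is routine once the moment bound in the second step is in place.
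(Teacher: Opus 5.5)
Your proposal is correct and follows essentially the same route as the paper: both use convexity at the minimizer to reduce the problem to the bound $\|\hn - \hz\|_\H \le C\,\ln^{-1}\|\nabla\Ln(\hz) - \nabla\L(\hz) + (\ln - \lz)\hz\|_\H$, and then conclude with the law of large numbers and $\ln \pconv \lz$. The paper gets this bound with $C = 2$ from the convexity inequality for $\ell$ at $\hz$, the exact expansion of the penalty, and $\Rn(\hn) \le \Rn(\hz)$ (precisely the alternative you mention at the end), whereas your strong-monotonicity step gives $C = 1$; you also make explicit, via the second-moment computation, the Hilbert-space law of large numbers that the paper only cites.
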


Part of the following proof is adapted from Theorem 5.9 of \citep{christmann2008support}.

\begin{proof}
	As $\ell$ is convex, one has for all $(\bm x, y) \in \Z$,
	\begin{equation*}
		\ell^\prime(\hz(\bm x), y)(\hn(\bm x) - \hz(\bm x)) \leq \ell(\hn(\bm x), y) - \ell(\hz(\bm x), y),
	\end{equation*}
	and
	\begin{equation*}
		\dprod{\hn - \hz}{\nabla \Ln(\hz)} \leq \Ln(\hn) - \Ln(\hz),
	\end{equation*}
	by summing under the empirical distribution.
	Moreover,
	\begin{equation*}
		\ln \dprod{\hn - \hz}{\hz} + \frac{\ln}{2} \| \hn - \hz \|_\H^2 = \frac{\ln}{2} \| \hn \|_\H^2 - \frac{\ln}{2} \| \hz \|_\H^2.
	\end{equation*}
	Hence,
	\begin{align*}
		\frac{\ln}{2} \| \hn - \hz \|_\H^2 + \dprod{\hn - \hz}{\nabla \Ln(\hz) + \ln \hz} \leq \Rn(\hn) - \Rn(\hz) \leq 0,
	\end{align*}
	since $\hn$ is the minimizer of $\Rn$.
	However, $\lz \hz = -\nabla L(\hz)$, thus,
	\begin{equation*}
		\frac{\ln}{2} \| \hn - \hz \|_\H^2 \leq \dprod{\hn - \hz}{\frac{\ln}{\lz} \nabla \L(\hz) - \nabla \Ln(\hz)}.
	\end{equation*}
	Furthermore, by Cauchy-Schwarz,
	\begin{align*}
		\| \hn - \hz \|_\H &\leq \frac{2}{\ln} \| \nabla \Ln(\hz) - \frac{\ln}{\lz}\nabla \L(\hz) \|_\H \\
		&\leq \frac{2}{\ln} \| \nabla \Ln(\hz) - \nabla \L(\hz) \|_\H + \frac{2}{\lz\ln} \av*{ \ln - \lz } \| \nabla\L(\hz )\|_\H,
	\end{align*}
	as $\lz > 0$ and $\ln > 0$.
	Finally, $\ln \pconv \lz$ and $\nabla \Ln(\hz) \pconv \nabla \L(\hz)$, by the law of large numbers, which shows that $\| \hn - \hz \|_\H \pconv 0$.
\end{proof}

\begin{lemma} \label{lem:erm_normality_1}
	Under \Cref{hyp:kernel,hyp:loss,hyp:distribution} one has,
	\begin{equation*}
		\sqrt{n} (\nabla \Ln(\hz) - \nabla\L(\hz)) \dconv \GP(0, \Sigma),
	\end{equation*}
	where $\Sigma \defeq \E{ \ell^\prime(\hz(X), Y)^2 K_X \otimes K_X } - \nabla\L(\hz) \otimes \nabla\L(\hz)$.
\end{lemma}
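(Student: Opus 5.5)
This is a central limit theorem for i.i.d.\ random elements of the separable Hilbert space \(\H\). Set \(\xi_i \defeq \ell^\prime(\hz(X_i), Y_i) K_{X_i} \in \H\), so that \(\nabla \Ln(\hz) = \frac1n \sum_{i=1}^n \xi_i\). The \(\xi_i\) are i.i.d.\ copies of \(\xi \defeq \ell^\prime(\hz(X), Y) K_X\), and \(\E{\xi} = \nabla \L(\hz)\) by \Cref{lem:leibniz}.

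The first step is to check the second moment. Since \(\|K_X\|_\H = \sqrt{k(X,X)} \le \kappa\) by \Cref{hyp:kernel}, we have \(|\hz(X)| \le B \defeq \kappa\|\hz\|_\H\). \Cref{hyp:loss} then gives \(|\ell^\prime(\hz(X),Y)| \le |\ell^\prime(0,Y)| + B\phi_B(Y)\). Hence \(\E{\|\xi\|_\H^2} \le 2\kappa^2\bigl(\E{\ell^\prime(0,Y)^2} + B^2 \E{\phi_B(Y)^2}\bigr) < \infty\) by \Cref{hyp:distribution}; fourth moments are in fact finite. It follows that \(\xi\) is Bochner integrable and that its covariance operator \(\E{(\xi - \E\xi)\otimes(\xi-\E\xi)} = \E{\ell^\prime(\hz(X),Y)^2 K_X\otimes K_X} - \nabla\L(\hz)\otimes\nabla\L(\hz) = \Sigma\) is well defined and trace class, with \(\Tr(\Sigma) \le \E{\|\xi\|_\H^2}\).

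Next we prove the result in the way the paper describes, via finite-dimensional marginals and asymptotic finite-dimensionality \citep[Theorem 7.7.5]{hsing2015theoretical}, \citep[Chapter 1.8]{vandervaart2023weak}. Write \(S_n \defeq \sqrt n(\nabla\Ln(\hz) - \nabla\L(\hz)) = n^{-1/2}\sum_{i=1}^n (\xi_i - \E\xi)\).

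\emph{Finite-dimensional marginals.} Fix \(h_1,\dots,h_m \in \H\). The vectors \((\dprod{\xi_i - \E\xi}{h_j})_{j\le m}\) are i.i.d., centered, with covariance matrix \((\dprod{\Sigma h_j}{h_l})_{j,l}\) and finite second moments. The multivariate CLT then gives convergence of \((\dprod{S_n}{h_j})_j\) to \(\mathcal N(0, (\dprod{\Sigma h_j}{h_l}))\), which is the marginal of \(\GP(0,\Sigma)\).

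\emph{Asymptotic finite-dimensionality (tightness).} Let \((e_j)\) be the orthonormal basis supplied by separability (\Cref{hyp:kernel}). For every \(\varepsilon>0\) we need \(\lim_{N}\limsup_n \P{\sum_{j>N}\dprod{S_n}{e_j}^2 > \varepsilon} = 0\). By independence and centering, Markov's inequality gives
\[
\P{\textstyle\sum_{j>N}\dprod{S_n}{e_j}^2 > \varepsilon} \le \varepsilon^{-1}\E{\textstyle\sum_{j>N}\dprod{S_n}{e_j}^2} = \varepsilon^{-1}\textstyle\sum_{j>N}\dprod{\Sigma e_j}{e_j},
\]
and the right-hand side does not depend on \(n\) and tends to \(0\) as \(N\to\infty\) because \(\Tr(\Sigma)<\infty\). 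Together with the marginal convergence, this tightness yields \(S_n \dconv \GP(0,\Sigma)\).

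The only genuinely delicate point is the tightness step, that is, verifying that the tail of the trace of \(\Sigma\) controls the remainder uniformly in \(n\). That verification reduces to the exact identity \(\E{\dprod{S_n}{e_j}^2} = \dprod{\Sigma e_j}{e_j}\) together with the finiteness of \(\E{\|\xi\|_\H^2}\) established in the first step, so no further obstacle is expected. Alternatively, one could simply cite the Hilbert-space CLT for i.i.d.\ square-integrable elements \citep{hsing2015theoretical}.
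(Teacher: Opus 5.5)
Your proposal is correct and follows essentially the same route as the paper. The paper also proves convergence of the projections $\dprod{S_n}{f}$ by the classical CLT (one-dimensional for each fixed $f$, which is equivalent to your multivariate version by linearity), then establishes asymptotic finite-dimensionality through Markov's inequality and the $n$-independent tail $\sum_{j>J}\dprod{\Sigma e_j}{e_j}$, and concludes with \citep[][Theorem 7.7.5]{hsing2015theoretical}.
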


\begin{proof}
	Let $f \in \H$.
	Define the real-valued random variable $S_n \defeq \dprod{\nabla \Ln(\hz) - \nabla\L(\hz)}{f}$, then,
	\begin{align*}
		S_n &= \frac{1}{n} \sum_{i = 1}^n \ell^\prime(\hz(X_i), Y_i) \dprod{K_{X_i}}{f} - \dprod{\nabla\L(\hz)}{f} \\
		&= \frac{1}{n} \sum_{i = 1}^n \ell^\prime(\hz(X_i), Y_i) f(X_i) - \E{ \ell^\prime (\hz(X), Y) f(X) } \\
		&= \frac{1}{n} \sum_{i = 1}^n \mathcal{L}^\prime(Z_i, \hz, f) - \E{ \mathcal{L}^\prime(Z, \hz, f) },
	\end{align*}
	where $\mathcal{L}^\prime(Z, h, f) \defeq \ell^\prime(h(X), Y)f(X)$.
	$S_n$ is a sum of $n$ i.i.d real random variables with mean,
	\begin{align*}
		\E{ \mathcal{L}^\prime (Z_1, \hz, f) - \E{ \mathcal{L}^\prime (Z, \hz, f) } } = \E{ \mathcal{L}^\prime (Z_1, \hz, f) } - \E{ \mathcal{L}^\prime (Z, \hz, f) } = 0,
	\end{align*}
	and variance,
	\begin{align*}
		\V{ \mathcal{L}^\prime(Z_1, \hz, f) } &= \E{ \pa*{ \mathcal{L}^\prime (Z_1, \hz, f) - \E{ \mathcal{L}^\prime (Z_1, \hz, f) } }^2 } \\
		&= \E{ \mathcal{L}^\prime(Z_1, \hz, f)^2 } - \E{ \mathcal{L}^\prime(Z_1, \hz, f) }^2 \\
		&= \dprod{\Sigma f}{f}.
	\end{align*}
	where $\Sigma \defeq \E{ \ell^\prime(\hz(X), Y)^2 K_X \otimes K_X } - \nabla\L(\hz) \otimes \nabla\L(\hz)$.
	Thus, by the one dimensional central limit theorem,
	\begin{equation*}
		\sqrt{n} \dprod{\nabla\Ln(\hz) - \nabla\L(\hz)}{f} \dconv \mathcal{N}(0, \dprod{\Sigma f}{f}).
	\end{equation*}
	There is left to show that $\sqrt{n} (\nabla\Ln(\hz) - \nabla\L(\hz))$ is asymptotically finite dimensional.
	Let $\varepsilon, \delta > 0$. Since $\E{ \| \ell^\prime(\hz(X), Y) K_X \|^2_\H } < \infty$ by \Cref{hyp:distribution}, there exists $J \in \N$ such that $\E{\sum_{j > J} \ell^\prime(\hz(X), Y)^2 \dprod{K_X}{e_j}^2 } < \delta \varepsilon$.
	Hence by Markov's inequality,
	\begin{equation*}
		\P{ \sum_{j > J} \dprod{\sqrt{n} (\nabla\Ln(\hz) - \nabla\L(\hz))}{e_j}^2 \geq \varepsilon } \leq \frac{n}{\varepsilon} \E{ \sum_{j > J} \dprod{\nabla\Ln(\hz) - \nabla\L(\hz)}{e_j}^2 }.
	\end{equation*}
	Moreover, considering that the random variables in $\nabla\Ln(\hz)$ are \iid\ with mean $\nabla\L(\hz)$ one has,
	\begin{align*}
		\E{ \sum_{j > J} \dprod{\nabla\Ln(\hz) - \nabla\L(\hz)}{e_j}^2 } = \frac{1}{n} \E{ \sum_{j > J} \ell^\prime(\hz(X), Y)^2 \dprod{K_{X}}{e_j}^2 } \\
		- \frac{1}{n}\sum_{j > J} \dprod{\nabla\L(\hz)}{e_j}^2.
	\end{align*}
	Hence, by definition of $J$,
	\begin{align*}
		\P{ \sum_{j > J} \dprod{\sqrt{n} (\nabla\Ln(\hz) - \nabla\L(\hz))}{e_j}^2 \geq \varepsilon } &\leq \frac{1}{\varepsilon} \E{ \sum_{j > J} \ell^\prime(\hz(X), Y)^2 \dprod{K_{X}}{e_j}^2 } \\
		&\leq \delta.
	\end{align*}
	Thus, $\sqrt{n} (\nabla\Ln(\hz) - \nabla\L(\hz))$ is asymptotically finite-dimensional, in the sense that for all $\varepsilon, \delta > 0$, there exists $J \in \N$ such that,
	\begin{equation*}
		\limsup_{n \in \N}\ \P{ \sum_{j > J} \dprod{\sqrt{n} (\nabla\Ln(\hz) - \nabla\L(\hz))}{e_j}^2 \geq \varepsilon } \leq \delta.
	\end{equation*}
	Applying Theorem 7.7.5 of \citep{hsing2015theoretical} yields
	\begin{equation*}
		\sqrt{n} (\nabla\Ln(\hz) - \nabla\L(\hz)) \dconv \GP(0, \Sigma). \qedhere
	\end{equation*}
\end{proof}

\begin{lemma} \label{lem:erm_normality_2}
	Under \Cref{hyp:kernel,hyp:loss,hyp:distribution,hyp:ln_speed},
	\begin{equation*}
		\sqrt{n} \int_0^1 \nabla^2\Rn(\bar h_{n,t}) (\hn - \hz) \,\d t - \sqrt{n} \nabla^2\risk (\hz) (\hn - \hz) \pconv 0,
	\end{equation*}
	with $\bar h_{n,t} = \hz + t(\hn - \hz)$ for all $t \in [0, 1]$.
\end{lemma}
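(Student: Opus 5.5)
We write the difference as \(\sqrt{n}\,\Delta_n (\hn - \hz)\), where
\[
\Delta_n \defeq \int_0^1 \nabla^2\Rn(\bar h_{n,t}) \,\d t - \nabla^2\risk(\hz),
\]
a bounded operator on \(\H\). The plan is to show that \(\|\Delta_n\|_{\mathrm{op}} \pconv 0\) and that \(\sqrt{n}(\hn - \hz) = O_\Pr(1)\) in \(\H\). The product then tends to \(0\) in probability.

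\textbf{Step 1: split \(\Delta_n\).} We decompose
\[
\Delta_n = (\ln - \lz)\,\mathrm{id}_\H + \int_0^1 \pa*{\nabla^2\Ln(\bar h_{n,t}) - \nabla^2\Ln(\hz)} \d t + \pa*{\nabla^2\Ln(\hz) - \nabla^2\L(\hz)} .
\]
The first term vanishes in probability by \Cref{hyp:ln_speed}.

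For the second term, \Cref{thm:erm_consistency} gives \(\|\hn - \hz\|_\H \le 1\) with probability tending to one. On that event, all \(\bar h_{n,t}\) satisfy \(|\bar h_{n,t}(X_i)| \le B_1 = \kappa(\|\hz\|_\H + 1)\). Using \(\|K_x \otimes K_x\|_{\mathrm{op}} \le \kappa^2\) and the Lipschitz property of \(\ell''\) in \Cref{hyp:loss}, we get
\[
\no*{\nabla^2\Ln(\bar h_{n,t}) - \nabla^2\Ln(\hz)}_{\mathrm{op}} \le \kappa^3 \, t\,\|\hn - \hz\|_\H \, \frac1n \sum_{i=1}^n \phi_{B_1}(Y_i).
\]
By the law of large numbers and \Cref{hyp:distribution}, the empirical mean of \(\phi_{B_1}(Y_i)\) is \(O_\Pr(1\)), so this term is \(o_\Pr(1)\).

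For the third term, we use a law of large numbers in the separable Hilbert space of Hilbert--Schmidt operators. The summands \(\ell''(\hz(X_i),Y_i) K_{X_i} \otimes K_{X_i}\) are i.i.d. with Hilbert--Schmidt norm at most \(\kappa^2 |\ell''(\hz(X),Y)|\). This bound is square integrable by \Cref{hyp:loss,hyp:distribution}, as in \Cref{lem:leibniz}. A direct second-moment computation then gives \(\E{\|\cdot\|_{HS}^2} = O(1/n)\), so the third term is \(O_\Pr(n^{-1/2})\).

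\textbf{Step 2: tightness of \(\sqrt n(\hn-\hz)\).} This step is the main obstacle, because it must not rely on the theorem being proved. We reuse the inequality obtained in the proof of \Cref{thm:erm_consistency}:
\[
\|\hn - \hz\|_\H \le \frac{2}{\ln}\|\nabla\Ln(\hz) - \nabla\L(\hz)\|_\H + \frac{2}{\lz\ln}|\ln - \lz|\,\|\nabla\L(\hz)\|_\H .
\]
We multiply by \(\sqrt n\) and bound each piece.

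\begin{itemize}
\item By \Cref{lem:erm_normality_1}, or simply by the i.i.d. variance bound \(\E{\|\nabla\Ln(\hz)-\nabla\L(\hz)\|_\H^2} \le n^{-1}\E{\ell'(\hz(X),Y)^2 k(X,X)}\) combined with Markov's inequality, the first piece is \(O_\Pr(1)\).
\item By \Cref{hyp:ln_speed}, \(\sqrt n|\ln - \lz| \pconv 0\).
\item Since \(\ln \pconv \lz > 0\), the factor \(1/\ln\) is \(O_\Pr(1)\).
\end{itemize}

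Hence \(\sqrt n\|\hn - \hz\|_\H = O_\Pr(1)\).

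\textbf{Conclusion.} Combining both steps,
\[
\no*{\sqrt n\,\Delta_n(\hn-\hz)}_\H \le \|\Delta_n\|_{\mathrm{op}}\,\sqrt n\|\hn-\hz\|_\H = o_\Pr(1)\,O_\Pr(1) \pconv 0 .
\]
The one point to check carefully is that the integral over \(t\) commutes with the operator-norm bound. This holds because the integrand bound above is uniform in \(t \in [0,1]\).
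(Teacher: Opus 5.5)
Your proof is correct. It shares two ingredients with the paper. The first is your Step~2: the consistency inequality from \Cref{thm:erm_consistency}, combined with \Cref{lem:erm_normality_1} and \Cref{hyp:ln_speed}, gives $\sqrt n\,\|\hn-\hz\|_\H = O_\Pr(1)$. The second is the Lipschitz bound on $\ell''$ along the segment $\bar h_{n,t}$.

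The genuine difference is how you handle $\nabla^2\Rn(\hz)-\nabla^2\risk(\hz)$. The paper fixes a direction $h\in\H$, moves the operator onto $h$ by self-adjointness, and applies the $\H$-valued law of large numbers to $\ell''(\hz(X_i),Y_i)h(X_i)K_{X_i}$. That gives only strong-operator convergence. The resulting bound $\|\hn-\hz\|_\H\,\|(\nabla^2\Rn(\hz)-\nabla^2\risk(\hz))h\|_\H$ is not uniform over the unit ball. So what the paper literally establishes is that the inner product of the remainder with each fixed $h$ tends to $0$ in probability.

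Your law of large numbers in the Hilbert--Schmidt class gives $\|\nabla^2\Ln(\hz)-\nabla^2\L(\hz)\|_{\mathrm{op}}=O_\Pr(n^{-1/2})$, hence convergence of the remainder in $\H$-norm. This is the stronger form, and it is what Slutsky's lemma in $\H$ requires in the proof of \Cref{thm:erm_normality}. Coordinatewise convergence alone does not control the asymptotic finite-dimensionality of the remainder.

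The price of your route is small and routine under \Cref{hyp:kernel,hyp:loss,hyp:distribution}:
\begin{itemize}
\item the Hilbert--Schmidt class is separable, inheriting this from $\H$;
\item $x\mapsto K_x\otimes K_x$ is measurable, by continuity of $k$;
\item $\nabla^2\L(\hz)$ coincides with the Bochner mean in that space.
\end{itemize}
Your argument also carries over directly to \Cref{lem:srm_normality_2}, by applying \Cref{prop:wlln} with $E$ the Hilbert--Schmidt class.
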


\begin{proof}
	Firstly, in the proof of \Cref{thm:erm_consistency} we showed that,
	\begin{equation*}
		\| \hn - \hz \|_\H \leq \frac{2}{\ln} \| \nabla \Ln(\hz) - \nabla \L(\hz) \|_\H + \frac{2}{\lz\ln} \av*{ \ln - \lz } \| \nabla\L(\hz )\|_\H,
	\end{equation*}
	and from \Cref{lem:erm_normality_1} $\sqrt{n}(\nabla\Ln(\hz) - \nabla\L(\hz))$ converges in distribution to a Gaussian process.
	Thus, $\sqrt{n} \| \hn - \hz \|_\H = O_\Pr(1)$, by the fact that $\sqrt{n}(\ln - \lz)$ converges to 0 in probability.

	Then, let $h \in \H$, $t \in [0, 1]$ and denote $\bar h_{n,t} = \hz + t(\hn - \hz)$.
	By consistency of $\hn$, one has that $\| \bar h_{n,t} - \hz \| \pconv 0$.
	On the one hand, by the expression of $\nabla^2\Rn(\cdot)$ and triangle inequality,
	\begin{align}
		&\sqrt{n} | \dprod{\nabla^2\Rn(\bar h_{n,t}) (\hn - \hz)}{h} - \dprod{\nabla^2\Rn (\hz) (\hn - \hz)}{h} | \notag \\
		&\quad = \av*{ \frac{1}{\sqrt{n}} \sum_{i = 1}^n (\ell^{\prime\prime}(\bar h_{n,t}(X_i), Y_i) - \ell^{\prime\prime}(\hz(X_i), Y_i)) \dprod{K_{X_i} \otimes K_{X_i} (\hn - \hz)}{h} } \notag \\
		&\quad\leq \frac{1}{\sqrt{n}} \sum_{i = 1}^n \av*{ (\ell^{\prime\prime}(\bar h_{n,t}(X_i), Y_i) - \ell^{\prime\prime}(\hz(X_i), Y_i)) \dprod{K_{X_i}}{\hn - \hz} \dprod{K_{X_i}}{h} }. \label{eq:lem_erm_normality_2_part1}
	\end{align}
	Moreover, using \Cref{hyp:loss} for all $i \in \{ 1, \dots, n \}$,
	\begin{align*}
			\av{ \ell^{\prime\prime}(\bar h_{n,t}(X_i), Y_i) - \ell^{\prime\prime}(\hz(X_i), Y_i) } \mathbbm{1}_{\{ \| \bar h_{n,t} - \hz \|_\H < 1 \}} &\leq \phi(Y_i) \av{ \dprod{K_{X_i}}{\bar h_{n,t} - \hz} } \\
			&\leq \phi(Y_i) \av{ \dprod{K_{X_i}}{\hn - \hz} },
		\end{align*}
	as $t \in [0, 1]$.
	By Cauchy-Schwarz and \Cref{hyp:kernel}, one has
	\begin{equation*}
			\av{ \dprod{K_{X_i}}{\hn - \hz} } \leq \| \hn - \hz \|_\H \| K_{X_i} \|_\H \leq \kappa \| \hn - \hz \|_\H,
		\end{equation*}
	and $\av{ \dprod{K_{X_i}}{h} } \leq \kappa \| h \|_\H$.
	Thus, applying \Cref{lem:technical_pconv_2} with the random variables being \eqref{eq:lem_erm_normality_2_part1} and the events $E_n = \{ \| \bar h_n - \hz \|_\H \geq 1 \}$ yields
	\begin{align}
			&\sqrt{n} \int_0^1 | \dprod{\nabla^2\Rn(\bar h_{n,t}) (\hn - \hz)}{h} - \dprod{\nabla^2\Rn (\hz) (\hn - \hz)}{h} | \,\d t \notag \\
			&\quad \leq \sqrt{n} \kappa^3 \| h \|_\H \| \hn - \hz \|_\H^2 \frac{1}{n} \sum_{i = 1}^n \phi(Y_i) + o_\Pr(1) \pconv 0, \label{eq:lem_erm_normality_2_part2}
		\end{align}
	since $n^{-1} \sum_{i = 1}^n \phi(Y_i) = \Ex \phi(Y) + o_\Pr(1) = O_\Pr(1)$, $\sqrt{n} \| \hn - \hz \|_\H = O_\Pr(1)$ and $\| \hn - \hz \|_\H \pconv 0$ by \Cref{thm:erm_consistency}.
	On the other hand, since $\nabla^2\Rn(\hz)$ and $\nabla^2\risk(\hz)$ are self-adjoint operators, one has
	\begin{align*}
		&| \dprod{\nabla^2\Rn(\hz) (\hn - \hz)}{h} - \dprod{\nabla^2\risk(\hz) (\hn - \hz)}{h} | \\
		&\quad = | \dprod{\hn - \hz}{\nabla^2\Rn(\hz) h} - \dprod{\hn - \hz}{\nabla^2\risk(\hz) h} | \\
		&\quad = | \dprod{\hn - \hz}{\nabla^2\Rn(\hz) h - \nabla^2\risk(\hz) h} | \\
		&\quad \leq \| \hn - \hz \|_\H \| \nabla^2\Rn(\hz) h - \nabla^2\risk(\hz) h \|_\H,
	\end{align*}
	by Cauchy-Schwarz inequality.
	Furthermore
	\begin{align*}
		\nabla^2\Rn(\hz) h &= \frac{1}{n} \sum_{i = 1}^n \ell^{\prime\prime}(\hz(X_i), Y_i) h(X_i) K_{X_i} + \ln h \\
		&\quad  \pconv \E{ \ell^{\prime\prime}(\hz(X), Y) h(X) K_X } + \lz h = \nabla^2\risk(\hz) h,
	\end{align*}
	by the law of large numbers and that $\ln \pconv \lz$.
	Whence
	\begin{align}
		&\sqrt{n} \int_0^1 | \dprod{\nabla^2\Rn(\hz) (\hn - \hz)}{h} - \dprod{\nabla^2\risk(\hz) (\hn - \hz)}{h} | \,\d t \notag \\
		&\quad \leq \sqrt{n} \| \hn - \hz \|_\H \| \nabla^2\Rn(\hz) h - \nabla^2\risk(\hz) h \|_\H \pconv 0. \label{eq:lem_erm_normality_2_part3}
	\end{align}
	Finally, combining \eqref{eq:lem_erm_normality_2_part2} and \eqref{eq:lem_erm_normality_2_part3} yields the wanted result
	\begin{equation*}
		\sqrt{n} \int_0^1 \dprod{\nabla^2\Rn(\bar h_{n,t}) (\hn - \hz)}{h} \,\d t - \sqrt{n} \dprod{\nabla^2\risk (\hz) (\hn - \hz)}{h} \pconv 0. \qedhere
	\end{equation*}
\end{proof}

We can now prove the main theorem using the two above lemmas.

\begin{proof}[Proof of \Cref{thm:erm_normality}]
	The fundamental theorem of calculus yields
	\begin{equation*}
		0 = \nabla\Rn(\hn) = \nabla\Rn(\hz) + \int_{0}^{1} \nabla^2\Rn(\bar h_{n,t})(\hn - \hz)\,\d t,
	\end{equation*}
	with $\bar h_{n,t} = \hz + t (\hn - \hz)$ for $t \in [0, 1]$.
	By \Cref{lem:erm_normality_2}, one has
	\begin{equation*}
		- \sqrt{n} \nabla\Rn(\hz) = \sqrt{n} \nabla^2\risk(\hz) (\hn - \hz) + o_\Pr(1).
	\end{equation*}
	Notice that
	\begin{align*}
		\nabla\Rn(\hz) = (\nabla\Ln(\hz) - \nabla\L(\hz)) + (\ln - \lz) \hz.
	\end{align*}
	However, $\sqrt{n}(\ln - \lz) \pconv 0$ by \Cref{hyp:ln_speed}, hence
	\begin{equation*}
		\sqrt{n} \nabla\Rn(\hz) = \sqrt{n} (\nabla\Ln(\hz) - \nabla\L(\hz)) + \sqrt{n}(\ln - \lz) \hz \dconv \GP(0, \Sigma),
	\end{equation*}
	by Slutsky's lemma and \Cref{lem:erm_normality_1}, which concludes the proof.
\end{proof}

\subsection{Subsampling intermediate results}

The two following short lemma deal of general results for convergence in probability.

\begin{lemma} \label{lem:technical_pconv}
	Let $(X_n)_{n \in \N}$ be a sequence of real valued random variables.
	Suppose that for all $\varepsilon > 0$, there exists a sequence $(Y_n^\varepsilon)_{n \in \N}$ such that for all $n \in \N$,
	\begin{equation*}
		0 \leq X_n \leq Y_n^\varepsilon\ \text{a.s.} \qquad \text{and} \qquad Y_n^\varepsilon \pconv \varepsilon.
	\end{equation*}
	Then $(X_n)_{n \in \N}$ converges in probability to 0.
\end{lemma}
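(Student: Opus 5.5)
The argument works directly from the definition of convergence in probability, comparing \(X_n\) with the dominating sequence at a level of our choosing. Fix a target threshold \(\eta > 0\) and a tolerance \(\delta > 0\); we want \(\P{X_n > \eta} \le \delta\) for all \(n\) large enough, which is exactly \(X_n \pconv 0\), since \(X_n \geq 0\) almost surely reduces \(\av{X_n} > \eta\) to \(X_n > \eta\).

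The key choice is to apply the hypothesis with \(\varepsilon = \eta/2\), which yields a sequence \((Y_n^{\eta/2})_{n\in\N}\) with \(0 \le X_n \le Y_n^{\eta/2}\) almost surely and \(Y_n^{\eta/2} \pconv \eta/2\). The almost sure domination gives the event inclusion
\[
\{X_n > \eta\} \subset \{Y_n^{\eta/2} > \eta\} \subset \cb*{\av{Y_n^{\eta/2} - \eta/2} > \eta/2},
\]
up to a null set. Hence \(\P{X_n > \eta} \le \P{\av{Y_n^{\eta/2} - \eta/2} > \eta/2}\).

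By the convergence \(Y_n^{\eta/2} \pconv \eta/2\), the right-hand side tends to \(0\), so it is at most \(\delta\) for all \(n\) beyond some \(N\). Since \(\eta\) and \(\delta\) are arbitrary, \(\P{\av{X_n} > \eta} \to 0\) for every \(\eta > 0\), which is the claim.

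There is no real obstacle here. The only point needing care is to pick \(\varepsilon\) strictly smaller than the threshold \(\eta\), leaving a positive gap between the limit \(\varepsilon\) of \(Y_n^\varepsilon\) and \(\eta\) in which the convergence in probability can operate. Choosing \(\varepsilon = \eta\) would fail, since nothing would then control \(\P{Y_n^\eta > \eta}\).
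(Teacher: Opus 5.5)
Your proof is correct and is essentially the paper's argument. The paper fixes a threshold $\delta > 0$, takes any $0 < \varepsilon < \delta$, and bounds $\P{X_n \geq \delta} \leq \P{Y_n^\varepsilon \geq \delta} \leq \P{|Y_n^\varepsilon - \varepsilon| \geq \delta - \varepsilon} \to 0$; this is your chain of inclusions with the particular choice $\varepsilon = \eta/2$.
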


\begin{proof}
	Let $\delta > 0$ and $0 < \varepsilon < \delta$.
	\begin{align*}
		\P{ X_n \geq \delta } \leq \P{ Y_n^\varepsilon \geq \delta } &= \P{ Y_n^\varepsilon - \varepsilon \geq \delta - \varepsilon } \leq \P{ | Y_n^\varepsilon - \varepsilon | \geq \delta - \varepsilon } \conv 0. \qedhere
	\end{align*}
\end{proof}

\begin{lemma} \label{lem:technical_pconv_2}
	Let $(A_n)_{n \in \N}$ be a sequence of non negative random variables and  $(E_n)_{n \in \N}$ be a sequence of events such that $\P{E_n} \conv 0$.
	Then $A_n \mathbbm{1}_{E_n} \pconv 0$.
\end{lemma}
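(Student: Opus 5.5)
The statement is elementary, so the plan is to prove convergence in probability directly from its definition. Fix $\delta>0$; we must show $\P{A_n \mathbbm{1}_{E_n} > \delta} \to 0$. The key observation is an inclusion of events. Since $A_n \ge 0$, the random variable $A_n \mathbbm{1}_{E_n}$ vanishes on the complement $E_n^c$. Hence whenever $A_n \mathbbm{1}_{E_n} > \delta$ (or even merely $\neq 0$), the event $E_n$ must occur, that is,
\[
\cb*{A_n \mathbbm{1}_{E_n} > \delta} \subset E_n .
\]

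Monotonicity of probability then gives $\P{A_n \mathbbm{1}_{E_n} > \delta} \le \P{E_n}$. The right-hand side tends to $0$ by hypothesis, which yields the claim for every $\delta>0$.

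There is no genuine obstacle here; the only point to check is that no integrability or tightness of $A_n$ is needed. This holds because the bound uses only the support of the indicator, not the size of $A_n$. Nonnegativity of $A_n$ is not even essential, since $\av{A_n \mathbbm{1}_{E_n}} > \delta$ still forces $E_n$. This is exactly why the lemma can be applied in \Cref{lem:erm_normality_2} to the possibly unbounded remainder on the event $\{\|\bar h_{n,t}-\hz\|_\H \ge 1\}$.
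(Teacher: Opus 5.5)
Your proposal is correct and takes the same route as the paper: both use the inclusion $\{A_n \mathbbm{1}_{E_n} > \varepsilon\} \subset E_n$ and monotonicity to get $\P{A_n \mathbbm{1}_{E_n} > \varepsilon} \le \P{E_n} \to 0$. Your side remark is also right: nonnegativity of $A_n$ is not needed if you bound $\P{|A_n \mathbbm{1}_{E_n}| > \varepsilon}$ in the same way.
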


\begin{proof}
	Let $\varepsilon > 0$,
	\begin{equation*}
		\P{ A_n \mathbbm{1}_{E_n} > \varepsilon } \leq \P{ \mathbbm{1}_{E_n} = 1 } = \P{E_n} \conv 0. \qedhere
	\end{equation*}
\end{proof}

The hereby theorem and its following proof are a piece of Theorem 3.5 in \citep{hall1980martingale} revised with the conditional Lindeberg-Feller assumption instead of the usual one.

\begin{theorem}[Conditional Lindeberg-Feller] \label{thm:technical_conditional_dconv}
	Let $\{ \eta_{n,j} : n > 0, 1 \leq j \leq k_n \}$ be a real martingale difference array adapted to a filtration array $(\Fnj)_{n > 0, 0 \leq j \leq k_n}$, i.e.\ for all $n > 0$ and $1 \leq j \leq k_n$, $\eta_{n, j}$ is $\Fnj$-measurable and $\E{ \eta_{n, j} \middle| \Fnjm} = 0$.
	Let $\sigma^2 > 0$ such that,
	\begin{gather}
		\sum_{j = 1}^{k_n} \E{ \eta_{n,j}^2 \middle| \Fnjm } \pconv \sigma^2. \label{eq:thm_technical_conditional_dconv_1}
	\end{gather}
	Moreover assume that for all $\varepsilon > 0$,
	\begin{equation}
		\sum_{j = 1}^{k_n} \E{ \eta_{n,j}^2 \mathbbm{1}_{|\eta_{n,j}| > \varepsilon} \middle| \Fnjm } \pconv 0. \label{eq:thm_technical_conditional_dconv_2}
	\end{equation}
	Then for all $t \in \R$, on has,
	\begin{equation}
		\prod_{j = 1}^{k_n} \E{ e^{it \eta_{n,j}} \middle| \Fnjm } \pconv e^{-\frac{1}{2}t^2 \sigma^2}.
	\end{equation}
\end{theorem}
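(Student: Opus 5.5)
The plan is to follow Hall and Heyde's proof of Theorem 3.5 in \citep{hall1980martingale}. We first truncate the summands, then expand each conditional characteristic function to second order, and finally compare the product of these conditional characteristic functions with the exponential of a sum. The conditional Lindeberg condition \eqref{eq:thm_technical_conditional_dconv_2} will be used to control the remainders.

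\emph{Step 1: reduction to uniformly small conditional variances.} Fix $t \in \R$. Set $\sigma_{n,j}^2 \defeq \E{\eta_{n,j}^2 \middle| \Fnjm}$. From \eqref{eq:thm_technical_conditional_dconv_2} and the bound $\sigma_{n,j}^2 \le \varepsilon^2 + \E{\eta_{n,j}^2 \mathbbm{1}_{|\eta_{n,j}|>\varepsilon} \middle| \Fnjm}$, we get $\max_j \sigma_{n,j}^2 \pconv 0$. Next we stop the array: let $\tau_n$ be the largest $m$ such that $\sum_{j\le m}\sigma_{n,j}^2 \le 2\sigma^2$, which is a predictable stopping rule since $\sigma_{n,j}^2$ is $\Fnjm$-measurable. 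We then replace $\eta_{n,j}$ by $\eta_{n,j}\mathbbm{1}_{j\le \tau_n}$. By \eqref{eq:thm_technical_conditional_dconv_1}, $\P{\tau_n < k_n} \to 0$, so the products of conditional characteristic functions for the original and stopped arrays coincide outside an event of vanishing probability. This means we may assume $\sum_j \sigma_{n,j}^2 \le 2\sigma^2$ almost surely, with convergence in probability to $\sigma^2$ still holding.

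\emph{Step 2: Taylor expansion.} We use $|e^{ix} - 1 - ix + x^2/2| \le \min(|x|^2, |x|^3/6)$ together with the martingale difference property $\E{\eta_{n,j} \middle| \Fnjm}=0$. This gives
\[
\E{e^{it\eta_{n,j}} \middle| \Fnjm} = 1 - \tfrac{t^2}{2}\sigma_{n,j}^2 + r_{n,j},
\]
where, splitting according to $\{|\eta_{n,j}|\le\varepsilon\}$,
\[
|r_{n,j}| \le \tfrac{|t|^3\varepsilon}{6}\sigma_{n,j}^2 + t^2\E{\eta_{n,j}^2\mathbbm{1}_{|\eta_{n,j}|>\varepsilon} \middle| \Fnjm}.
\]
Summing over $j$, we obtain $\sum_j |r_{n,j}| \le C\varepsilon + o_\Pr(1)$ by Step 1 and \eqref{eq:thm_technical_conditional_dconv_2}. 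Since $\varepsilon$ is arbitrary, \Cref{lem:technical_pconv} yields $\sum_j|r_{n,j}| \pconv 0$.

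\emph{Step 3: product comparison.} We use the elementary inequality $|\prod_j a_j - \prod_j b_j| \le \sum_j |a_j - b_j|$ for complex numbers of modulus at most $1$. We apply it with $a_j = \E{e^{it\eta_{n,j}} \middle| \Fnjm}$ and $b_j = e^{-t^2\sigma_{n,j}^2/2}$. We have $|a_j - b_j| \le |r_{n,j}| + |1 - \tfrac{t^2}{2}\sigma_{n,j}^2 - e^{-t^2\sigma_{n,j}^2/2}| \le |r_{n,j}| + \tfrac{t^4}{8}\sigma_{n,j}^4$. The last sum is at most $\tfrac{t^4}{8}\max_j\sigma_{n,j}^2 \sum_j \sigma_{n,j}^2 \pconv 0$ by Step 1. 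Moreover $\prod_j b_j = e^{-t^2\sum_j\sigma_{n,j}^2/2} \pconv e^{-t^2\sigma^2/2}$ by \eqref{eq:thm_technical_conditional_dconv_1} and continuity. Combining these gives the claim.

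The main obstacle is the bookkeeping in Step 1. The uniform bound on $\sum_j\sigma_{n,j}^2$ is needed so that the quadratic remainder $\sum_j\sigma_{n,j}^4$ and the cubic part of $r_{n,j}$ stay controlled. The stopping-time truncation must also be checked to preserve both hypotheses and to change the product only on an asymptotically negligible event. If the stopping argument becomes awkward, the fallback is to work directly on the event $\{\sum_j \sigma_{n,j}^2 \le 2\sigma^2\}$, whose probability tends to $1$, and invoke \Cref{lem:technical_pconv_2} on its complement.
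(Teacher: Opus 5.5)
Your proof is correct, but it handles the product of conditional characteristic functions differently from the paper.

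The paper follows Hall--Heyde and Feller through logarithms:
\begin{itemize}
\item it writes $e^{ix}=1+ix-\tfrac12x^2+\tfrac12x^2A(x)$ with $|A(x)|\le\min(|x|/3,2)$;
\item it writes the logarithm of the product as $\sum_j\log(1+z_{n,j})=\sum_jz_{n,j}+r_n$, with $|z_{n,j}|\le\tfrac32t^2\sigma_{n,j}^2$;
\item it bounds $r_n$ through $|\log(1+z)-z|\le|z|^2/(2(1-|z|))$ for $|z|<1$;
\item it must treat separately the indices with $|z_{n,j}|\ge1$, which can only occur on the event $\{\max_j\sigma_{n,j}^2\ge 2/(3t^2)\}$ of vanishing probability.
\end{itemize}

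You replace all of this with the telescoping inequality $|\prod_ja_j-\prod_jb_j|\le\sum_j|a_j-b_j|$ on the closed unit disc, comparing each factor with $e^{-t^2\sigma_{n,j}^2/2}$. This avoids complex logarithms, branch choices and possibly vanishing factors, and it is the shorter textbook route. The paper's route gives an explicit expansion of the logarithm of the product, which is not needed downstream.

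The core of the two proofs is the same. Both get $\max_j\sigma_{n,j}^2\to0$ in probability from the conditional Lindeberg condition in the same way. Both then split the Taylor remainder on $\{|\eta_{n,j}|\le\varepsilon\}$ and close with \Cref{lem:technical_pconv}.

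Your stopping-time truncation, which you flag as the main obstacle, is valid but unnecessary. The conclusion is only convergence in probability, so no almost-sure bound on $\sum_j\sigma_{n,j}^2$ is needed:
\begin{itemize}
\item $\sum_j\sigma_{n,j}^4\le\max_j\sigma_{n,j}^2\sum_j\sigma_{n,j}^2$ is already $o_{\mathbb P}(1)$;
\item $\frac{|t|^3\varepsilon}{6}\sum_j\sigma_{n,j}^2$ converges in probability to $\frac{|t|^3\varepsilon}{6}\sigma^2$.
\end{itemize}
Your Steps 2 and 3 therefore go through on the original array.
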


\begin{proof}
	Define the functions $A: \R \rightarrow \C$ and $B: \R \rightarrow \R$ such that for all $x \in \R$
	\begin{equation*}
		e^{it x} = 1 + itx - \frac{1}{2} x^2 + \frac{1}{2} x^2 A(x) \quad \text{and} \quad B(x) = \min(x/3, 2).
	\end{equation*}
	And note that, from \citep[][Lemma 1, p.512]{feller1991introduction}, $|A(x)| \leq B(|x|)$ for all $x \in \R$.
	Letting $\sigma_{n,j}^2 \defeq \E{ \eta_{n,j}^2 \middle| \Fnjm }$ and $\sigma_n^2 \defeq \sum_{j = 1}^{k_n} \sigma_{n,j}^2$, one has
	\begin{equation*}
		\E{ e^{it \eta_{n,j}} \middle| \Fnjm } = 1 - \frac{1}{2} t^2 \sigma_{n,j}^2 + \frac{1}{2} t^2 \E{ \eta_{n,j}^2 A(t \eta_{n,j}) \middle| \Fnjm }.
	\end{equation*}
	Now define $r_n$ by
	\begin{align}
		\log\pa*{\prod_{j = 1}^{k_n} \E{ e^{it \eta_{n,j}} \middle| \Fnjm}} &= \sum_{j = 1}^{k_n} \log\pa*{1 -\frac{1}{2} t^2 \sigma_{n,j}^2 + \frac{1}{2} t^2 \E{ \eta_{n,j}^2 A(t\eta_{n,j}) \middle| \Fnjm}} \notag \\
		&= -\frac{1}{2} t^2 \sigma_n^2 + \frac{1}{2} t^2 \sum_{j = 1}^{k_n} \E{ \eta_{n,j}^2 A(t\eta_{n,j}) \middle| \Fnjm} + r_n. \label{eq:conditional_lf}
	\end{align}
	As $\sigma_n \pconv \sigma^2$ by \eqref{eq:thm_technical_conditional_dconv_1}, it now suffices to show that $r_n \pconv 0$, then that the second term of \eqref{eq:conditional_lf} tends to 0 in probability as well.
	Regarding the first point, denote $z_{n,j} \defeq -\frac{1}{2} t^2 \sigma_{n,j}^2 + \frac{1}{2} t^2 \E{ \eta_{n,j}^2 A(t\eta_{n,j}) \middle| \Fnjm}$ for $n > 0$ and $1 \leq j \leq k_n$, one has that
	\begin{align*}
		\av*{ z_{n,j} } &\leq \frac{1}{2} t^2 \pa*{ \sigma_{n,j}^2 + \E{ \eta_{n,j}^2 |A(t \eta_{n,j})| \middle| \Fnjm}} \\
		&\leq \frac{1}{2} t^2 \pa*{ \sigma_{n,j}^2 + \E{ \eta_{n,j}^2 B(|t \eta_{n,j}|) \middle| \Fnjm}} \\
		&\leq \frac{3}{2} t^2 \sigma_{n,j}^2.
	\end{align*}
	Moreover, for $|z| < 1$ one has \citep[p. 72]{hall1980martingale}
	\begin{equation*}
		|\mathrm{log}(1 + z) - z | \leq \frac{|z|^2}{2(1 - |z|)},
	\end{equation*}
	thus
	\begin{align}
		| r_n | &\leq \sum_{j=1}^{k_n} | \log(1 + z_{n,j}) - z_{n,j} | \notag \\
		&\leq \frac{1}{2} \sum_{j=1}^{k_n} \frac{(\frac{3}{2} t^2 \sigma_{n,j}^2)^2}{1 - \frac{3}{2} t^2 \sigma_{n,j}^2} \mathbbm{1}_{\cb{| z_{n,j} | < 1}} +  \sum_{j=1}^{k_n} | \log(1 + z_{n,j}) - z_{n,j} | \mathbbm{1}_{\cb{| z_{n,j} | \geq 1}} \notag \\
		&\leq \sigma_n^2 \frac{9 t^2}{8}\frac{\tilde\sigma_n^2}{1 - \frac{3}{2}t^4 \tilde\sigma_n^2} + \mathbbm{1}_{\cb*{\tilde\sigma_n^2 \geq \frac{2}{3t^2}}} \sum_{j=1}^{k_n} | \log(1 + z_{n,j}) - z_{n,j} |, \label{eq:proof_thm_technical_conditional_dconv_1}
	\end{align}
	where $\tilde\sigma_n^2 \defeq \max_{1 \leq j \leq k_n} \sigma_{n,j}^2$.
	Let $\gamma > 0$, for each $1 \leq j \leq k_n$
	\begin{equation*}
		\sigma_{n,j}^2 \leq \gamma^2 + \E{ \eta_{n, j}^2 \mathbbm{1}_{\{ | \eta_{n, j} | > \gamma \}} \middle| \Fnj },
	\end{equation*}
	thus, with \eqref{eq:thm_technical_conditional_dconv_2}, one has
	\begin{align*}
		\tilde \sigma_n &\leq \gamma^2 + \max_{1 \leq j \leq k_n} \E{ \eta_{n, j}^2 \mathbbm{1}_{\{ | \eta_{n, j} | > \gamma \}} \middle| \Fnj } \\
		&\leq \gamma^2 + \sum_{j=1}^{k_n} \E{ \eta_{n, j}^2 \mathbbm{1}_{\{ | \eta_{n, j} | > \gamma \}} \middle| \Fnj } \pconv \gamma^2.
	\end{align*}
	Therefore $\tilde \sigma_n^2 \pconv 0$, by application of \Cref{lem:technical_pconv}.
	Thus, the first term of the right hand side of \eqref{eq:proof_thm_technical_conditional_dconv_1} tends to 0 in probability and the second term as well by \Cref{lem:technical_pconv_2}.

	Regarding the second point, let $\varepsilon > 0$ and observe that
	\begin{align*}
		\av*{ \sum_{j = 1}^{k_n} \E{ \eta_{n,j}^2 A(t\eta_{n,j}) \middle| \Fnjm} } &\leq \sum_{j = 1}^{k_n} \E{ \eta_{n,j}^2 |A(t\eta_{n,j})| \middle| \Fnjm} \\
		&\leq \sum_{j = 1}^{k_n} \E{ \eta_{n,j}^2 B(|t\eta_{n,j}|) (\mathbbm{1}_{\{ |t\eta_{n,j}| \leq \varepsilon \}} + \mathbbm{1}_{\{ |t\eta_{n,j}| > \varepsilon \}}) \middle| \Fnjm} \\
		&\leq \frac{\varepsilon}{3} \sigma_n^2 + 2 \sum_{j = 1}^{k_n} \E{ \eta_{n,j}^2 \mathbbm{1}_{\{ |t\eta_{n,j}| > \varepsilon \}} \middle| \dn} \pconv \frac{\varepsilon}{3} \sigma^2,
	\end{align*}
	by \eqref{eq:thm_technical_conditional_dconv_1} and \eqref{eq:thm_technical_conditional_dconv_2}.
	Thus, by application of \Cref{lem:technical_pconv}
	\begin{equation*}
		\sum_{j = 1}^{k_n} \E{ \eta_{n,j}^2 A(t\eta_{n,j}) \middle| \Fnjm} \pconv 0.
	\end{equation*}
	Whence, $\prod_{j=1}^{k_n} \E{e^{it\eta_{n,j}} \middle| \Fnjm} \pconv \exp\pa*{-\frac{1}{2}t^2 \sigma^2}$ for all $t \in \R$.
\end{proof}

\begin{lemma} \label{lem:technical_sum_asymp_finite_dim}
	Let $(A_n)_{n \in \N}$ and $(B_n)_{n \in \N}$ two asymptotically finite dimensional random variable sequences in $\H$ and $(a_n)_{n \in \N}$ a real non-zero sequence that converges to $a \in \R$.
	Then $(a_n A_n + B_n)_{n \in \N}$ is asymptotically finite dimensional as well.
\end{lemma}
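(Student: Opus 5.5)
The plan is to work directly from the definition of asymptotic finite dimensionality used in the proof of \Cref{lem:erm_normality_1}. A sequence $(C_n)$ in $\H$ has this property if, for all $\varepsilon, \delta > 0$, there exists $J \in \N$ such that
\[
\limsup_{n} \P{ \sum_{j > J} \dprod{C_n}{e_j}^2 \geq \varepsilon } \leq \delta .
\]
For each $J$, let $P_J$ denote the orthogonal projection onto $\overline{\spn}\{e_j : j > J\}$. The quantity being controlled is then $\|P_J C_n\|_\H^2$, so the property reads: the tail projection norms vanish uniformly in probability.

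Fix $\varepsilon, \delta > 0$, with $C_n = a_n A_n + B_n$. The triangle inequality for $\|P_J \cdot\|_\H$ gives
\[
\|P_J C_n\|_\H \leq |a_n| \, \|P_J A_n\|_\H + \|P_J B_n\|_\H .
\]
Since $(a_n)$ converges, it is bounded, say $|a_n| \le M$ for all $n$ with some $M \geq 1$ (hence $M > 0$). Therefore
\[
\{\|P_J C_n\|_\H^2 \ge \varepsilon\} \subset \{\|P_J A_n\|_\H^2 \ge \varepsilon/(4M^2)\} \cup \{\|P_J B_n\|_\H^2 \ge \varepsilon/4\}.
\]

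Next, I apply the hypothesis to $(A_n)$ with the pair $(\varepsilon/(4M^2), \delta/2)$, which yields some $J_A$, and to $(B_n)$ with $(\varepsilon/4, \delta/2)$, which yields some $J_B$. Take $J = \max(J_A, J_B)$. The tail sums $\sum_{j>J}$ are nonincreasing in $J$, so the corresponding probability bounds continue to hold at this common $J$. A union bound together with subadditivity of $\limsup$ then gives $\limsup_n \P{\|P_J C_n\|_\H^2 \ge \varepsilon} \le \delta$, which is exactly the required property.

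There is no serious obstacle. The only point needing care is the common choice of $J$, which the monotonicity of tail sums in $J$ settles, and the fact that $a_n \neq 0$ and the particular value of $a$ play no role beyond boundedness of $(a_n)$.
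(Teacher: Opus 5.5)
Your proof is correct and follows essentially the same route as the paper: split the tail of $a_nA_n+B_n$ into the tails of $A_n$ and $B_n$ with rescaled thresholds, choose $J_A, J_B$ for $\delta/2$ each, take $J=\max(J_A,J_B)$, and conclude by a union bound and subadditivity of the $\limsup$. The only differences are cosmetic. The paper uses $(a+b)^2\le 2(a^2+b^2)$ and the eventual bound $a_n^2\le a^2+1$, where you use the norm triangle inequality and a uniform bound $M$. You are also right that $a_n\neq 0$ is not needed, since the paper only uses it to divide by $a_n^2$.
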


\begin{proof}
	By definition, for all $\varepsilon, \delta > 0$, there exists $J_a, J_b \in \N$ such that
	\begin{equation*}
		\limsup_{n \in \N} \P{ \sum_{j > J_a} \dprod{A_n}{e_j}^2 \geq \frac{\varepsilon}{4 (a^2 + 1)} } < \frac{\delta}{2},
	\end{equation*}
	and
	\begin{equation*}
		\limsup_{n \in \N} \P{ \sum_{j > J_b} \dprod{B_n}{e_j}^2 \geq \frac{\varepsilon}{4} } < \frac{\delta}{2}.
	\end{equation*}
	Denote $J \defeq \max(J_a, J_b)$.
	Using the inequality $(a + b)^2 \leq 2 (a^2 + b^2)$, one has
	\begin{align*}
		&\P{ \sum_{j > J} \dprod{a_n A_n + B_n}{e_j}^2 \geq \varepsilon } \\
		&\qquad \leq \P{ \sum_{j > J} \dprod{a_n A_n}{e_j}^2 + \sum_{j > J} \dprod{B_n}{e_j}^2 \geq \frac{\varepsilon}{2} } \\
		&\qquad \leq \P{ \sum_{j > J} \dprod{A_n}{e_j}^2 \geq \frac{\varepsilon}{4 a_n^2} } + \P{\sum_{j > J} \dprod{B_n}{e_j}^2 \geq \frac{\varepsilon}{4} } \\
		&\qquad \leq \P{ \sum_{j > J_a} \dprod{A_n}{e_j}^2 \geq \frac{\varepsilon}{4 a_n^2} } + \P{\sum_{j > J_b} \dprod{B_n}{e_j}^2 \geq \frac{\varepsilon}{4} }.
	\end{align*}
	Moreover, there exists $N \in \N$ such that for all $n \geq N$, $a_n^2 \leq a^2 + 1$.
	Therefore
	\begin{equation*}
		\P{ \sum_{j > J_a} \dprod{A_n}{e_j}^2 \geq \frac{\varepsilon}{4 a_n^2} } \leq \P{ \sum_{j > J_a} \dprod{A_n}{e_j}^2 \geq \frac{\varepsilon}{4 (a^2 + 1)} }.
	\end{equation*}
	As a consequence, by subadditivity of the $\limsup$,
	\begin{equation*}
		\limsup_{n \in \N} \P{ \sum_{j > J} \dprod{a_n A_n + B_n}{e_j}^2 \geq \varepsilon } \leq \frac{\delta}{2} + \frac{\delta}{2} = \delta,
	\end{equation*}
	and $(a_n A_n + B_n)_{n \in \N}$ is asymptotically finite dimensional.
\end{proof}

Let us consider the two following subsampling strategies, where \(\gamma_{n, i}\) refers to the multiplicity (or the number of occurrences) of each point \(Z_i\) in the subsampled dataset (which may be null if \(Z_i\) is not picked).

\paragraph{Subsampling with replacement}
One draws independently \(s_n\) point indices:
\begin{equation*}
  I_{n,1}, \dots, I_{n,s_n} | \dn \overset{\iid}{\sim} \sum_{i = 1}^n \pi_{n, i} \delta_{\{ i \}},
\end{equation*}
where \(\delta_{\{ i \}}\) is the Dirac measure at \(i\).
Each point \(Z_i\) appears \(N_{n,i} \defeq \gamma_{n, i} = \sum_{j = 1}^{s_n} \mathbbm{1}_{\{ I_{n,j} = i \}}\) times in the subsampled dataset

\paragraph{Subsampling without replacement}
Assuming that $\pi_{n, i} \leq s_n^{-1}$ for all $n > 0$ and $i \in \{ 1, \dots, n \}$, ones draws independently for each point \(Z_i\)
\begin{equation*}
  \gamma_{n, i} | \dn \sim \text{Ber}(s_n \pi_{n, i}),
\end{equation*}
and decides that it should enter the subsampled dataset when the inclusion indicator verifies \(\delta_{n,i} \defeq \gamma_{n, i} = 1\).

The following proposition is a weak law of large numbers for Multinomial and Poisson subsampling.
It is written with an a general Hilbert space $E$ with the aim of applying it with either the reals $\R$ and the regular product or the \rkhs of study $\H$ and its scalar product $\dprod{\cdot}{\cdot}$.

\begin{proposition}[Subsampling LLN] \label{prop:wlln}
	Let $(E, \dprodE{\cdot}{\cdot})$ be a Hilbert space and $f$ a function from $\Z$ to $E$ such that $\E{ \| f(Z) \|_E^4 } < \infty$. Then, if $(\ppi_n)_{n > 0}$ verifies \Cref{hyp:ssp}, one has,
	\begin{equation*}
		\frac{1}{n} \sum_{j=1}^{s_n} \frac{f(Z_i)}{s_n \pi_{n,i}} \gamma_{n,i} \pconv \Ex f(Z),
	\end{equation*}
	whether $\gamma_{n,i} = N_{n,i}$, in the case of Multinomial subsampling, or $\gamma_{n,i} = \delta_{n,i}$, in the case of Poisson subsampling.

	Moreover, if it is assumed that $s_n = O(n)$ then,
	\begin{equation*}
		\no*{ \frac{1}{n} \sum_{j=1}^{s_n} \frac{f(Z_i)}{s_n \pi_{n,i}} \gamma_{n,i} - \Ex f(Z) }_E = O_\Pr(s_n^{-1/2}).
	\end{equation*}
\end{proposition}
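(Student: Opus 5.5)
Decompose the subsampled average $T_n \defeq \frac{1}{n}\sum_{i=1}^{n} \frac{f(Z_i)}{s_n \pi_{n,i}} \gamma_{n,i}$ (the sum in the statement being over $i$ from $1$ to $n$) around its conditional mean:
\[
T_n - \Ex f(Z) = \pa*{T_n - \frac1n\sum_{i=1}^n f(Z_i)} + \pa*{\frac1n\sum_{i=1}^n f(Z_i) - \Ex f(Z)}.
\]
Since $\E{\gamma_{n,i}\middle|\dn} = s_n\pi_{n,i}$, the first bracket $D_n$ has zero conditional mean given $\dn$. The second bracket is a plain empirical average in the Hilbert space $E$. Because $\E{\|f(Z)\|_E^2}<\infty$, its second moment is $\frac1n \E{\|f(Z)-\Ex f(Z)\|_E^2}$, so it is $O_\Pr(n^{-1/2})$. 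This is also $O_\Pr(s_n^{-1/2})$, since $s_n\le n$.

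The core step is a conditional second-moment bound on $D_n$. I would compute $\E{\|D_n\|_E^2\middle|\dn}$ separately for the two schemes, expanding the squared norm with the inner product of $E$.

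For Poisson subsampling the $\gamma_{n,i}$ are conditionally independent Bernoulli variables, so only the diagonal terms survive:
\[
\E{\|D_n\|_E^2\middle|\dn} = \frac{1}{n^2}\sum_i \frac{\|f(Z_i)\|_E^2}{s_n^2\pi_{n,i}^2}\, s_n\pi_{n,i}(1-s_n\pi_{n,i}) \le \frac{1}{s_n}\,\max_i \frac{1}{n\pi_{n,i}}\cdot \frac1n\sum_i \|f(Z_i)\|_E^2 .
\]
For Multinomial subsampling I would write $T_n=\frac{1}{s_n}\sum_{j=1}^{s_n} \frac{f(Z_{I_{n,j}})}{n\pi_{n,I_{n,j}}}$, which is an average of $s_n$ conditionally i.i.d.\ terms with conditional mean $\frac1n\sum_i f(Z_i)$. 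Its conditional variance is therefore at most $\frac{1}{s_n}\sum_i \pi_{n,i}\frac{\|f(Z_i)\|_E^2}{n^2\pi_{n,i}^2}$, which is the same bound as in the Poisson case.

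Writing $M_n\defeq\max_i (n\pi_{n,i})^{-1}$, I then need $M_n\cdot \frac1n\sum_i\|f(Z_i)\|_E^2 = O_\Pr(1)$. \Cref{hyp:ssp} with Markov's inequality gives $M_n=O_\Pr(1)$, and the law of large numbers gives $\frac1n\sum_i\|f(Z_i)\|_E^2 = O_\Pr(1)$. The product is then $O_\Pr(1)$ as well, with no need to control their dependence. Alternatively, Cauchy--Schwarz with $\E{M_n^2}=O(1)$ and $\E{\|f(Z)\|_E^4}<\infty$ bounds the unconditional expectation of the product directly, which is where the fourth moment would enter.

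To convert the conditional bound into a probability statement, I would use conditional Chebyshev: for $C>0$,
\[
\P{\sqrt{s_n}\|D_n\|_E > C\middle|\dn} \le \min\pa*{1, C^{-2} V_n},
\]
where $V_n\defeq s_n\E{\|D_n\|_E^2\middle|\dn}=O_\Pr(1)$. Taking expectations and using bounded convergence on the event $\{V_n\le K\}$ gives
\[
\P{\sqrt{s_n}\|D_n\|_E>C}\le K/C^2 + \P{V_n>K},
\]
which can be made small by choosing $K$ and then $C$ large. Hence $D_n=O_\Pr(s_n^{-1/2})$. Combining with the first paragraph gives the rate claim. For the plain consistency claim, $s_n\to\infty$ suffices to get $D_n\pconv0$, and the weak law of large numbers handles the second bracket.

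The main obstacle is the handling of $M_n$, which is random and depends on the data, together with the passage from conditional to unconditional bounds. The truncation argument above settles this without requiring independence between $M_n$ and the sample. The hypothesis $s_n=O(n)$ does not appear to be needed beyond ensuring $n^{-1/2}=O(s_n^{-1/2})$.
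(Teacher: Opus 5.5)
Your proof is correct. It shares the paper's skeleton: conditional unbiasedness, a conditional second-moment bound of order $\frac{1}{s_n n^2}\sum_i \|f(Z_i)\|_E^2/\pi_{n,i}$, then Chebyshev. The difference is in how you handle the data-dependent factor $M_n=\max_i (n\pi_{n,i})^{-1}$ when passing from conditional to unconditional statements.

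The paper bounds the unconditional mean-square error $\mathbb{E}\|T_n-\mathbb{E}f(Z)\|_E^2$ by taking expectations of the conditional bound. It decouples $M_n$ from $\frac1n\sum_i\|f(Z_i)\|_E^2$ by Cauchy--Schwarz. That step is exactly where $\mathbb{E}\|f(Z)\|_E^4<\infty$ and the second-moment form of Assumption~\ref{hyp:ssp} are used. What it buys is an explicit bound $\mathbb{E}\|T_n-\mathbb{E}f(Z)\|_E^2=O(s_n^{-1})+O(n^{-1})$, i.e.\ $L^2$ convergence, after which one Chebyshev inequality gives both claims.

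You stay conditional throughout. You note that $s_n\mathbb{E}[\|D_n\|_E^2\mid\mathcal{D}_n]\le M_n\cdot\frac1n\sum_i\|f(Z_i)\|_E^2$ is a product of two tight sequences, and convert with conditional Chebyshev plus truncation. This needs only $\mathbb{E}\|f(Z)\|_E^2<\infty$ and $M_n=O_{\mathbb{P}}(1)$, the weaker condition mentioned in the remark after Assumption~\ref{hyp:ssp}. So your route shows the fourth moment is not needed for this proposition. The cost is that you get tightness rather than a mean-square rate. The gain is also local, since the paper needs fourth moments elsewhere anyway (e.g.\ in the Lindeberg step).

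Your exact variance computations also avoid a loose step in the paper. In the Multinomial case, the paper applies $\mathbb{E}[N_{n,i}N_{n,k}\mid\mathcal{D}_n]\le s_n\pi_{n,i}\mathbbm{1}_{i=k}+s_n^2\pi_{n,i}\pi_{n,k}$ term by term against coefficients $\langle f(Z_i),f(Z_k)\rangle_E$ that can be negative. The resulting inequality is still true, but only in aggregate, because the discarded quantity equals $-\frac{1}{s_n}\bigl\|\frac1n\sum_i f(Z_i)\bigr\|_E^2\le 0$.

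Finally, both arguments need $s_n\to\infty$ for the consistency claim. You state this explicitly; the proposition leaves it implicit.
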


By construction, in Poisson subsampling, $s_n \leq n$ hence $s_n = O(n)$.
In the case of (Multinomial) subsampling, the goal is to reduce the effective dataset size.
Hence one would largely want $s_n = O(n)$ and the second point of the above proposition always holds in this context.

\begin{proof}
	To simplify the notation, first define
	\begin{gather*}
		P f \defeq \Ex f(Z), \\
		P_n f \defeq \frac{1}{n} \sum_{i = 1}^n f(Z_i), \\
		P_{s_n}^* f \defeq \frac{1}{n} \sum_{j=1}^{s_n} \frac{f(Z_i)}{s_n \pi_{n,i}} \gamma_{n,i},
	\end{gather*}
	and recall that $P_n f \pconv P f$ by the usual weak law of large numbers.
	By the tower property one has
	\begin{equation*}
		\E{ P_{s_n}^* f } = \frac{1}{n} \sum_{i=1}^n \E{ \frac{f(Z_i)}{s_n \pi_{n,i}} \E{ \gamma_{n,i} \middle| \dn } } = \frac{1}{n} \sum_{i=1}^n \Ex f(Z_i) = Pf,
	\end{equation*}
	so $\E{ \| P_{s_n}^* f - Pf \|_E^2 } = \E{ \| P_{s_n}^* f \|_E^2 } - \| P f \|_E^2$.
	Let us now show that for all $i, k \in \{ 1, \dots, n \}$,
	\begin{equation} \label{eq:prop_wlln_R_part1}
		\E{ \gamma_{n,i} \gamma_{n,k} \middle| \dn } \leq s_n \pi_{n,i} \mathbbm{1}_{i = k} + s_n^2 \pi_{n, i} \pi_{n,k},
	\end{equation}
	for Multinomial and Poisson subsampling.
	On the one hand,
	\begin{align*}
		\E{ N_{n,i} N_{n,k} \middle| \dn } &= \E{ \sum_{j,m=1}^{s_n} \mathbbm{1}_{\{ I_{n,j = i} \}} \mathbbm{1}_{\{ I_{n,m = k} \}} \middle| \dn } \\
		&= \sum_{j,m=1}^{s_n} \P{ I_{n,j = i}, I_{n,m = k} \middle| \dn } \\
		&= \sum_{j=1}^{s_n} \P{ I_{n,j = i}, I_{n,j = k} \middle| \dn } + \sum_{j \neq m}^{s_n} \P{ I_{n,j = i} \middle| \dn } \P{ I_{n,m = k} \middle| \dn } \\
		&= s_n \pi_{n, i} \mathbbm{1}_{i = k} + s_n (s_n - 1) \pi_{n,i} \pi_{n,k} \\
		&\leq s_n \pi_{n, i} \mathbbm{1}_{i = k} + s_n^2 \pi_{n,i} \pi_{n,k}.
	\end{align*}
	On the other hand,
	\begin{align*}
		\E{ \delta_{n,i} \delta_{n,k} \middle| \dn } &= \E{ \delta_{n,i} \middle| \dn } \mathbbm{1}_{i = k} + \E{ \delta_{n,i} \middle| \dn } \E{ \delta_{n,k} \middle| \dn } \mathbbm{1}_{i \neq k} \\
		&= s_n \pi_{n,i} \mathbbm{1}_{i = k} + s_n^2 \pi_{n,i} \pi_{n,k} \mathbbm{1}_{i \neq k} \\
		&\leq s_n \pi_{n,i} \mathbbm{1}_{i = k} + s_n^2 \pi_{n,i} \pi_{n,k}.
	\end{align*}
	Hence
	\begin{align*}
		\E{ \| P_{s_n}^* f \|_E^2 \middle| \dn } & = \frac{1}{n^2 s_n^2} \sum_{i,k=1}^n \frac{\dprodE{f(Z_i)}{f(Z_k)}}{\pi_{n,i} \pi_{n,k}} \E{ \gamma_{n,i} \gamma_{n,k} \middle| \dn } \\
		&\leq \frac{1}{s_n n^2} \sum_{i=1}^n \frac{\| f(Z_i) \|_E^2}{\pi_{n,i}} + \frac{1}{n^2} \sum_{i, k = 1}^{n} \dprodE{f(Z_i)}{f(Z_k)} \\
		&= \frac{1}{s_n n^2} \sum_{i=1}^n \frac{\| f(Z_i) \|_E^2}{\pi_{n,i}} + \| P_n f \|_E^2.
	\end{align*}
	On the one hand, by Cauchy-Schwarz,
	\begin{align*}
		\E{ \frac{1}{n^2} \sum_{i = 1}^n \frac{\| f(Z_i) \|_E^2}{\pi_{n,i}} } &\leq \sqrt{\E{ \max_{1 \le i \le n}\cb*{\frac{1}{n\pi_{n, i}}}^2 }} \sqrt{\E{ \pa*{ \frac{1}{n} \sum_{i=1}^n \| f(Z_i) \|_E^2 }^2 }}.
	\end{align*}
	However,
	\begin{equation*}
		\E{ \max_{1 \le i \le n}\cb*{\frac{1}{n\pi_{n, i}}}^2 } = O(1),
	\end{equation*}
	and,
	\begin{align*}
		\E{ \pa*{ \frac{1}{n} \sum_{i=1}^n \| f(Z_i) \|_E^2 }^2 } &= \frac{1}{n^2} \sum_{i=1}^n \E{ \| f(Z_i) \|_E^4 } + \frac{1}{n^2} \sum_{i \neq j} \E{ \| f(Z_i) \|_E^2 \| f(Z_j) \|_E^2 } \\
		&= \frac{1}{n} \E{ \| f(Z_1) \|_E^4 } + \frac{n (n - 1)}{n^2} \E{ \| f(Z_1) \|_E^2 }^2 = O(1).
	\end{align*}
	Hence,
	\begin{equation*}
		\E{ \frac{1}{s_n n^2} \sum_{i = 1}^n \frac{\| f(Z_i) \|_E^2}{\pi_{n,i}} } = O(s_n^{-1}).
	\end{equation*}
	On the other hand, by independence of the observations $Z_1, \dots, Z_n$,
	\begin{align*}
		\E{ \| P_n f \|_E^2 } &= \frac{1}{n^2} \sum_{i,m=1}^n \E{ \dprodE{f(Z_i)}{f(Z_m)} } \\
		&= \frac{1}{n^2} \sum_{i = 1}^n \E{ \| f(Z_i) \|_E^2 } + \frac{1}{n^2}  \sum_{i \neq m}^n \E{ \dprodE{f(Z_i)}{f(Z_m)} } \\
		&= \frac{1}{n} \E{ \| f(Z) \|_E^2 } + \frac{n (n - 1)}{n^2} \| P f \|_E^2.
	\end{align*}
	Thus,
	\begin{align*}
		\E{ \| P_{s_n}^* f \|_E^2 } &\leq \E{ \frac{1}{s_n n^2} \sum_{i = 1}^n \frac{\| f(Z_i) \|_E^2}{\pi_{n,i}} } + \frac{1}{n} \E{ \| f(Z) \|_E^2 } + \frac{n - 1}{n} \| P f \|_E^2 \\
		&\leq \E{ \frac{1}{s_n n^2} \sum_{i = 1}^n \frac{\| f(Z_i) \|_E^2}{\pi_{n,i}} } + \frac{1}{n} \E{ \| f(Z) \|_E^2 } + \| P f \|_E^2,
	\end{align*}
	and
	\begin{align}
		\E{ \| P_{s_n}^* f \|_E^2 }  - \| Pf \|_E^2 &\leq \E{ \frac{1}{s_n n^2} \sum_{i = 1}^n \frac{\| f(Z_i) \|_E^2}{\pi_{n,i}} } + \frac{1}{n} \E{ \| f(Z) \|_E^2 } \notag \\
		&= O(s_n^{-1}) + O(n^{-1}) \conv 0 \label{eq:prop_wlln_R_part2}.
	\end{align}
	Finally, by Chebyshev's inequality, for all $\varepsilon > 0$,
	\begin{align*}
		\P{ \| P_{s_n}^* f - P f \|_E \geq \varepsilon } &\leq \frac{1}{\varepsilon^2} \E{ \| P_{s_n}^* f - Pf \|_E^2 } \\
		&= \frac{1}{\varepsilon^2} \pa*{ \E{ \| P_{s_n}^* f \|_E^2 } - \| Pf \|_E^2 } \conv 0,
	\end{align*}
	whence $P_{s_n}^* f \pconv Pf$.

	Assume now that $s_n = O(n)$.
	From \eqref{eq:prop_wlln_R_part2} one has
	\begin{align*}
		s_n (\E{ \| P_{s_n}^* f \|_E^2 } - \| Pf \|_E^2) &= s_n (O(s_n^{-1}) + O(n^{-1})) = O(1).
	\end{align*}
	Whence, there exists $M > 0$ and $N \in \N$ such that for $n \geq N$,
	\begin{align*}
		s_n \pa{ \E{ \| P_{s_n}^* f \|_E^2 } - \| Pf \|_E^2 } \leq M^2.
	\end{align*}
	Finally, by Chebyshev's inequality, for all $\varepsilon > 0$,
	\begin{align*}
		\P{ \sqrt{s_n} \| P_{s_n}^* f - P f \|_E \geq M \varepsilon^{-1/2} } &\leq \varepsilon\frac{s_n}{M^2} \E{ \| P_{s_n}^* f - Pf \|_E^2 } \\
		&= \varepsilon\frac{s_n}{M^2} \pa*{\E{ \| P_{s_n}^* f \|_E^2 } - \| Pf \|_E^2} \\
		&\leq \varepsilon,
	\end{align*}
	and therefore $\| P_{s_n}^* f - P f \|_E = O_\Pr(s_n^{-1/2})$.
\end{proof}

We hereby give the proof of \Cref{thm:srm_consistency_core} that is valid for Poisson and Multinomial subsampling via the law of large numbers proposition above.
Also notice that as $\| \hn - \hz \|_\H \pconv  0$ from \Cref{thm:erm_consistency}, triangle inequality implies that $\| \hsn - \hn \|_\H \pconv 0$ as well.

\begin{proof}[Proof of \Cref{thm:srm_consistency_core}]
	Following the same steps as for the proof of \Cref{thm:erm_consistency} but replacing $\hn$ by $\hsn$ and $\Ln$ by $\Lsn$, it comes that
	\begin{equation*}
		0 \leq \| \hsn - \hz \|_\H \leq \frac{2}{\lsn} \| \nabla \Lsn(\hz) - \nabla L(\hz) \|_\H + 2 \av*{ \frac{1}{\lsn} - \frac{1}{\lz} } \| \nabla\L(\hz) \|_\H.
	\end{equation*}
	with $\nabla \L(\hz) = \E{ \ell^\prime(\hz(X), Y) K_X }$ and
	\begin{equation*}
		\nabla\Lsn(\hz) = \frac{1}{n} \sum_{i = 1}^n \gamma_{n,i} \frac{\ell^\prime(\hz(X_i), Y_i)}{s_n \pi_{n, i}} K_{X_i}.
	\end{equation*}
	By \Cref{hyp:kernel,hyp:distribution}, $\ell^\prime(\hz(X), Y) K_X \in \H$ has a fourth order moment.
	Therefore, $\| \nabla\Lsn(h) - \nabla \L(h) \|_\H \pconv 0$ by application of \Cref{prop:wlln}, and eventually $\| \hsn - \hz \|_\H \pconv 0$ since $\lsn \pconv \lz$ by \Cref{hyp:lsn_speed}.
\end{proof}

The following lemma is valid for Poisson and Multinomial subsampling via the law of large numbers proposition above.

\begin{lemma} \label{lem:srm_normality_2}
	Under \Cref{hyp:kernel,hyp:loss,hyp:distribution}, \ref{hyp:lsn_speed} and  \ref{hyp:ssp},
	\begin{equation*}
		\sqrt{s_n} \int_0^1 \nabla^2\Rsn(\bar h_{n,t})(\hsn - \hz)\,\d t - \sqrt{s_n} \nabla^2\risk(\hz)(\hsn - \hz) \pconv 0,
	\end{equation*}
	with $\bar h_{n,t} = \hz + t(\hsn - \hz)$ for all $t \in [0, 1]$, either in the framework of Multinomial or Poisson subsampling.
\end{lemma}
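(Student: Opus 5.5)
The plan is to prove the statement in $\H$-norm by controlling the relevant random operators in operator norm, rather than testing against a fixed $h$. Throughout, $\|\cdot\|_{\mathrm{op}}$ denotes the operator norm on $\H$ and $\|\cdot\|_{\mathrm{HS}}$ the Hilbert--Schmidt norm. Write $\Delta_n \defeq \hsn - \hz$. The quantity of interest is
\[
\sqrt{s_n}\int_0^1 \big(\nabla^2\Rsn(\bar h_{n,t}) - \nabla^2\Rsn(\hz)\big)\Delta_n\,\d t \;+\; \sqrt{s_n}\big(\nabla^2\Rsn(\hz) - \nabla^2\risk(\hz)\big)\Delta_n .
\]
Its $\H$-norm is at most
\[
\sqrt{s_n}\|\Delta_n\|_\H\Big(\sup_{t\in[0,1]}\|\nabla^2\Lsn(\bar h_{n,t})-\nabla^2\Lsn(\hz)\|_{\mathrm{op}} + \|\nabla^2\Lsn(\hz)-\nabla^2\L(\hz)\|_{\mathrm{op}} + |\lsn-\lz|\Big).
\]
Here the regularization parts $\lsn\,\mathrm{id}_\H$ cancel inside the first difference. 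It therefore suffices to show two things: $\sqrt{s_n}\|\Delta_n\|_\H = O_\Pr(1)$, and the bracket is $o_\Pr(1)$.

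\textbf{Step 1: rate.} As in the proof of \Cref{thm:srm_consistency_core},
\[
\|\Delta_n\|_\H \le \frac{2}{\lsn}\|\nabla\Lsn(\hz)-\nabla\L(\hz)\|_\H + 2\Big|\frac{1}{\lsn}-\frac{1}{\lz}\Big|\,\|\nabla\L(\hz)\|_\H .
\]
Apply the second part of \Cref{prop:wlln} with $E=\H$ and $f(Z)=\ell'(\hz(X),Y)K_X$, which has a finite fourth moment. This gives $\|\nabla\Lsn(\hz)-\nabla\L(\hz)\|_\H = O_\Pr(s_n^{-1/2})$, using $s_n=O(n)$, which holds automatically in the Poisson case and is the regime of interest in the Multinomial case. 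By \Cref{hyp:lsn_speed}, $\sqrt{s_n}|1/\lsn-1/\lz| \pconv 0$. Hence $\sqrt{s_n}\|\Delta_n\|_\H=O_\Pr(1)$.

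\textbf{Step 2: first operator difference.} Fix $g\in\H$ and take $B=\kappa(\|\hz\|_\H+1)$, $\phi=\phi_B$. On the event $\{\|\Delta_n\|_\H<1\}$, \Cref{hyp:loss} and \Cref{hyp:kernel} give, uniformly in $t\in[0,1]$,
\[
\big\|\big(\nabla^2\Lsn(\bar h_{n,t})-\nabla^2\Lsn(\hz)\big)g\big\|_\H \le \frac1n\sum_{i=1}^n\frac{\gamma_{n,i}}{s_n\pi_{n,i}}\,\phi(Y_i)\,\kappa\|\Delta_n\|_\H\,\kappa\|g\|_\H\,\kappa .
\]
So the operator norm is at most $\kappa^3\|\Delta_n\|_\H\,P^*_{s_n}\phi$. \Cref{prop:wlln} with $E=\R$ gives $P^*_{s_n}\phi\pconv\Ex\phi(Y)$, since $\phi(Y)$ has a finite fourth moment by \Cref{hyp:distribution}. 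With consistency (\Cref{thm:srm_consistency_core}) this term is $o_\Pr(1)$. On the complement event, whose probability tends to $0$, \Cref{lem:technical_pconv_2} disposes of the contribution.

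\textbf{Step 3: second operator difference (main point).} Convergence of $\nabla^2\Lsn(\hz)h$ for each fixed $h$ is not enough here; uniform control over $h$ is needed. I would apply \Cref{prop:wlln} in the Hilbert space $E$ of Hilbert--Schmidt operators on $\H$, with $f(Z)=\ell''(\hz(X),Y)\,K_X\otimes K_X$. Since $\|K_X\otimes K_X\|_{\mathrm{HS}}=k(X,X)\le\kappa^2$, we have $\Ex\|f(Z)\|_{\mathrm{HS}}^4\le\kappa^8\,\Ex|\ell''(\hz(X),Y)|^4<\infty$, by \Cref{hyp:loss,hyp:distribution} and the bound $|\hz(X)|\le B$. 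The Bochner mean of $f(Z)$ in $E$ equals $\nabla^2\L(\hz)$ by \Cref{lem:leibniz}. Therefore $\|\nabla^2\Lsn(\hz)-\nabla^2\L(\hz)\|_{\mathrm{op}}\le\|\cdot\|_{\mathrm{HS}}\pconv0$. Finally, $|\lsn-\lz|\pconv0$. Combining Steps 1--3 yields the claim for both subsampling schemes, since \Cref{prop:wlln} covers both. The delicate points are the legitimacy of the Hilbert--Schmidt embedding and the rate in Step 1.
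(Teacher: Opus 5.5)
Your argument is correct, but its key step takes a different route from the paper. The paper never uses operator norms. It fixes a test vector $h\in\H$ and bounds the first difference by $\sqrt{s_n}\kappa^3\|h\|_\H\|\hsn-\hz\|_\H^2$ times the reweighted average of the $\phi(Y_i)$. That is your Step 2 in disguise, since the bound is uniform over $\|h\|_\H\le 1$. For the second difference, the paper uses self-adjointness to write it as $\dprod{\hsn-\hz}{(\nabla^2\Rsn(\hz)-\nabla^2\risk(\hz))h}$. It then applies \Cref{prop:wlln} in $\H$ to the fixed-$h$ function $\ell''(\hz(X),Y)h(X)K_X$.

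That argument only gives strong-operator convergence of $\nabla^2\Lsn(\hz)$. So the paper's proof, as written, shows that the inner product of the remainder with each fixed $h$ tends to $0$ in probability, which is a coordinatewise version of the lemma. Its advantage is that it needs nothing beyond \Cref{prop:wlln} in $\H$ and $\R$.

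Your Step 3 applies \Cref{prop:wlln} in the separable Hilbert space of Hilbert--Schmidt operators, with $f(Z)=\ell''(\hz(X),Y)K_X\otimes K_X$. This upgrades the convergence to operator norm, so you prove the lemma in $\H$-norm. That is the form actually needed when the remainder is discarded by Slutsky's lemma in $\H$ in the proofs of \Cref{thm:srm_normality_R,thm:srm_normality_P}. Coordinatewise negligibility does not by itself preserve asymptotic finite-dimensionality; think of a deterministic remainder $e_n$.

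The extra cost of your route is modest. You must check that $f(Z)$ is strongly measurable in the Hilbert--Schmidt space, which follows from continuity of $k$ and $\|K_x\otimes K_x-K_y\otimes K_y\|_{\mathrm{HS}}\le\|K_x-K_y\|_\H(\|K_x\|_\H+\|K_y\|_\H)$. You must also check that its Bochner mean acts as $\nabla^2\L(\hz)$, which you address.

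On your caveat in Step 1: the paper's setup takes $s_n\le n$ for both schemes, so $s_n=O(n)$ is automatic. Like the paper, you do implicitly need $s_n\to\infty$ to invoke \Cref{prop:wlln}.
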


\begin{proof}
	Firstly, from \Cref{thm:srm_consistency_core} we showed that
	\begin{equation*}
		\| \hsn - \hz \|_\H \leq \frac{2}{\lsn} \| \nabla \Lsn(\hz) - \nabla L(\hz) \|_\H + 2 \frac{1}{\lz\lsn} \av*{ \lsn - \lz } \| \nabla\L(\hz) \|_\H.
	\end{equation*}
	By the second part of the subsampling law of large numbers \Cref{prop:wlln}, one has
	\begin{equation*}
		\sqrt{s_n} \| \nabla \Lsn(\hz) - \nabla L(\hz) \|_\H = O_\Pr(1).
	\end{equation*}
	Moreover, $\sqrt{s_n} (\lsn - \lz) \pconv 0$.
	Therefore, $\sqrt{s_n} \| \hsn - \hz \|_\H = O_\Pr(1)$.

	Then, let $h \in \H$, $t \in [0, 1]$ and denote $\bar h_{n,t} = \hz + t(\hsn - \hz)$.
	By consistency of $\hsn$, one has that $\| \bar h_{n,t} - \hz \|_\H \pconv 0$.
	On the one hand, by the expression of $\nabla^2 \Rsn(\cdot)$ and triangle inequality one has
	\begin{align}
		&\sqrt{s_n}| \dprod{\nabla^2\Rsn(\bar h_{n,t}) (\hsn - \hz)}{h} - \dprod{\nabla^2\Rsn (\hz) (\hsn - \hz)}{h} | \notag \\
		&\leq \av*{ \frac{\sqrt{s_n}}{n} \sum_{i=1}^{n} \frac{\ell^{\prime\prime}(\bar h_{n,t}(X_i), Y_i) - \ell^{\prime\prime}(\hz(X_i), Y_i)}{s_n \pi_{n,i}} \gamma_{n,i} \dprod{K_{X_i} \otimes K_{X_i} (\hsn - \hz)}{h} } \notag \\
		&\leq \frac{\sqrt{s_n}}{n} \sum_{i=1}^{n} \av*{ \frac{\ell^{\prime\prime}(\bar h_{n,t}(X_i), Y_i) - \ell^{\prime\prime}(\hz(X_i), Y_i)}{s_n \pi_{n,i}} \gamma_{n,i} \dprod{\hsn - \hz}{K_{X_i}} \dprod{h}{K_{X_i}} } \label{eq:lem_srm_normality_2_p1}.
	\end{align}
	Moreover, using \Cref{hyp:loss} for all $i \in \{ 1, \dots, n \}$,
	\begin{align*}
		\av{ \ell^{\prime\prime}(\bar h_{n,t}(X_i), Y_i) - \ell^{\prime\prime}(\hz(X_i), Y_i) } \mathbbm{1}_{\{ \| \bar h_{n,t} - \hz \|_\H < 1 \}} &\leq \phi(Y_i) \av{ \dprod{K_{X_i}}{ \bar h_{n,t} - \hz} } \\
		&\leq \phi(Y_i) \av{ \dprod{K_{X_i}}{ \hsn - \hz} },
	\end{align*}
	since $t \in [0, 1]$.
	By Cauchy-Schwarz one has
	\begin{equation*}
		\av{ \dprod{K_{X_i}}{\hsn - \hz} } \leq \| K_{X_i} \|_\H \| \hsn - \hz \|_\H \leq \kappa \| \hsn - \hz \|_\H,
	\end{equation*}
	and $\av{ \dprod{K_{X_i}}{h} } \leq \kappa \| h \|_\H$.
	Furthermore, $\sqrt{s_n} \| \hsn - \hz \|_\H \pconv 0$, $\| \hsn - \hz \|_\H \pconv 0$, and by \Cref{hyp:distribution} $\phi(Z)$ has a fourth order moment hence
	\begin{equation*}
		\frac{1}{n} \sum_{i=1}^{n} \frac{\phi(Y_i)}{s_n \pi_{n,i}} \gamma_{n,i} \pconv \Ex \phi(Y),
	\end{equation*}
	by application of \Cref{prop:wlln}.
	Thus, applying \Cref{lem:technical_pconv_2} with the random variables being \eqref{eq:lem_srm_normality_2_p1} and the events $E_n = \{ \| \bar h_{n,t} - \hz \|_\H \geq 1 \}$ one has
	\begin{align}
		&\sqrt{s_n} \int_0^1 | \dprod{\nabla^2\Rsn(\bar h_{n,t}) (\hsn - \hz)}{h} - \dprod{\nabla^2\Rsn (\hz) (\hsn - \hz)}{h} | \,\d t \notag \\
		&\quad \leq \sqrt{s_n} \kappa^3 \| h \|_\H \| \hsn - \hz \|_\H^2 \frac{1}{n} \sum_{i=1}^{n} \frac{\phi(Y_i)}{s_n \pi_{n,j}} \gamma_{n,i} + o_\Pr(1) \pconv 0. \label{eq:lem_srm_normality_2_part1}
	\end{align}
	On the other hand, since $\nabla^2\Rsn(\hz)$ and $\nabla^2\risk(\hz)$ are self-adjoint operators, one has
	\begin{align*}
		&| \dprod{\nabla^2\Rsn(\hz) (\hsn - \hz)}{h} - \dprod{\nabla^2\risk(\hz) (\hsn - \hz)}{h} | \\
		&\quad = | \dprod{\hsn - \hz}{\nabla^2\Rsn(\hz) h} - \dprod{\hsn - \hz}{\nabla^2\risk(\hz) h} | \\
		&\quad = | \dprod{\hsn - \hz}{\nabla^2\Rsn(\hz) h - \nabla^2\risk(\hz) h} | \\
		&\quad \leq \| \hsn - \hz \|_\H \| \nabla^2\Rsn(\hz) h - \nabla^2\risk(\hz) h \|_\H,
	\end{align*}
	by Cauchy-Schwarz inequality.
	Furthermore, $\ell^{\prime\prime}(\hz(X), Y) h(X) K_{X} \in \H$ has a fourth order moment by \Cref{hyp:kernel,hyp:distribution} hence
	\begin{equation*}
		\| \nabla^2\Lsn(\hz) h- \nabla^2\L(\hz) h \|_\H \pconv 0,
	\end{equation*}
	by application of \Cref{prop:wlln}.
	Moreover
	\begin{align*}
		\| \nabla^2\Rsn(\hz) h - \nabla^2\risk(\hz) h \|_\H &\leq \| \nabla^2\Lsn(\hz) h- \nabla^2\L(\hz) h \|_\H + \av{ \lsn - \lz } \| h \|_\H.
	\end{align*}
	Hence
	\begin{equation*}
		\| \nabla^2\Rsn(\hz) h - \nabla^2\risk(\hz) h \|_\H \pconv 0,
	\end{equation*}
	since $\lsn \pconv \lz$.
	Therefore
	\begin{align}
		& \sqrt{s_n} \int_0^1 | \dprod{\nabla^2\Rsn(\hz) (\hsn - \hz)}{h} - \dprod{\nabla^2\risk(\hz) (\hsn - \hz)}{h} | \,\d t \notag \\
		&\quad \leq \sqrt{s_n} \| \hsn - \hz \| \| \nabla^2\Rsn(\hz) h - \nabla^2\risk(\hz) h \|_\H \pconv 0. \label{eq:lem_srm_normality_2_part2}
	\end{align}
	Finally combining \eqref{eq:lem_srm_normality_2_part1} and \eqref{eq:lem_srm_normality_2_part2} yields the wanted result
	\begin{equation*}
		\sqrt{s_n} \int_0^1 \dprod{\nabla^2\Rsn(\bar h_{n,t})(\hsn - \hz)}{h}\,\d t - \sqrt{s_n} \dprod{\nabla^2\risk(\hz)(\hsn - \hz)}{h} \pconv 0.\qedhere
	\end{equation*}
\end{proof}

The following lemma can be applied for Multinomial or Poisson subsampling.

\begin{lemma} \label{lem:srm_normality_1_bis}
	Under \Cref{hyp:kernel,hyp:loss,hyp:distribution,hyp:ssp}, the quantity  $\sqrt{s_n}(\nabla\Lsn(\hz) - \nabla\Ln(\hz))$ is asymptotically finite dimensional, i.e.\ for all $\varepsilon, \delta > 0$, there exists $J \in \N$ such that
	\begin{equation*}
		\limsup_{n \in \N}\ \P{ \sum_{j > J} \dprod{\sqrt{s_n}(\nabla\Lsn - \nabla\Ln)(\hz)}{e_j}^2 \geq \varepsilon } < \delta,
	\end{equation*}
	whether considering Poisson or Multinomial subsampling.
\end{lemma}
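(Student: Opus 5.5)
We bound the tail quantity by its second moment and apply Markov's inequality, exactly as in the proof of \Cref{lem:erm_normality_1}. The difference is that the expectation is now taken first conditionally on \(\dn\), using the covariance structure of the multiplicities \(\gamma_{n,i}\).

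Write \(g_i \defeq \ell^\prime(\hz(X_i), Y_i) K_{X_i}\), so that
\[
\sqrt{s_n}(\nabla\Lsn-\nabla\Ln)(\hz) = \frac{\sqrt{s_n}}{n}\sum_{i=1}^n \pa*{\frac{\gamma_{n,i}}{s_n\pi_{n,i}}-1} g_i .
\]
Fix \(J\) and set \(T_J \defeq \sum_{j>J}\dprod{\cdot}{e_j}^2\), the squared norm of the projection onto \(\overline{\spn}\{e_j : j>J\}\). Since \(\E{\gamma_{n,i}\middle|\dn}=s_n\pi_{n,i}\), the conditional expectation of \(T_J\) is a conditional variance. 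By the bound \eqref{eq:prop_wlln_R_part1}, \(\E{\gamma_{n,i}\gamma_{n,k}\middle|\dn}\le s_n\pi_{n,i}\mathbbm 1_{i=k}+s_n^2\pi_{n,i}\pi_{n,k}\), established for both the Multinomial and Poisson schemes. The cross terms therefore cancel up to a nonpositive remainder, and we get
\[
\E{T_J\pa*{\sqrt{s_n}(\nabla\Lsn-\nabla\Ln)(\hz)}\middle|\dn} \le \frac{s_n}{n^2 s_n}\sum_{i=1}^n \frac{1}{\pi_{n,i}}\sum_{j>J}\dprod{g_i}{e_j}^2 \le \max_{i}\cb*{\frac{1}{n\pi_{n,i}}}\cdot\frac1n\sum_{i=1}^n \sum_{j>J}\dprod{g_i}{e_j}^2 .
\]
This conditional-variance bound is the step requiring care. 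For the Multinomial scheme the exact covariance is \(s_n\pi_{n,i}\mathbbm 1_{i=k}-s_n\pi_{n,i}\pi_{n,k}\); the term \(-s_n\pi_{n,i}\pi_{n,k}\) contributes \(-\frac{1}{n^2}\no{\sum_i P_J g_i}^2\le 0\), where \(P_J\) denotes the projection above. For Poisson subsampling the covariance is diagonal, equal to \(s_n\pi_{n,i}(1-s_n\pi_{n,i})\le s_n\pi_{n,i}\). In both cases the bound holds.

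Next, take full expectations and apply Cauchy--Schwarz as in the proof of \Cref{prop:wlln}:
\[
\E{T_J(\cdot)} \le \sqrt{\E{\max_i\cb*{\frac{1}{n\pi_{n,i}}}^2}}\;\sqrt{\E{\pa*{\frac1n\sum_{i=1}^n \sum_{j>J}\dprod{g_i}{e_j}^2}^2}} .
\]
The first factor is bounded by some constant \(C\) by \Cref{hyp:ssp}. Write \(W_J\defeq\sum_{j>J}\dprod{g_1}{e_j}^2\le\no{g_1}_\H^2\). Expanding the square as in the proof of \Cref{prop:wlln}, the second factor is at most \(\sqrt{\E{W_J^2}}\), using \(\E{W_J}^2\le\E{W_J^2}\). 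The fourth-moment assumption \Cref{hyp:distribution}, together with \Cref{hyp:kernel}, gives \(\E{\no{g_1}_\H^4}<\infty\). Since \(W_J\downarrow 0\) pointwise as \(J\to\infty\) (Parseval) and \(W_J^2\le \no{g_1}_\H^4\), dominated convergence yields \(\E{W_J^2}\to 0\), uniformly in \(n\).

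Finally, given \(\varepsilon,\delta>0\), choose \(J\) such that \(C\sqrt{\E{W_J^2}}<\varepsilon\delta\). Markov's inequality then gives \(\P{T_J(\cdot)\ge\varepsilon}<\delta\) for every \(n\), hence also for the \(\limsup\), which proves \Cref{lem:srm_normality_1_bis}. The main obstacle is the conditional-variance inequality for the Multinomial scheme, where one must check that the negative cross-covariance term enters with the right sign; bounding it by the diagonal term, as above, suffices.
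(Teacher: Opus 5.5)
Your proof is correct and follows essentially the same route as the paper. Both use Markov's inequality, a conditional second-moment bound from the covariance of the multiplicities, extraction of $\max_i (n\pi_{n,i})^{-1}$, Cauchy--Schwarz with Assumption~\ref{hyp:ssp}, and dominated convergence for the tail $\sum_{j>J}$. Your explicit check that the negative Multinomial cross-covariance contributes $-n^{-2}\|\sum_i P_J g_i\|_\H^2\le 0$ is slightly more careful than the paper, which applies the entrywise bound $\le s_n\pi_{n,i}\mathbbm{1}_{i=k}$ to a quadratic form whose coefficients may have either sign.
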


\begin{proof}
	Let $\varepsilon, \delta > 0$ and $J \in \N$.
	By Markov's inequality
	\begin{equation*}
		\P{ \sum_{j > J} \dprod{\sqrt{s_n}(\nabla\Lsn - \nabla\Ln)(\hz)}{e_j}^2 \geq \varepsilon } \leq \frac{s_n}{\varepsilon} \E{ \sum_{j > J} \dprod{(\nabla\Lsn - \nabla\Ln)(\hz)}{e_j}^2 }.
	\end{equation*}
	Expanding the right hand side together with the tower property yields the quantity
	\begin{align*}
		&\frac{s_n}{n^2 \varepsilon} \E{ \sum_{j > J} \sum_{i,k=1}^n \frac{\mathcal{L}^\prime(Z_i, \hz, e_j) \mathcal{L}^\prime(Z_k, \hz, e_j)}{s_n^2 \pi_{n,i} \pi_{n,k}} (\gamma_{n,i} - s_n \pi_{n,i}) (\gamma_{n,k} - s_n \pi_{n,k}) } \\
		&\quad = \frac{1}{n^2 \varepsilon} \E{ \sum_{j > J} \sum_{i,k=1}^n \frac{\mathcal{L}^\prime(Z_i, \hz, e_j) \mathcal{L}^\prime(Z_k, \hz, e_j)}{s_n \pi_{n,i} \pi_{n,k}} \E{ (\gamma_{n,i} - s_n \pi_{n,i}) (\gamma_{n,k} - s_n \pi_{n,k}) \middle| \dn } }.
	\end{align*}
	However, from \eqref{eq:prop_wlln_R_part1}, for all $i, k \in \{ 1, \dots, n \}$ one has
	\begin{align*}
		\E{ (\gamma_{n,i} - s_n \pi_{n,i}) (\gamma_{n,k} - s_n \pi_{n,k}) \middle| \dn } &= \E{ \gamma_{n,i} \gamma_{n,k} \middle| \dn } - s_n \pi_{n,i} \E{ \gamma_{n,k} \middle| \dn } \\
		&\qquad - s_n \pi_{n,k} \E{ \gamma_{n,i} \middle| \dn } + s_n^2 \pi_{n,i} \pi_{n,k} \\
		&= \E{ \gamma_{n,i} \gamma_{n,k} \middle| \dn } - s_n^2 \pi_{n,i} \pi_{n,k} \\
		&\leq s_n \pi_{n,i} \mathbbm{1}_{i = k}
	\end{align*}
	Thus
	\begin{align*}
		&\frac{s_n}{\varepsilon} \E{ \sum_{j > J} \dprod{(\nabla\Lsn - \nabla\Ln)(\hz)}{e_j}^2 } \\
		&\quad \leq \frac{1}{\varepsilon} \E{ \sum_{j > J} \frac{1}{n^2} \sum_{i = 1}^n \frac{\mathcal{L}^\prime(Z_i, \hz, e_j)^2}{\pi_{n,i}} } \\
		&\quad \leq \frac{1}{n \varepsilon} \sum_{i = 1}^n \E{ \max_{k \in [n]} \cb*{ \frac{1}{n \pi_{n, k}} } \sum_{j > J} \mathcal{L}^\prime(Z_i, \hz, e_j)^2 } \\
		&\quad = \frac{1}{\varepsilon} \E{ \max_{k \in [n]} \cb*{ \frac{1}{n \pi_{n, k}} } \sum_{j > J} \mathcal{L}^\prime(Z, \hz, e_j)^2 } \\
		&\quad \leq \frac{1}{\varepsilon} \sqrt{\E{ \max_{ \in [n]} \cb*{ \frac{1}{n \pi_{n, i}} }^2}} \sqrt{\E{ \pa*{ \sum_{j > J} \ell^\prime(\hz(X), Y)^2 \dprod{K_X}{e_j}^2 }^2 }}.
	\end{align*}
	However, $\| \ell^\prime(\hz(X), Y) K_X \|_\H^2 < \infty$ hence
	\begin{equation*}
		Z^{(J)} \defeq \sum_{j > J} \ell^\prime(\hz(X), Y)^2 \dprod{K_X}{e_j}^2 \overset{\Pr}{\underset{J \rightarrow \infty}{\longrightarrow}} 0,
	\end{equation*}
	as the rest of a converging series.
	Furthermore, $\| \ell^\prime(\hz(X), Y) K_X \|_\H^2$ dominates the series $(Z^{(J)})_{J \geq 0}$ and has a second order moment by \Cref{hyp:distribution} whence, by the dominated convergence theorem in $L^2$, one has
	\begin{equation*}
		\E{ \pa*{ \sum_{j > J} \ell^\prime(\hz(X), Y)^2 \dprod{K_X}{e_j}^2 }^2 } \underset{J \rightarrow \infty}{\longrightarrow} 0.
	\end{equation*}
	Moreover, it is assumed from \Cref{hyp:ssp} that there exists $N \in \N$ and $C > 0$ such that for all $n \geq N$,
	\begin{equation*}
		\E{ \max_{1 \le i \le n} \cb*{ \frac{1}{n \pi_{n, i}} }^2} \leq C^2.
	\end{equation*}
	Hence for $J$ large enough such that
	\begin{equation*}
		\sqrt{\E{ \pa*{ \sum_{j > J} \ell^\prime(\hz(X), Y)^2 \dprod{K_X}{e_j}^2 }^2 }} < \delta \frac{\varepsilon}{C},
	\end{equation*}
	one has
	\begin{equation*}
		\limsup_{n \in \N}\ \P{ \sum_{j > J} \dprod{\sqrt{s_n}(\nabla\Lsn - \nabla\Ln)(\hz)}{e_j}^2 \geq \varepsilon } < \delta.
	\end{equation*}
	Thus $\sqrt{s_n}(\nabla\Lsn(\hz) - \nabla\Ln(\hz))$ is asymptotically finite dimensional.
\end{proof}

The following lemma is the main lemma for the proofs of \Cref{thm:srm_normality_R,thm:srm_normality_P,thm:srm_alpha_pilot_normality_R,thm:srm_alpha_pilot_normality_P}.

\begin{lemma} \label{lem:srm_normality_final}
	Let $(H_n)_{n > 0}$ be a sequence of $\H$ valued random variables and $(\Fn)_{n > 0}$ a sequence of $\sigma$-algebras.
	Assume that $H_n$ is asymptotically finite-dimensional, $\sigma(\dn) \subset \Fn$ and for all $f \in \H$,
	\begin{equation}
		\E{ e^{it \dprod{H_n}{f}} \middle| \Fn } \pconv e^{-\frac{1}{2}\dprod{\Sigma^* f}{f}}, \label{eq:lem_srm_normality_final_part1}
	\end{equation}
	for some covariance operator $\Sigma^*$.
	Then
	\begin{equation*}
		H_n + \sqrt{\frac{s_n}{n}} \sqrt{n} (\nabla\Ln(\hz) - \nabla\L(\hz)) \dconv \GP(0, \Sigma^* + c\Sigma).
	\end{equation*}
\end{lemma}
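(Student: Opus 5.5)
The plan is to combine a Cramér--Wold argument with asymptotic finite-dimensionality, using the conditional characteristic function to decouple the two pieces. Write $G_n \defeq \sqrt{n}(\nabla\Ln(\hz) - \nabla\L(\hz))$, which is $\sigma(\dn)$-measurable and hence $\Fn$-measurable. By \Cref{lem:erm_normality_1}, $G_n \dconv \GP(0,\Sigma)$. The first step is to show that $G_n$ is asymptotically finite-dimensional; this is exactly what the proof of \Cref{lem:erm_normality_1} established before invoking Theorem 7.7.5 of \citep{hsing2015theoretical}. Since $\sqrt{s_n/n} \to \sqrt{c}$, \Cref{lem:technical_sum_asymp_finite_dim} then gives that $H_n + \sqrt{s_n/n}\,G_n$ is asymptotically finite-dimensional. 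When $c = 0$ and the coefficient is not eventually nonzero, the same Markov-type bound works directly, so this case causes no difficulty. By Theorem 7.7.5 of \citep{hsing2015theoretical}, it then suffices to prove that for each $f \in \H$,
\[
\dprod{H_n}{f} + \sqrt{s_n/n}\,\dprod{G_n}{f} \dconv \mathcal N\pa*{0, \dprod{(\Sigma^* + c\Sigma) f}{f}},
\]
and the remainder of the argument establishes this.

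The second step is to compute the characteristic function at $t \in \R$ by conditioning on $\Fn$. Since $G_n$ is $\Fn$-measurable,
\[
\E{e^{it(\dprod{H_n}{f} + \sqrt{s_n/n}\dprod{G_n}{f})}} = \E{ e^{it\sqrt{s_n/n}\dprod{G_n}{f}}\, \E{e^{it\dprod{H_n}{f}} \middle| \Fn} }.
\]
Replacing $t$ by $tf$ in \eqref{eq:lem_srm_normality_final_part1}, or reading that hypothesis as holding for all $t$, the conditional factor equals $e^{-\frac12 t^2 \dprod{\Sigma^* f}{f}} + o_\Pr(1)$. This factor is bounded by $1$ in modulus, and $|e^{it\sqrt{s_n/n}\dprod{G_n}{f}}| = 1$. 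Dominated convergence, in the form that a bounded sequence tending to $0$ in probability tends to $0$ in $L^1$, then shows the expectation equals
\[
e^{-\frac12 t^2\dprod{\Sigma^* f}{f}}\, \E{e^{it\sqrt{s_n/n}\dprod{G_n}{f}}} + o(1).
\]

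The third step identifies the remaining expectation. By \Cref{lem:erm_normality_1} and Slutsky's lemma, $\sqrt{s_n/n}\dprod{G_n}{f} \dconv \mathcal N(0, c\dprod{\Sigma f}{f})$, so its characteristic function converges to $e^{-\frac12 t^2 c\dprod{\Sigma f}{f}}$. Multiplying the two limits gives the Gaussian characteristic function with variance $\dprod{(\Sigma^*+c\Sigma)f}{f}$, and Lévy's continuity theorem concludes.

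The main subtle point is the measurability/decoupling step: the argument relies on $G_n$ being $\Fn$-measurable, which is guaranteed by $\sigma(\dn)\subset\Fn$, together with the boundedness that lets the in-probability convergence of the conditional characteristic function pass inside the expectation. The finite-dimensionality bookkeeping for the sum is routine, given \Cref{lem:technical_sum_asymp_finite_dim}.
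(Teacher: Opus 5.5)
Your proposal is correct and follows the paper's proof essentially step for step. You condition on $\Fn$ using that $\sqrt{n}(\nabla\Ln(\hz)-\nabla\L(\hz))$ is $\sigma(\dn)$-measurable, and pass the in-probability convergence of the bounded conditional characteristic function through the expectation by dominated convergence. You then handle the remaining factor with \Cref{lem:erm_normality_1} and Slutsky, and get asymptotic finite-dimensionality of the sum from \Cref{lem:technical_sum_asymp_finite_dim} before applying the Hilbert-space Cram\'er--Wold criterion. The differences are cosmetic: you do the finite-dimensionality step first, and you read the hypothesis with the missing factor $t^2$ restored.
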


\begin{proof}
	Let $f \in \H$ and define
	\begin{gather*}
		S_n^* \defeq \dprod{H_n}{f}, \\
		S_n \defeq \sqrt{n} \dprod{\nabla\Ln(\hz) - \nabla\L(\hz)}{f}.
	\end{gather*}
	Let $t \in \R$ and consider the convergence of the characteristic function, i.e.\
	\begin{equation*}
		\av*{ \Ex e^{it(S_n^* + \sqrt\frac{s_n}{n} S_n)} - e^{-\frac{1}{2}t^2 \dprod{(\Sigma^* + c\Sigma) f}{f}} }.
	\end{equation*}
	First adding and subtracting $e^{it\sqrt\frac{s_n}{n} S_n} e^{-\frac{1}{2}t^2\dprod{\Sigma^* f}{f}}$ and using the triangle inequality one gets that the previous display is upper bounded by
	\begin{equation*}
		\av*{\E{ e^{it \sqrt\frac{s_n}{n} S_n} (e^{it S_n^*} - e^{-\frac{1}{2}t^2 \dprod{\Sigma^* f}{f}}) }} + e^{-\frac{1}{2}t^2 \dprod{\Sigma^* f}{f}} \av*{\E{ e^{it \sqrt\frac{s_n}{n} S_n} } - e^{-\frac{1}{2}t^2 c\dprod{\Sigma f}{f}}}.
	\end{equation*}
	By \Cref{lem:erm_normality_1}, $S_n \dconv \mathcal{N}(0, \dprod{\Sigma f}{f})$ and since $s_n / n \conv c$, Slutsky's theorem yields that $\sqrt{\frac{s_n}{n}} S_n \dconv \mathcal{N}(0, c\dprod{\Sigma f}{f})$.
	Thus
	\begin{equation*}
		\av*{ \E{ e^{it \sqrt\frac{s_n}{n} S_n} } - e^{-\frac{1}{2}t^2 c \dprod{\Sigma f}{f}}} \conv 0.
	\end{equation*}
	Moreover, $S_n$ is $\dn$-measurable thus
	\begin{align*}
		\av*{\E{ e^{it\sqrt\frac{s_n}{n} S_n} (e^{it S_n^*} - e^{-\frac{1}{2}t^2 \dprod{\Sigma^* f}{f}}) }} &= \av*{ \E{ e^{it \sqrt\frac{s_n}{n} S_n} \pa*{ \E{ e^{it S_n^*} \middle| \Fn } - e^{-\frac{1}{2}t^2 \dprod{\Sigma^* f}{f}} }}} \\
		&\leq \E{\av*{ \E{ e^{it S_n^*} \middle| \Fn } - e^{-\frac{1}{2}t^2 \dprod{\Sigma^* f}{f}} }}.
	\end{align*}
	However, $\av*{ \E{ e^{it S_n^*} \middle| \dn} } \leq 1$.
	Thus, by \eqref{eq:lem_srm_normality_final_part1} and the dominated convergence theorem
	\begin{equation*}
		\E{\av*{ \E{ e^{it S_n^*} \middle| \Fn } - e^{-\frac{1}{2}t^2 \dprod{\Sigma^* f}{f}} }} \conv 0.
	\end{equation*}
	As a consequence,
	\begin{equation*}
		S_n^* + \sqrt\frac{s_n}{n} S_n \dconv \mathcal{N}(0, \dprod{(\Sigma^* + c\Sigma) f}{f}).
	\end{equation*}
	Moreover, by \Cref{lem:erm_normality_1} $\sqrt{n}(\nabla\Ln(\hz) - \nabla\L(\hz))$ is asymptotically finite dimensional and so is $H_n$, by assumption.
	Then, by \Cref{lem:technical_sum_asymp_finite_dim} the sum of two asymptotically finite dimensional sequences of random variables in $\H$ is itself asymptotically finite dimensional.
	Therefore
	\begin{equation*}
		H_n + \sqrt{n}(\nabla\Ln(\hz) - \nabla\L(\hz)) \dconv \GP(0, \Sigma^* + c \Sigma). \qedhere
	\end{equation*}
\end{proof}

\subsection{Proof of \Cref{thm:srm_normality_R}}

\begin{lemma} \label{lem:srm_normality_1}
	Under \Cref{hyp:kernel,hyp:loss,hyp:distribution,hyp:ssp}, one has for all $t \in \R$ and $f \in \H$,
	\begin{equation*}
		\E{\exp\pa*{it S_{s_n}^*} \middle| \dn} \pconv \exp\pa*{-\frac{1}{2} t^2 \dprod{\Sigma^*_\mathrm{M}(\psi) f}{f}},
	\end{equation*}
	where
	\begin{equation*}
		S_{s_n}^* \defeq \sqrt{s_n} \dprod{(\nabla\Lsn(\hz) - \nabla\Ln(\hz))}{f},
	\end{equation*}
	and
	\begin{equation*}
		\Sigma^*_\mathrm{M}(\psi) \defeq \E{\frac{\E {\psi (Z)}}{\psi (Z)} \ell^\prime(\hz(X), Y)^2 K_X \otimes K_X} - \nabla\L(\hz) \otimes \nabla\L(\hz).
	\end{equation*}
\end{lemma}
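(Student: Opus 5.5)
Multinomial subsampling makes $S_{s_n}^*$ an i.i.d.\ sum conditionally on $\dn$. Since $\gamma_{n,i} = \sum_{j=1}^{s_n} \mathbbm{1}_{\{I_{n,j}=i\}}$, we can write
\[
S_{s_n}^* = \sum_{j=1}^{s_n} \eta_{n,j}, \qquad \eta_{n,j} \defeq \frac{1}{\sqrt{s_n}} \pa*{ \frac{\mathcal{L}^\prime(Z_{I_{n,j}}, \hz, f)}{n \pi_{n, I_{n,j}}} - \frac{1}{n} \sum_{i=1}^n \mathcal{L}^\prime(Z_i, \hz, f) },
\]
with $\mathcal{L}^\prime(Z,h,f) = \ell^\prime(h(X),Y) f(X)$ as in the proof of \Cref{lem:erm_normality_1}. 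Conditionally on $\dn$, the $\eta_{n,j}$ are i.i.d.\ and centred, because $\E{\mathcal{L}^\prime(Z_{I}, \hz, f)/(n\pi_{n,I}) \middle| \dn} = n^{-1}\sum_i \mathcal{L}^\prime(Z_i,\hz,f)$. We take the filtration $\Fnj = \sigma(\dn, I_{n,1},\dots,I_{n,j})$, so that $\Fnjm$-conditional expectations equal $\dn$-conditional ones and the product in \Cref{thm:technical_conditional_dconv} is exactly $\E{e^{itS^*_{s_n}} \middle| \dn}$. It therefore suffices to check the two hypotheses of that theorem with $\sigma^2 = \dprod{\Sigma^*_\mathrm{M}(\psi) f}{f}$ (replacing $\eta_{n,j}$ by $t\eta_{n,j}$, or treating $\sigma^2=0$ separately by a direct second-moment bound).

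\emph{Conditional variance.} Summing over $j$,
\[
\sum_{j=1}^{s_n} \E{\eta_{n,j}^2 \middle| \dn} = \frac{1}{n}\sum_{i=1}^n \frac{\mathcal{L}^\prime(Z_i,\hz,f)^2}{n\pi_{n,i}} - \pa*{\frac1n\sum_{i=1}^n \mathcal{L}^\prime(Z_i,\hz,f)}^2 .
\]
The last term tends to $\dprod{\nabla\L(\hz)}{f}^2$ by the law of large numbers. For the first term we insert \eqref{eq:ssp}, which gives $1/(n\pi_{n,i}) = \pa{n^{-1}\sum_j\psi(Z_j)}/\psi(Z_i)$. The term then factorises as $\pa{n^{-1}\sum_j \psi(Z_j)} \cdot n^{-1}\sum_i \psi(Z_i)^{-1}\mathcal{L}^\prime(Z_i,\hz,f)^2$. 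By the weak law of large numbers and \Cref{hyp:nonexplosive_cov}, this product converges to $\E{\psi(Z)}\,\E{\psi(Z)^{-1}\ell^\prime(\hz(X),Y)^2 f(X)^2}$. Integrability holds because $f(X)^2 \le \|f\|_\H^2 k(X,X)$. Together the two limits give $\dprod{\Sigma^*_\mathrm{M}(\psi) f}{f}$.

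\emph{Conditional Lindeberg condition.} This is the main obstacle. We bound the Lindeberg sum by a Lyapunov-type quantity. Write $\xi_i = \mathcal{L}^\prime(Z_i,\hz,f)/(n\pi_{n,i})$ and $M_n = \max_i (n\pi_{n,i})^{-1}$. Using $\mathbbm{1}_{|\eta|>\varepsilon}\le |\eta|^2/\varepsilon^2$ and $(a-b)^4\le 8(a^4+b^4)$, the sum $\sum_j \E{\eta_{n,j}^2\mathbbm{1}_{|\eta_{n,j}|>\varepsilon} \middle| \dn}$ is at most
\[
\frac{8}{\varepsilon^2 s_n}\pa*{\frac1n\sum_i \pi_{n,i}\, n\, \xi_i^4 + \pa*{\frac1n\sum_i \mathcal{L}^\prime(Z_i,\hz,f)}^4}
\le \frac{8}{\varepsilon^2 s_n}\pa*{ M_n^3 \frac1n\sum_i \mathcal{L}^\prime(Z_i,\hz,f)^4 + O_\Pr(1)} .
\]
Here $M_n = O_\Pr(1)$ by \Cref{hyp:ssp} and the Markov argument of the preceding remark. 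The empirical fourth moment is $O_\Pr(1)$ by \Cref{hyp:kernel,hyp:loss,hyp:distribution}, since $\ell^\prime(\hz(X),Y)$ has a finite fourth moment. Hence the whole bound is $O_\Pr(1/s_n)$, which tends to $0$ as $s_n\to\infty$.

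If fourth moments were unavailable we would instead truncate, but here the Lyapunov route suffices. Applying \Cref{thm:technical_conditional_dconv} then yields the claimed convergence in probability of the conditional characteristic function, for every $t\in\R$ and $f\in\H$.
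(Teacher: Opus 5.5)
Your proposal is correct and follows the paper's proof: the same array $\eta_{n,j}$ with filtration $\sigma(\dn, I_{n,1},\dots,I_{n,j})$, the same conditional-variance limit via $1/(n\pi_{n,i}) = \bigl(n^{-1}\sum_j \psi(Z_j)\bigr)/\psi(Z_i)$, and the same fourth-moment bound of order $O_\Pr(1/s_n)$ on the Lindeberg sum, closed by \Cref{thm:technical_conditional_dconv}. The differences are minor: you bound the fourth central moment with $(a-b)^4 \le 8(a^4+b^4)$ instead of expanding it, and you make explicit the use of \Cref{hyp:nonexplosive_cov} and the form \eqref{eq:ssp}, which the paper relies on only implicitly.
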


\begin{proof}
	The goal is to apply \Cref{thm:technical_conditional_dconv}.
	Recall that
	\begin{align*}
		\nabla\Lsn(\hz) &= \frac{1}{n} \sum_{i = 1}^n \frac{\ell^\prime(\hz(X_i), Y_i)}{s_n \pi_{n, i}} N_{n,i} K_{X_i} \\
		&= \frac{1}{s_n} \sum_{j = 1}^{s_n} \sum_{i = 1}^n \frac{\ell^\prime (\hz(X_i), Y_i)}{n\pi_{n,i}} K_{X_i} \mathbbm{1}_{\{ I_{n,j} = i \}},
	\end{align*}
	Let $f \in \H$ and define $S_{s_n}^* \defeq \dprod{\sqrt{s_n} (\nabla\Lsn - \nabla\Ln)(\hz)}{f}$ a real-valued random variable and $\mathcal{L}^\prime(Z, h, f) \defeq \ell^\prime(h(X), Y)f(X)$.
	Then
	\begin{align*}
		S_{s_n}^* &= \sum_{j = 1}^{s_n} \frac{1}{\sqrt{s_n}} \pa*{ \frac{1}{n} \sum_{i = 1}^n \frac{\mathcal{L}^\prime(Z_i, \hz, f)}{\pi_{n,i}} \mathbbm{1}_{\{ I_{n,j} = i \}} - \dprod{\nabla\Ln(\hz)}{f} } \\
		&= \sum_{j = 1}^{s_n} \eta_{n,j},
	\end{align*}
	where
	\begin{equation*}
		\eta_{n,j} \defeq \frac{1}{\sqrt{s_n}} \pa*{ \frac{1}{n} \sum_{i = 1}^n \frac{\mathcal{L}^\prime(Z_i, \hz, f)}{\pi_{n,i}} \mathbbm{1}_{\{ I_{n,j} = i \}} - \dprod{\nabla\Ln(\hz)}{f} },
	\end{equation*}
	for all $n \in \N$ and $j \in [s_n]$.
	Moreover, define the filtrations $(\Fnj)_{n \in \N, 1 \leq j \leq s_n}$ such that $\Fnj$ is the $\sigma$-algebra generated by $\dn$ and $I_{n,1}, \dots, I_{n,j}$ for $1 \leq j \leq s_n$ and $n \in \N$.
	Notice that for all $n \in \N$ and $j \in \{ 1, \dots, s_n \}$ the conditional distributions of $\eta_{n,j}$ with respect to $\dn$ and $\Fnjm$ are the same.
	Remark as well that, conditionally to $\dn$, $S_{s_n}^*$ is a sum of \iid\ random variables such that $\E{ \eta_{n, j} \middle| \dn } = 0$ since $\P{ I_{n,j} = i \middle| \dn } = \pi_{n, i}$.
	Hence $\{ \eta_{n,j} : n \in \N, 1 \leq j \leq s_n \}$ is a martingale difference array for the filtration $(\Fnj)_{n \in \N, 1 \leq j \leq s_n}$.
	Moreover, its conditional variance reads
	\begin{align*}
		\sum_{j = 1}^{s_n} \E{ \eta_{n, j}^2 \middle| \Fnjm} &= s_n \E{ \eta_{n, 1}^2 \middle| \dn } \\
		&= \E{ \pa*{ \frac{1}{n} \sum_{i = 1}^n \frac{\mathcal{L}^\prime(Z_i, \hz, f)}{\pi_{n,i}} \mathbbm{1}_{\{ I_{n,1} = i \}} - \dprod{\nabla\Ln(\hz)}{f} }^2 \middle| \dn }.
	\end{align*}
	Since $\E{ \frac{1}{n} \sum_{i = 1}^n \frac{\mathcal{L}^\prime(Z_i, \hz, f)}{\pi_{n,i}} \mathbbm{1}_{\{ I_{n,1} = i \}} \middle| \dn} = \dprod{\nabla\Ln(\hz)}{f}$, the previous display reads
	\begin{equation*}
		\sum_{j = 1}^{s_n} \E{ \eta_{n, j}^2 \middle| \Fnjm} = \E{ \pa*{ \frac{1}{n} \sum_{i = 1}^n \frac{\mathcal{L}^\prime(Z_i, \hz, f)}{\pi_{n,i}} \mathbbm{1}_{\{ I_{n,1} = i \}} }^2 \middle| \dn } - \dprod{\nabla\Ln(\hz)}{f}^2.
	\end{equation*}
	However, only one term of the sum in the right hand side of the previous display is non-zero.
	Thus
	\begin{align*}
		\E{ \pa*{ \frac{1}{n} \sum_{i = 1}^n \frac{\mathcal{L}^\prime(Z_i, \hz, f)}{\pi_{n,i}} \mathbbm{1}_{\{ I_{n,1} = i \}} }^2 \middle| \dn } &= \frac{1}{n^2} \sum_{i = 1}^n \frac{\mathcal{L}^\prime(Z_i, \hz, f)^2}{\pi_{n,i}^2} \P{ I_{n,1} = i \middle| \dn } \\
		&= \frac{1}{n^2} \sum_{i = 1}^n \frac{\mathcal{L}^\prime(Z_i, \hz, f)^2}{\pi_{n,i}}.
	\end{align*}
	Hence, the conditional variance reads
	\begin{equation*}
		\sum_{j = 1}^{s_n} \E{ \eta_{n, j}^2 \middle| \Fnjm} = \frac{1}{n^2} \sum_{i = 1}^n \frac{\mathcal{L}^\prime(Z_i, \hz, f)^2}{\pi_{n,i}} - \dprod{\nabla\Ln(\hz)}{f}^2.
	\end{equation*}
	If one writes $\pi_{n, i} = \psi(Z_i) / \sum_{j = 1}^n \psi(Z_j)$, then by the law of large numbers
	\begin{align*}
		\sum_{j = 1}^{s_n} \E{ \eta_{n, j}^2 \middle| \Fnjm} &= \frac{1}{n^2} \sum_{i = 1}^n \pa*{ \sum_{j = 1}^n \psi(Z_j) } \frac{\mathcal{L}^\prime(Z_i, \hz, f)^2}{\psi(Z_i)} - \dprod{\nabla\Ln(\hz)}{f}^2 \\
		&= \pa*{ \frac{1}{n} \sum_{j = 1}^n \psi(Z_j) } \pa*{ \frac{1}{n} \sum_{i = 1}^n \frac{\mathcal{L}^\prime(Z_i, \hz, f)^2}{\psi(Z_i)} } - \dprod{\nabla\Ln(\hz)}{f}^2 \\
		&\overset{\Pr}{\underset{n \rightarrow \infty}{\longrightarrow}} \E{ \psi(Z) } \E{\frac{\mathcal{L}^\prime(Z, \hz, f)^2}{\psi(Z)} } - \dprod{\nabla\L(\hz)}{f}^2 = \dprod{\Sigma^*_\mathrm{M}(\psi) f}{f},
	\end{align*}
	where $\Sigma^*_\mathrm{M}(\psi) \defeq \E{\frac{\E {\psi (Z)}}{\psi (Z)} \ell^\prime(\hz(X), Y)^2 K_X \otimes K_X} - \nabla\L(\hz) \otimes \nabla\L(\hz)$.

	Let us now show that the martingale difference array satisfies the conditional Lindeberg-Feller condition.
	Let $\varepsilon > 0$,
	\begin{align*}
		&\sum_{j = 1}^{s_n} \E{ \eta_{n,j}^2 \mathbbm{1}_{\{ | \eta_{n,j} | > \varepsilon \}}  \middle| \dn } \\
		&\quad = s_n \E{ \eta_{n,1}^2 \mathbbm{1}_{\{ | \eta_{n,1} | > \varepsilon \}}  \middle| \dn } \\
		&\quad \leq \frac{s_n}{\varepsilon^2} \E{ \eta_{n,1}^4 \middle| \dn } \\
		&\quad = \frac{1}{s_n \varepsilon^2} \E{ \pa*{ \frac{1}{n} \sum_{i = 1}^n \frac{\mathcal{L}^\prime(Z_i, \hz, f)}{\pi_{n,i}} \mathbbm{1}_{\{ I_{n,j} = i \}} - \dprod{\nabla\Ln(\hz)}{f} }^4 \middle| \dn }.
	\end{align*}
	Furthermore expanding the forth order moment yields
	\begin{align*}
		&\sum_{j = 1}^{s_n} \E{ \eta_{n,j}^2 \mathbbm{1}_{\{ | \eta_{n,j} | > \varepsilon \}} \middle| \dn } \\
		&\quad \leq \frac{1}{s_n \varepsilon^2} \Bigg( \frac{1}{n^4} \sum_{i = 1}^n \frac{\mathcal{L}^\prime(Z_i, \hz, f)^4}{\pi_{n,i}^3} - 4 \frac{\dprod{\nabla\Ln(\hz)}{f}}{n^3} \sum_{i = 1}^n \frac{\mathcal{L}^\prime(Z_i, \hz, f)^3}{\pi_{n,i}^2} \\
		&\qquad\qquad + 6\frac{\dprod{\nabla\Ln(\hz)}{f}^2}{n^2} \sum_{i = 1}^n \frac{\mathcal{L}^\prime(Z_i, \hz, f)^2}{\pi_{n,i}} - 3\dprod{\nabla\Ln(\hz)}{f}^4 \Bigg).
	\end{align*}
	However by \Cref{hyp:ssp}, for all $\delta > 0$,
	\begin{align*}
		\P{ \max_{1 \le i \le n} \cb*{ \frac{1}{n \pi_{n, i}} } > \delta } \leq \frac{1}{\delta} \E{ \max_{1 \le i \le n} \cb*{ \frac{1}{n \pi_{n, i}} } } \leq \frac{1}{\delta} \sqrt{\E{ \max_{1 \le i \le n} \cb*{ \frac{1}{n \pi_{n, i}} }^2 }} = O\pa*{ \frac{1}{\delta} }.
	\end{align*}
	Whence, $\max_{1 \le i \le n} \cb*{ n \pi_{n, i} }^{-1} = O_\Pr(1)$ and more specifically $\max_{1 \le i \le n} \cb*{ n \pi_{n, i} }^{-p} = O_\Pr(1)$ for any $p \in \{ 1 , 2, 3 \}$.
	Hence, using the fact that $\dprod{\nabla\Ln(\hz)}{f}^4 = O_\Pr(1)$,
	\begin{align*}
		\frac{1}{n^{p+1}} \sum_{i = 1}^n \frac{ | \mathcal{L}^\prime(Z_i, \hz, f) |^{p+1}}{\pi_{n, i}^p} &\leq \max_{1 \le i \le n} \cb*{ \frac{1}{n^p \pi_{n, i}^p} } \frac{1}{n} \sum_{i = 1}^n | \mathcal{L}^\prime(Z_i, \hz, f) |^{p+1} \\
		&= \max_{1 \le i \le n} \cb*{ \frac{1}{n \pi_{n, i}} }^p \frac{1}{n} \sum_{i = 1}^n | \mathcal{L}^\prime(Z_i, \hz, f) |^{p+1} = O_\Pr(1).
	\end{align*}
	Thus $\sum_{j = 1}^{s_n} \E{ \eta_{n,j}^2 \mathbbm{1}_{\{ | \eta_{n,j} | > \varepsilon \}} \middle| \dn } \pconv 0$.
	Whence, from \Cref{thm:technical_conditional_dconv},
	\begin{equation*}
		\prod_{j = 1}^{s_n} \E{e^{it \eta_{n,j}} \middle| \Fnjm} \pconv \exp\pa*{-\frac{1}{2}t^2 \dprod{\Sigma^*_\mathrm{M}(\psi) f}{f}}.
	\end{equation*}
	Finally, $\E{ e^{it \eta_{n, j}} \middle| \Fnjm} = \E{ e^{it \eta_{n, j}} \middle| \dn}$ and conditionally to $\dn$, $\eta_{n, 1}, \dots, \eta_{n,s_n}$ are independent hence
	\begin{equation*}
		\E{ e^{it S_{s_n}^*} \middle| \dn} = \prod_{j = 1}^{s_n} \E{e^{it \eta_{n,j}} \middle| \Fnjm} \pconv \exp\pa*{-\frac{1}{2}t^2 \dprod{\Sigma^*_\mathrm{M}(\psi) f}{f}}. \qedhere
	\end{equation*}
\end{proof}

We can now prove the main theorem using the above intermediate results.
\begin{proof}[Proof of \Cref{thm:srm_normality_R}]
	The fundamental theorem of calculus yields that
	\begin{equation*}
		0 = \nabla\Rsn(\hsn) = \nabla\Rsn(\hz) + \int_0^1 \nabla^2\Rsn(\bar h_{n,t})(\hsn - \hz) \,\d t,
	\end{equation*}
	with $\bar h_{n,t} = \hz + t(\hsn - \hz)$.
	By \Cref{lem:srm_normality_2}, one has
	\begin{align*}
		-\sqrt{s_n} \nabla\Rsn(\hz) = \sqrt{s_n} \nabla^2\risk(\hz) (\hsn - \hz) + o_\Pr(1).
	\end{align*}
	By definition of $\Rsn(\hz)$, the fact that $\L(\hz) = - \lz \hz$ and adding and retrieving $\L(\hz)$ and $\Ln(\hz)$, one has
	\begin{align*}
		\nabla\Rsn(\hz) &= \nabla\Lsn(\hz) + \lsn \hz \\
		&= (\Lsn(\hz) - \nabla\Ln(\hz)) + (\nabla\Ln(\hz) - \nabla\L(\hz)) + (\lsn - \lz) \hz.
	\end{align*}
	Whence
	\begin{align*}
		\sqrt{s_n} \nabla^2\risk(\hz) (\hsn - \hz) + o_\Pr(1) &= - \sqrt{s_n} (\nabla\Lsn(\hz) - \nabla\Ln(\hz)) \\
		&\qquad - \sqrt{\frac{s_n}{n}} \sqrt{n} (\nabla\Ln(\hz) - \nabla\L(\hz)) \\
		&\qquad - \sqrt{s_n} (\lsn - \lz) \hz.
	\end{align*}
	First, $\sqrt{s_n} (\lsn - \ln) \hz \pconv 0$ by \Cref{hyp:lsn_speed}.
	Applying \ref{lem:srm_normality_final} with $\Fn = \sigma(\dn)$ and $H_n = \sqrt{s_n} (\nabla\Lsn(\hz) - \nabla\Ln(\hz))$, together with \Cref{lem:srm_normality_1,lem:srm_normality_1_bis} yields
	\begin{equation*}
		\sqrt{s_n} \nabla^2\risk(\hz) (\hsn - \hz) \dconv \GP(0, \Sigma^*_\mathrm{M}(\psi) + c \Sigma). \qedhere
	\end{equation*}
\end{proof}

\subsection{Proof of \Cref{thm:srm_normality_P}}

\begin{lemma} \label{lem:poisson_srm_normality_1}
	Under \Cref{hyp:kernel,hyp:loss,hyp:distribution,hyp:ssp}, furthermore assume that $s_n / n \conv c \in [0, 1]$ and $\pi_{n, i} \leq 1 /s_n$ almost surely for all $n > 0$ and $i \in \{ 1, \dots, n \}$.
	Then for all $t \in \R$ and $f \in \H$,
	\begin{equation*}
		\E{\exp\pa*{it S_{s_n}^*} \middle| \dn} \pconv \exp\pa*{-\frac{1}{2} t^2 \dprod{\Sigma^*_{\mathrm{P},c}(\psi) f}{f}},
	\end{equation*}
	with
	\begin{equation*}
	S_{s_n}^* \defeq \sqrt{s_n} \dprod{(\nabla\Lsn(\hz) - \nabla\Ln(\hz))}{f},
	\end{equation*}
	and
	\begin{equation*}
		\Sigma^*_{\mathrm{P},c}(\psi) \defeq \E{\frac{\E {\psi (Z)}}{\psi (Z)} \ell^\prime(\hz(X), Y)^2 K_X \otimes K_X} - c\E{ \ell^\prime(\hz(X), Y)^2 K_X \otimes K_X}.
	\end{equation*}
\end{lemma}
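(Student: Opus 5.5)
Conditionally on \(\dn\), the Poisson multiplicities \(\delta_{n,1},\dots,\delta_{n,n}\) are independent Bernoulli variables with parameters \(s_n\pi_{n,i}\le 1\). We will therefore write \(S_{s_n}^*\) as a sum of conditionally independent centred terms indexed by \(i\), not by draws \(j\), and apply \Cref{thm:technical_conditional_dconv}. Concretely, with \(\mathcal{L}'(Z,h,f)=\ell'(h(X),Y)f(X)\), we set
\[
\eta_{n,i}\defeq \frac{\sqrt{s_n}}{n}\,\frac{\mathcal{L}'(Z_i,\hz,f)}{s_n\pi_{n,i}}\,(\delta_{n,i}-s_n\pi_{n,i}),\qquad i\in\{1,\dots,n\},
\]
so that \(S_{s_n}^*=\sum_{i=1}^n\eta_{n,i}\), since \(\nabla\Ln(\hz)=\frac1n\sum_i \ell'(\hz(X_i),Y_i)K_{X_i}\). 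We take \(\Fni=\sigma(\dn,\delta_{n,1},\dots,\delta_{n,i})\). Then \(\eta_{n,i}\) is \(\Fni\)-measurable and, by conditional independence, \(\E{\eta_{n,i}\middle|\Fnim}=\E{\eta_{n,i}\middle|\dn}=0\). Hence \((\eta_{n,i})\) is a martingale difference array with \(k_n=n\). At the end, conditional independence also gives \(\E{e^{itS_{s_n}^*}\middle|\dn}=\prod_i\E{e^{it\eta_{n,i}}\middle|\Fnim}\), exactly as in \Cref{lem:srm_normality_1}.

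\textbf{Conditional variance.} Since \(\V{\delta_{n,i}\middle|\dn}=s_n\pi_{n,i}(1-s_n\pi_{n,i})\), we get
\[
\sum_{i=1}^n\E{\eta_{n,i}^2\middle|\Fnim}=\frac{1}{n^2}\sum_{i=1}^n\frac{\mathcal{L}'(Z_i,\hz,f)^2}{\pi_{n,i}}-\frac{s_n}{n}\cdot\frac1n\sum_{i=1}^n\mathcal{L}'(Z_i,\hz,f)^2 .
\]
Substituting \(\pi_{n,i}=\psi(Z_i)/\sum_j\psi(Z_j)\), the first term factorises as \(\big(\frac1n\sum_j\psi(Z_j)\big)\big(\frac1n\sum_i\mathcal{L}'(Z_i,\hz,f)^2/\psi(Z_i)\big)\). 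By the ordinary weak law of large numbers together with \Cref{hyp:nonexplosive_cov}, this converges to \(\E{\psi(Z)}\,\E{\mathcal{L}'(Z,\hz,f)^2/\psi(Z)}\); the second factor is integrable because \(f(X)^2\le\|f\|_\H^2k(X,X)\). The second term converges to \(c\,\E{\mathcal{L}'(Z,\hz,f)^2}\), using \(s_n/n\to c\) and \Cref{hyp:distribution}. The limit is \(\dprod{\Sigma^*_{\mathrm{P},c}(\psi)f}{f}\). If this limit is \(0\), we will handle it separately: \(S_{s_n}^*\to0\) in conditional \(L^2\), so the characteristic function tends to \(1\). Otherwise \Cref{thm:technical_conditional_dconv} applies with \(\sigma^2>0\).

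\textbf{Conditional Lindeberg condition.} This is the main technical step. We will bound the indicator term by \(\varepsilon^{-2}\sum_i\E{\eta_{n,i}^4\middle|\dn}\). For a Bernoulli variable with parameter \(p\), \(\E{(\delta-p)^4}\le p\), so
\[
\sum_i\E{\eta_{n,i}^4\middle|\dn}\le \frac{s_n^2}{n^4}\sum_i\frac{\mathcal{L}'(Z_i,\hz,f)^4}{s_n^4\pi_{n,i}^4}\,s_n\pi_{n,i}=\frac{1}{s_n}\max_i\cb*{\frac{1}{n\pi_{n,i}}}^3\frac1n\sum_i\mathcal{L}'(Z_i,\hz,f)^4 .
\]
The maximum is \(O_\Pr(1)\) by \Cref{hyp:ssp} and Markov's inequality, as in the remark following that assumption. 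The average is \(O_\Pr(1)\) by \Cref{hyp:distribution}, since \(\mathcal{L}'(Z,\hz,f)\) has a fourth moment under \Cref{hyp:kernel}. The whole bound is therefore \(O_\Pr(s_n^{-1})\to0\), assuming \(s_n\to\infty\). The hypothesis \(\pi_{n,i}\le 1/s_n\) is needed only to make the Bernoulli laws valid and to keep the variance factor \(1-s_n\pi_{n,i}\) nonnegative.

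Applying \Cref{thm:technical_conditional_dconv} and the product identity then yields the claimed convergence of \(\E{e^{itS_{s_n}^*}\middle|\dn}\).
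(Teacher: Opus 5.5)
Your proposal is correct and follows the paper's proof essentially step for step. It uses the same per-observation martingale difference array $\eta_{n,i}$ with filtration $\sigma(\dn,\delta_{n,1},\dots,\delta_{n,i})$, the same Bernoulli-variance computation and law-of-large-numbers limit for the conditional variance, and the same Lindeberg bound through conditional fourth moments (your $\E{(\delta-p)^4}\le p$ is a valid sharpening of the paper's $4p$). The points you make explicit are assumptions the paper's argument leaves implicit. These are the use of \Cref{hyp:nonexplosive_cov} and of $s_n\to\infty$, neither stated in the lemma, and the separate handling of a zero limiting variance, which \Cref{thm:technical_conditional_dconv} formally excludes; so your version is slightly more complete.
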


\begin{proof}
	The goal is to apply \Cref{thm:technical_conditional_dconv}.
	Recall that
	\begin{align*}
		\nabla\Lsn(\hz) = \frac{1}{n} \sum_{i = 1}^{n} \frac{\ell^\prime (\hz(X_i), Y_i)}{s_n\pi_{n,i}} K_{X_i} \delta_{n,i},
	\end{align*}
	Let $f \in \H$ and define $S_{s_n}^* \defeq \dprod{\sqrt{s_n} (\nabla\Lsn - \nabla\Ln)(\hz)}{f}$ a real-valued random variable and $\mathcal{L}^\prime(Z, h, f) \defeq \ell^\prime(h(X), Y)f(X)$, then
	\begin{align*}
		S_{s_n}^* &= \frac{\sqrt{s_n}}{n} \sum_{i = 1}^{n} \frac{\mathcal{L}^\prime(Z_i, \hz, f)}{s_n \pi_{n, i}} \delta_{n,i} - \frac{\sqrt{s_n}}{n} \sum_{i = 1}^{n} \mathcal{L}^\prime(Z_i, \hz, f) \\
		&= \sum_{i = 1}^{n} \frac{\mathcal{L}^\prime(Z_i, \hz, f)}{n \sqrt{s_n} \pi_{n,i}}  (\delta_{n,i} - s_n \pi_{n,i} ) \\
		&= \sum_{i = 1}^{n} \eta_{n,i},
	\end{align*}
	where
	\begin{equation*}
		\eta_{n,i} \defeq \frac{\mathcal{L}^\prime(Z_i, \hz, f)}{n \sqrt{s_n} \pi_{n,i}}  (\delta_{n,i} - s_n \pi_{n,i} ),
	\end{equation*}
	for all $n \in \N$ and $i \in \{ 1, \dots, n \}$.
	Moreover, define the filtration array $(\Fni)_{n \in \N, 1 \leq i \leq n}$ such that $\Fni$ is the $\sigma$-algebra generated by $\dn$ and $\delta_{n,1}, \dots, \delta_{n,i}$ for $1 \leq i \leq n$ and $n \in \N$.
	Notice that for all $n \in \N$ and $i \leq n$ the conditional distributions of $\eta_{n,i}$ with respect to $\dn$ and $\Fnim$ are the same.
	Remark as well that, conditionally to $\dn$, $S_{s_n}^*$ is a sum of independent random variables such that $\E{ \eta_{n, i} \middle| \dn } = 0$ since $\E{ \delta_{n,i} \middle| \dn } = s_n \pi_{n, i}$.
	Hence $\{ \eta_{n,i} : n \in \N, 1 \leq i \leq n \}$ is a martingale difference array for the filtration $(\Fni)_{n \in \N, 1 \leq i \leq n}$.
	Moreover, its conditional variance reads
	\begin{align*}
		\sum_{i = 1}^{n} \E{ \eta_{n, i}^2 \middle| \Fnim} &= \sum_{i = 1}^{n} \frac{\mathcal{L}^\prime(Z_i, \hz, f)^2}{n^2 s_n \pi_{n, i}^2} \E{ (\delta_{n,i} - s_n \pi_{n, i})^2 \middle| \dn}.
	\end{align*}
	Since, conditionally to $\dn$, $\delta_{n,i}$ follows a Bernoulli distribution of parameter $s_n \pi_{n, i}$, the previous display reads
	\begin{align*}
		\sum_{i = 1}^{n} \E{ \eta_{n, i}^2 \middle| \Fnim} &= \sum_{i = 1}^{n} \frac{s_n \pi_{n, i} (1 - s_n \pi_{n, i})}{n^2 s_n \pi_{n, i}^2} \mathcal{L}^\prime(Z_i, \hz, f)^2 \\
		&= \frac{1}{n^2} \sum_{i = 1}^{n} \frac{\mathcal{L}^\prime(Z_i, \hz, f)^2}{\pi_{n, i}} - \frac{s_n}{n^2} \sum_{i = 1}^{n} \mathcal{L}^\prime(Z_i, \hz, f)^2.
	\end{align*}
	On the one hand
	\begin{equation*}
		\frac{s_n}{n^2} \sum_{i = 1}^{n} \mathcal{L}^\prime(Z_i, \hz, f)^2 \pconv c\E{ \mathcal{L}^\prime(Z, \hz, f)^2 },
	\end{equation*}
	as it is assumed that $s_n / n \conv c \in [0,1]$.
	On the other hand, if one writes $\pi_{n, i} = \psi(Z_i) / \sum_{j = 1}^n \psi(Z_j)$, by the law of large numbers
	\begin{align*}
		\frac{1}{n^2} \sum_{i = 1}^{n} \frac{\mathcal{L}^\prime(Z_i, \hz, f)^2}{\pi_{n, i}} &= \frac{1}{n^2} \sum_{i = 1}^n \pa*{ \sum_{j = 1}^n \psi(Z_j) } \frac{\mathcal{L}^\prime(Z_i, \hz, f)^2}{\psi(Z_i)} \\
		&= \pa*{ \frac{1}{n} \sum_{j = 1}^n \psi(Z_j) } \pa*{ \frac{1}{n} \sum_{i = 1}^n \frac{\mathcal{L}^\prime(Z_i, \hz, f)^2}{\psi(Z_i)} } \\
		&\pconv \E{ \psi(Z) } \E{\frac{\mathcal{L}^\prime(Z, \hz, f)^2}{\psi(Z)} }.
	\end{align*}
	Therefore
	\begin{align*}
		\sum_{i = 1}^{n} \E{ \eta_{n, i}^2 \middle| \Fnim} &\pconv \E{\frac{\Ex \psi(Z)}{\psi(Z)} \mathcal{L}^\prime(Z, \hz, f)^2 } - c\E{ \mathcal{L}^\prime(Z, \hz, f)^2 } \\
		&\qquad \defeq \dprod{\Sigma^*_{\mathrm{P},c}(\psi) f}{f},
	\end{align*}
	where
	\begin{equation*}
		\Sigma^*_{\mathrm{P},c}(\psi) \defeq \E{\frac{\Ex \psi (Z)}{\psi (Z)} \ell^\prime(\hz(X), Y)^2 K_X \otimes K_X} - c \E{ \ell^\prime(\hz(X), Y)^2 K_X \otimes K_X}.
	\end{equation*}
	Moreover, let us now show that the martingale difference array satisfies the conditional Lindeberg-Feller condition.
	Let $\varepsilon > 0$,
	\begin{align*}
		\sum_{i = 1}^{n} \E{ \eta_{n,i}^2 \mathbbm{1}_{\{ | \eta_{n,i} | > \varepsilon \}} \middle| \dn } &\leq \frac{1}{\varepsilon^2} \sum_{i = 1}^n \E{ \eta_{n,i}^4 \mathbbm{1}_{\{ | \eta_{n,i} | > \varepsilon \}} \middle| \dn } \\
		&\leq \frac{1}{\varepsilon^2} \sum_{i = 1}^n \E{ \eta_{n,i}^4 \middle| \dn } \\
		&= \frac{1}{\varepsilon^2} \sum_{i=1}^{n} \frac{\mathcal{L}^\prime(Z_i, \hz, f)^4}{n^4 s_n^2 \pi_{n, i}^4} \E{ (\delta_{n,i} - s_n \pi_{n, i})^4 \middle| \dn}.
	\end{align*}
		Furthermore
	\begin{equation*}
		\E{ (\delta_{n,i} - s_n \pi_{n, i})^4 \middle| \dn} = p(1-p) (1 - 3 p(1-p)) \leq 4p,
	\end{equation*}
	where $p \defeq s_n \pi_{n, i} \in [0, 1]$, since $1 - p \leq 1$ and $3p^2 - 3p \leq 3$.
	Hence bounding the forth order moment yields
	\begin{align*}
		\sum_{i = 1}^{n} \E{ \eta_{n,i}^2 \mathbbm{1}_{\{ | \eta_{n,i} | > \varepsilon \}}  \middle| \dn } &\leq \frac{1}{n^4 \varepsilon^2} \sum_{i=1}^{n} \frac{p(1-p) (1 - 3 p(1-p))}{p^2 \pi_{n, i}^2} \mathcal{L}^\prime(Z_i, \hz, f)^4 \\
		&\leq \frac{4}{n^4 \varepsilon^2} \sum_{i=1}^{n} \frac{1}{p \pi_{n, i}^2} \mathcal{L}^\prime(Z_i, \hz, f)^4 \\
		&= \frac{4}{s_n \varepsilon^2} \max_{1 \le i \le n} \cb*{ \frac{1}{n  \pi_{n,i}} }^3 \pa*{ \frac{1}{n} \sum_{i=1}^{n} \mathcal{L}^\prime(Z_i, \hz, f)^4 }.
	\end{align*}
	However, by \Cref{hyp:ssp} there exists $M > 0$ and $N \in \N$ such that for all $n \geq N$,
	\begin{equation*}
		\E{ \max_{1 \le i \le n} \cb*{ \frac{1}{n  \pi_{n,i}} }^2 } \leq M^2,
	\end{equation*}
	hence for all $\varepsilon > 0$,
	\begin{align*}
		\P{ \max_{1 \le i \le n} \cb*{ \frac{1}{n  \pi_{n,i}} }^3 > M^{3/2} \varepsilon^{-3/2} } &= \P{ \max_{1 \le i \le n} \cb*{ \frac{1}{n  \pi_{n,i}} }^2 > M \varepsilon^{-1} } \\
		&\leq \frac{\varepsilon}{M^2} \E{ \max_{1 \le i \le n} \cb*{ \frac{1}{n  \pi_{n,i}} }^2 } \\
		&\leq  \varepsilon.
	\end{align*}
	Therefore $\max_{1 \le i \le n} \cb*{ n \pi_{n,i} }^{-3} = O_\Pr(1)$.
	Moreover, by \Cref{hyp:distribution}
	\begin{equation*}
		\frac{1}{n} \sum_{i = 1}^n \mathcal{L}^\prime(Z_i, \hz, f)^4 = O_\Pr(1),
	\end{equation*}
	thus
	\begin{equation*}
		\sum_{i = 1}^{n} \E{ \eta_{n,i}^2 \mathbbm{1}_{\{ | \eta_{n,i} | > \varepsilon \}} \middle| \dn } \leq \frac{4}{s_n \varepsilon} \max_{1 \le i \le n} \cb*{\frac{1}{n \pi_{n, i}}}^3 \pa*{\frac{1}{n} \sum_{i = 1}^n \mathcal{L}^\prime(Z_i, \hz, f)^4} \pconv 0,
	\end{equation*}
	and from \Cref{thm:technical_conditional_dconv}
	\begin{equation*}
		\prod_{i = 1}^{n} \E{e^{it \eta_{n,i}} \middle| \Fnim} \pconv e^{-\frac{1}{2}t^2 \dprod{\Sigma^*_{\mathrm{P},c}(\psi) f}{f}}.
	\end{equation*}
	Finally, $\E{ e^{it \eta_{n, i}} \middle| \Fnim} = \E{ e^{it \eta_{n, i}} \middle| \dn}$ and conditionally to $\dn$, $\eta_{n, 1}, \dots, \eta_{n,n}$ are independent hence
	\begin{equation*}
		\E{ \exp\pa*{it S_{s_n}^*} \middle| \dn} = \prod_{i = 1}^{n} \E{\exp\pa*{it \eta_{n,i}} \middle| \Fnim} \pconv \exp\pa*{-\frac{1}{2}t^2 \dprod{\Sigma^*_{\mathrm{P},c}(\psi) f}{f}}. \qedhere
	\end{equation*}
\end{proof}

We can now prove the main theorem using the above intermediate results.

\begin{proof}[Proof of \Cref{thm:srm_normality_P}]
	Following the same steps as in the proof of \Cref{thm:srm_normality_R}, one has
	\begin{align*}
		\sqrt{s_n} \nabla^2\risk(\hz) (\hsn - \hz) + o_\Pr(1) &= - \sqrt{s_n} (\nabla\Lsn(\hz) - \nabla\Ln(\hz)) \\
		&\qquad - \sqrt{\frac{s_n}{n}} \sqrt{n} (\nabla\Ln(\hz) - \nabla\L(\hz)).
	\end{align*}
	Applying \ref{lem:srm_normality_final} with $\Fn = \sigma(\dn)$ and $H_n = \sqrt{s_n} (\nabla\Lsn(\hz) - \nabla\Ln(\hz))$, together with \Cref{lem:poisson_srm_normality_1,lem:srm_normality_1_bis} yields
	\begin{equation*}
		\sqrt{s_n} \nabla^2\risk(\hz) (\hsn - \hz) \dconv \GP(0, \Sigma^*_{\mathrm{P},c}(\psi) + c \Sigma). \qedhere
	\end{equation*}
\end{proof}

\subsection{Proof of \Cref{cor:uniform_ssp_R} and \Cref{cor:uniform_ssp_P}}

\begin{proof}
	First notice that taking a constant importance \( \psi \) yields uniform probabilities and that both respectively verify \Cref{hyp:ssp,hyp:nonexplosive_cov}.
	Thus, apply \Cref{thm:srm_normality_R,thm:srm_normality_P} with
	constant $\psi \equiv 1$.
	This yields
	\begin{equation*}
		\Sigma^*_\mathrm{M}(\psi) = \E{\ell^\prime(\hz(X), Y)^2 K_X \otimes K_X } - \nabla\L(\hz) \otimes \nabla\L(\hz) = \Sigma,
	\end{equation*}
	and
	\begin{equation*}
		\Sigma^*_{\mathrm{P},c}(\psi) = (1 - c)\E{\ell^\prime(\hz(X), Y)^2 K_X \otimes K_X }.
	\end{equation*}
	Which concludes the proof.
\end{proof}

\subsection{Proof of \Cref{prop:optimal_ssp_R}}

\begin{proof}
	Let $\psi : \Z \rightarrow \R^*_+$.
	By Cauchy-Schwarz, one has
	\begin{align*}
		\Tr(\Sigma^*_\mathrm{M}(\psi)) &= \E{ \psi(Z) }\E{ \frac{1}{\psi(Z)} \ell^\prime(\hz(X), Y)^2 k(X,X)} - \| \nabla\L(\hz)\|_\H^2 \\
		&\geq \E{ \sqrt{\frac{\psi(Z)}{\psi(Z)} \ell^\prime(\hz(X), Y)^2 k(X,X)} }^2 - \| \nabla\L(\hz)\|_\H^2 \\
		&= \E{ | \ell^\prime(\hz(X), Y) | \sqrt{k(X,X)} }^2 - \| \nabla\L(\hz)\|_\H^2,
	\end{align*}
	with equality if and only if $\psi(Z) \propto \psi(Z)^{-1} \ell^\prime(\hz(X), Y)^2 k(X,X)$ or equivalently $\psi(Z) \propto | \ell^\prime(\hz(X), Y) | k(X,X)^{1/2}$.
	Thus, $\psi^\star(Z) \defeq | \ell^\prime(\hz(X), Y) | \sqrt{k(X, X)}$ is a minimizer of $\psi \mapsto \Tr(\Sigma^*(\psi))$.
\end{proof}

\subsection{Proof of \Cref{thm:srm_alpha_pilot_normality_R}}

In this section, denote for all $n > 0$ and $i \in \{ 1, \dots, n \}$,
\begin{gather*}
	\psi_i = |\ell^\prime(\hz(X_i), Y_i)| \sqrt{k(X_i, X_i)}\quad \text{and} \quad \psi_{n,i} = |\ell^\prime(\hp(X_i), Y_i)| \sqrt{k(X_i, X_i)},
\end{gather*}
such that $\tilde \pi_{n, i} \propto \psi_{n,i}$ and
\begin{equation*}
	\tilde \pi_{n,i}^\alpha = (1 - \alpha) \tilde\pi_{n,i} + \alpha\frac{1}{n}.
\end{equation*}
As $\tilde \pi_{n,i}^\alpha \geq \alpha n^{-1}$ for all $i \in \{ 1, \dots, n \}$ then
\begin{equation*}
	\E{ \max_{1 \le i \le n} \cb*{\frac{1}{n \tilde\pi_{n,i}^\alpha}}^2 } \leq \frac{1}{\alpha^2},
\end{equation*}
hence the sequence of probability vectors $(\tilde\ppi_n^\alpha)_{n > 0}$ verifies \Cref{hyp:ssp}, thus we can deduce the consistency of $\hasn$ by application of \Cref{thm:srm_consistency_core}.
Therefore
\begin{equation}
	\| \hasn - \hz \|_\H \pconv 0.
\end{equation}
Moreover, the results of \Cref{lem:srm_normality_2,lem:srm_normality_1_bis} still apply, if replacing $\hsn$ by $\hasn$ and $\nabla\Lsn$ by $\nabla\Lsna$, because $(\tilde\ppi_n^\alpha)_{n > 0}$ verifies \Cref{hyp:ssp}.

\begin{lemma} \label{lem:srm_alpha_pilot_normality_1}
	Under \Cref{hyp:kernel,hyp:loss,hyp:distribution,hyp:lsn_speed}, if $\hp \pconv \hz$, one has for all $t \in \R$ and $f \in \H$,
	\begin{equation*}
		\E{\exp\pa*{it S_{s_n}^{*\alpha}} \middle| \dn, \hp} \pconv \exp\pa*{-\frac{1}{2} t^2 \dprod{\Sigma^*_\mathrm{M}(\alpha) f}{f}},
	\end{equation*}
	with
	\begin{gather*}
		S_{s_n}^{*\alpha} \defeq \sqrt{s_n} \dprod{\nabla\Lsna(\hz) - \nabla\Ln(\hz)}{f}, \\
		\nabla\Lsna(\hz) = \frac{1}{s_n} \sum_{j = 1}^{s_n} \sum_{i = 1}^n \frac{\ell^\prime(\hz(X_i), Y_i))}{n \tilde \pi_{n,i}^\alpha} K_{X_i} \mathbbm{1}_{\{ I_{n,j} = i \}},
	\end{gather*}
	and
	\begin{equation*}
		\Sigma^*_\mathrm{M}(\alpha) \defeq \E{ \frac{\E{ \psi^\star(Z) } \ell^\prime(\hz(X), Y)^2}{(1 - \alpha) \psi^\star(Z) + \alpha \E{ \psi^\star(Z) }} K_X \otimes K_X } - \nabla\L(\hz) \otimes \nabla\L(\hz).
	\end{equation*}
\end{lemma}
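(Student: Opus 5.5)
The plan mirrors the proof of \Cref{lem:srm_normality_1}, now conditioning on $\sigma(\dn, \hp)$ instead of $\sigma(\dn)$, and then applying \Cref{thm:technical_conditional_dconv}. Write $S_{s_n}^{*\alpha} = \sum_{j=1}^{s_n} \eta_{n,j}$ with
\[
\eta_{n,j} \defeq \frac{1}{\sqrt{s_n}} \pa*{ \frac{1}{n} \sum_{i=1}^n \frac{\mathcal{L}^\prime(Z_i, \hz, f)}{\tilde\pi^\alpha_{n,i}} \mathbbm{1}_{\{ I_{n,j} = i \}} - \dprod{\nabla\Ln(\hz)}{f} },
\]
and take the filtration $\tFnj \defeq \sigma(\dn, \hp, I_{n,1}, \dots, I_{n,j})$. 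Conditionally on $(\dn, \hp)$, the indices $I_{n,j}$ are i.i.d.\ with law $\tilde\ppi^\alpha_n$. Hence the $\eta_{n,j}$ are i.i.d.\ and centered, so they form a martingale difference array. The conditional characteristic function then factorizes as the product appearing in \Cref{thm:technical_conditional_dconv}.

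For the conditional variance, the same computation as before gives
\[
\sum_{j} \E{\eta_{n,j}^2 \middle| \tFnjm} = \frac{1}{n^2}\sum_{i=1}^n \frac{\mathcal{L}^\prime(Z_i,\hz,f)^2}{\tilde\pi^\alpha_{n,i}} - \dprod{\nabla\Ln(\hz)}{f}^2 .
\]
We rewrite $n\tilde\pi^\alpha_{n,i} = \big((1-\alpha)\psi_{n,i} + \alpha \bar\psi_n\big)/\bar\psi_n$, where $\bar\psi_n \defeq n^{-1}\sum_j \psi_{n,j}$. The first term is then $\bar\psi_n \cdot n^{-1}\sum_i \mathcal{L}^\prime(Z_i,\hz,f)^2 / \big((1-\alpha)\psi_{n,i}+\alpha\bar\psi_n\big)$. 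The key step, and the main obstacle, is to show that replacing $\psi_{n,i}$ by the ideal $\psi_i$ (the version built with $\hz$) costs $o_\Pr(1)$.

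For this, let $B \defeq \kappa(\|\hz\|_\H+1)$ and work on the event $\{\|\hp-\hz\|_\H<1\}$, whose probability tends to one. On this event, \Cref{hyp:loss} and \Cref{hyp:kernel} give
\[
|\psi_{n,i} - \psi_i| \le \kappa^2 \phi_B(Y_i)\,\|\hp - \hz\|_\H .
\]
Averaging, $|\bar\psi_n - n^{-1}\sum_i \psi_i| \le \kappa^2 \|\hp-\hz\|_\H\, n^{-1}\sum_i \phi_B(Y_i) = o_\Pr(1)$. The law of large numbers then gives $\bar\psi_n \pconv \Ex\psi^\star(Z) > 0$; positivity holds unless $\ell^\prime(\hz(X),Y)=0$ a.s., a trivial case. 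To control the denominators, note that both are bounded below by $\alpha\bar\psi_n$ and $\alpha n^{-1}\sum_i\psi_i$, and these are bounded away from zero with high probability. This yields
\[
\Big|\frac{1}{(1-\alpha)\psi_{n,i}+\alpha\bar\psi_n} - \frac{1}{(1-\alpha)\psi_i+\alpha\bar\psi_n}\Big| \le C\big(\phi_B(Y_i)\|\hp-\hz\|_\H\big).
\]
Multiplying by $\mathcal{L}^\prime(Z_i,\hz,f)^2$ and averaging produces a factor $n^{-1}\sum_i \phi_B(Y_i)\mathcal{L}^\prime(Z_i,\hz,f)^2$. This is $O_\Pr(1)$ by \Cref{hyp:distribution} and Cauchy--Schwarz, so the replacement error vanishes.

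It remains to treat the deterministic-$\psi$ version, whose denominator still contains the random $\bar\psi_n$. Since $\bar\psi_n$ converges to a constant, and $g(m) \defeq n^{-1}\sum_i \mathcal{L}^{\prime}(Z_i,\hz,f)^2/((1-\alpha)\psi_i+\alpha m)$ is Lipschitz in $m$ on a neighbourhood of $\Ex\psi^\star$ with $O_\Pr(1)$ constant, the law of large numbers gives convergence to $\E{\mathcal{L}^\prime(Z,\hz,f)^2/\Psi^\star_\alpha(Z)}$. Altogether the conditional variance tends to $\dprod{\Sigma^*_\mathrm{M}(\alpha)f}{f}$.

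The conditional Lindeberg condition follows exactly as in \Cref{lem:srm_normality_1}: bound it by $s_n\varepsilon^{-2}\E{\eta_{n,1}^4\mid \dn,\hp}$ and use $\max_i (n\tilde\pi^\alpha_{n,i})^{-1}\le\alpha^{-1}$ deterministically. The fourth moment is then $O_\Pr(1)/s_n \pconv 0$. Applying \Cref{thm:technical_conditional_dconv} together with the factorization concludes the proof.
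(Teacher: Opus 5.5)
Your proposal is correct and follows essentially the paper's proof: the same martingale difference array with filtration $\sigma(\dn, \hp, I_{n,1}, \dots, I_{n,j})$, and the same conditional-variance formula. It also uses the same Lipschitz bound $|\psi_{n,i}-\psi_i| \le \kappa^2 \phi_B(Y_i)\|\hp-\hz\|_\H$ on $\{\|\hp-\hz\|_\H<1\}$ and the same lower bounds of order $\alpha$ on the denominators. The differences are cosmetic. You replace $\bar\psi_n$ by $\Ex\psi^\star(Z)$ in a separate Lipschitz-in-$m$ step, whereas the paper does both replacements in one triangle inequality. You also check the Lindeberg condition explicitly via the deterministic bound $\max_i (n\tilde\pi^\alpha_{n,i})^{-1}\le\alpha^{-1}$, which the paper leaves implicit.
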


\begin{proof}
	As per the proof of \Cref{lem:srm_normality_1}, the goal is to apply \Cref{thm:technical_conditional_dconv}.
	The only differences lie in the sequence of filtration to handle a martingale difference array, the conditional variance and its limit.
	As the rest remains unchanged, this is what we will focus on in this proof.
	Let $f \in \H$.
	Define $S_{s_n}^{*\alpha} \defeq \dprod{\sqrt{s_n} (\nabla\Lsna - \nabla\Ln)(\hz)}{f}$ a real-valued random variable and $\mathcal{L}^\prime(Z, h, f) \defeq \ell^\prime(h(X), Y)f(X)$, then
	\begin{align*}
		S_{s_n}^{*\alpha} &= \sum_{j = 1}^{s_n} \frac{1}{\sqrt{s_n}} \pa*{ \frac{1}{n} \sum_{i = 1}^n \frac{\mathcal{L}^\prime(Z_i, \hz, f)}{\tilde \pi_{n,i}^\alpha} \mathbbm{1}_{\{ I_{n,j} = i \}} - \dprod{\nabla\Ln(\hz)}{f} } \\
		&= \sum_{j = 1}^{s_n} \tilde \eta_{n,j},
	\end{align*}
	where
	\begin{equation*}
		\tilde \eta_{n,j} \defeq \frac{1}{\sqrt{s_n}} \pa*{ \frac{1}{n} \sum_{i = 1}^n \frac{\mathcal{L}^\prime(Z_i, \hz, f)}{\tilde \pi_{n,i}^\alpha} \mathbbm{1}_{\{ I_{n,j} = i \}} - \dprod{\nabla\Ln(\hz)}{f} },
	\end{equation*}
	for all $n \in \N$ and $j \in \{ 1, \dots, s_n \}$.
	Moreover, define the filtrations $(\tFnj)_{n \in \N, 0 \leq j \leq s_n}$ such that $\tFnj$ is the $\sigma$-algebra generated by $\dn$, $\hp$ and $I_{n,1}, \dots, I_{n,j}$ for $0 \leq j \leq s_n$ and $n \in \N$.
	Notice that for all $n \in \N$ and $j \in \{ 1, \dots, s_n \}$ the conditional distributions of $\tilde \eta_{n,j}$ with respect to $(\dn, \hp)$ and $\tFnjm$ are the same.
	Remark as well that, conditionally to $\dn$ and $\hp$, $S_{s_n}^*$ is a sum of \iid\ random variables such that $\E{ \tilde \eta_{n, j} \middle| \dn, \hp } = 0$ since $\P{ I_{n,j} = i \middle| \dn, \hp } = \tilde \pi_{n, i}^\alpha$.
	Hence $\{ \tilde \eta_{n,j} : n \in \N, 1 \leq j \leq s_n \}$ is a martingale difference array for the filtration $(\tFnj)_{n \in \N, 1 \leq j \leq s_n}$.
	Moreover, its conditional variance reads
	\begin{align*}
		\sum_{j = 1}^{s_n} \E{ \tilde \eta_{n, j}^2 \middle| \tFnjm} &= s_n \E{ \tilde \eta_{n, 1}^2 \middle| \dn, \hp } \\
		&= \E{ \pa*{ \frac{1}{n} \sum_{i = 1}^n \frac{\mathcal{L}^\prime(Z_i, \hz, f)}{\tilde \pi_{n,i}^\alpha} \mathbbm{1}_{\{ I_{n,j} = i \}} - \dprod{\nabla\Ln(\hz)}{f} }^2 \middle| \dn, \hp}.
	\end{align*}
	Since $\E{ \frac{1}{n} \sum_{i = 1}^n \frac{\mathcal{L}^\prime(Z_i, \hz, f)}{\tilde \pi_{n,i}^\alpha} \mathbbm{1}_{\{ I_{n,j} = i \}} \middle| \dn, \hp} = \dprod{\nabla\Ln(\hz)}{f}$, the previous display reads
	\begin{equation*}
		\sum_{j = 1}^{s_n} \E{ \tilde \eta_{n, j}^2 \middle| \tFnjm} = \E{ \pa*{ \frac{1}{n} \sum_{i = 1}^n \frac{\mathcal{L}^\prime(Z_i, \hz, f)}{\tilde \pi_{n,i}^\alpha} \mathbbm{1}_{\{ I_{n,j} = i \}} }^2 \middle| \dn, \hp } - \dprod{\nabla\Ln(\hz)}{f}^2.
	\end{equation*}
	However only one term in the sum is non-zero.
	Thus
	\begin{align*}
		\sum_{j = 1}^{s_n} \E{ \tilde \eta_{n, j}^2 \middle| \tFnjm} &= \frac{1}{n^2} \sum_{i = 1}^n \pa*{ \frac{\mathcal{L}^\prime(Z_i, \hz, f)}{\tilde \pi_{n,i}^\alpha} }^2 \P{I_{n,j} = i \middle| \dn, \hp} - \dprod{\nabla\Ln(\hz)}{f}^2\\
		&= \frac{1}{n^2} \sum_{i = 1}^n\frac{\mathcal{L}^\prime(Z_i, \hz, f)^2}{\tilde \pi_{n,i}^\alpha} - \dprod{\nabla\Ln(\hz)}{f}^2.
	\end{align*}
	Furthermore
	\begin{align*}
		\tilde \pi_{n, i}^\alpha = (1 - \alpha) \tilde \pi_{n, i} + \frac{\alpha}{n} = \frac{(1 - \alpha) \psi_{n,i} + \frac{\alpha}{n} \sum_{j = 1}^{n} \psi_{n,j}}{\sum_{j = 1}^n \psi_{n,j}},
	\end{align*}
	hence
	\begin{align*}
		\frac{1}{n^2} \sum_{i = 1}^n \frac{\mathcal{L}^\prime(Z_i, \hz, f)^2}{\tilde \pi_{n,i}^\alpha} = \pa*{ \frac{1}{n} \sum_{j = 1}^{n} \psi_{n,j} } \pa*{ \frac{1}{n} \sum_{i=1}^n \frac{\mathcal{L}^\prime(Z_i, \hz, f)^2}{(1 - \alpha) \psi_{n,i} + \frac{\alpha}{n} \sum_{j = 1}^{n} \psi_{n,j}} }.
	\end{align*}
	Denote $\psi_i^\alpha \defeq (1 - \alpha) \psi_i + \alpha \Ex\psi^\star(Z)$ and $\psi_{n,i}^\alpha \defeq (1 - \alpha) \psi_{n,i} + \frac{\alpha}{n} \sum_{j= 1}^n \psi_{n,j}$.
	Hence for all $n > 0$ and $i \in \{ 1, \dots, n \}$,
	\begin{equation*}
		 \psi_i^\alpha \geq \alpha \Ex\psi^\star(Z) \quad \text{and} \quad \psi_{n,i}^\alpha \geq \frac{\alpha}{n} \sum_{j= 1}^n \psi_{n,j}.
	\end{equation*}
	Therefore by the triangle inequality
	\begin{align*}
		\av*{ \frac{1}{\psi_{n,i}^\alpha} - \frac{1}{\psi_i^\alpha} } &= \frac{| \psi_{n,i}^\alpha - \psi_i^\alpha |}{\psi_{n,i}^\alpha \psi_i^\alpha} \\
		&\leq \frac{n}{\alpha^2 (\sum_{j = 1}^n \psi_{n,j}) \Ex \psi} \pa*{ (1 - \alpha) | \psi_{n,i} - \psi_i | + \alpha \av*{ \frac{1}{n} \sum_{j = 1}^n \psi_{n,j} - \Ex\psi^\star(Z) }},
	\end{align*}
	and consequently
	\begin{align*}
		&\av*{ \frac{1}{n} \sum_{i=1}^n \frac{\mathcal{L}^\prime(Z_i, \hz, f)^2}{\psi_{n,i}^\alpha} - \frac{1}{n} \sum_{i=1}^n \frac{\mathcal{L}^\prime(Z_i, \hz, f)^2}{\psi_i^\alpha} } \\
		&\quad \leq \frac{n}{\alpha^2 (\sum_{j = 1}^n \psi_{n,j}) \Ex \psi} \Bigg( (1 - \alpha) \frac{1}{n} \sum_{i = 1}^n \mathcal{L}^\prime(Z_i, \hz, f)^2 |\psi_{n,i} - \psi_i| \\
		&\hspace{5cm} + \alpha \av*{ \frac{1}{n} \sum_{j = 1}^n \psi_{n,j} - \Ex\psi^\star(Z) } \bigg( \frac{1}{n} \sum_{i=1}^n \mathcal{L}^\prime(Z_i, \hz, f)^2 \bigg)\Bigg).
	\end{align*}
	Moreover, for all $n \in \N$, and $i \in \{ 1, \dots, n \}$,
	\begin{align*}
		&| \psi_{n,i} - \psi_i | \mathbbm{1}_{\{ \| \hp - \hz \|_\H < 1 \}} \\
		&\quad \leq \sqrt{k(X_i, X_i)} | \ell^\prime(\hp(X_i), Y_i) - \ell^\prime(\hz(X_i), Y_i) | \mathbbm{1}_{\{ \| \hp - \hz \|_\H < 1 \}} \\
		&\quad \leq \kappa \phi(Y_i) | \hp(X_i) - \hz(X_i) | \mathbbm{1}_{\{ \| \hp - \hz \|_\H < 1 \}} \\
		&\quad \leq \kappa^2 \| \hp - \hz \|_\H \phi(Y_i),
	\end{align*}
	using \Cref{hyp:loss}.
	Thus, together with \Cref{hyp:distribution} and $\hp$'s consistency, the previous inequality yields
	\begin{align*}
		\frac{1}{n} \sum_{i = 1}^n \mathcal{L}^\prime(Z_i, \hz, f)^2 |\psi_{n,i} - \psi_i| \mathbbm{1}_{\{ \| \hp - \hz \|_\H < 1 \}} \pconv 0.
	\end{align*}
	Whence applying \Cref{lem:technical_pconv_2} with $E_n = \{ \| \hp - \hz \| \geq 1 \}$ along with the previous display yields
	\begin{align*}
		\frac{1}{n} \sum_{i = 1}^n \mathcal{L}^\prime(Z_i, \hz, f)^2 |\psi_{n,i} - \psi_i| \pconv 0.
	\end{align*}
	Similarly $n^{-1} \sum_{i = 1}^n | \psi_{n,i} - \psi_i | \pconv 0$, which implies that $n^{-1} \sum_{i = 1}^n \psi_{n,i} \pconv \Ex \psi^\star(Z)$.
	Hence
	\begin{equation*}
		\av*{ \frac{1}{n} \sum_{i=1}^n \frac{\mathcal{L}^\prime(Z_i, \hz, f)^2}{\psi_{n,i}^\alpha} - \frac{1}{n} \sum_{i=1}^n \frac{\mathcal{L}^\prime(Z_i, \hz, f)^2}{\psi_i^\alpha} } \pconv 0.
	\end{equation*}
	However
	\begin{equation*}
		\frac{1}{n} \sum_{i=1}^n \frac{\mathcal{L}^\prime(Z_i, \hz, f)^2}{\psi_i^\alpha} \pconv \E{ \frac{\mathcal{L}^\prime(Z, \hz, f)^2}{(1 - \alpha) \psi^\star(Z) + \alpha \E{ \psi^\star(Z) }} },
	\end{equation*}
	which yields
	\begin{align*}
		\sum_{j = 1}^{s_n} \E{ \tilde \eta_{n, j}^2 \middle| \tFnjm} &= \frac{1}{n^2} \sum_{i = 1}^n \frac{\mathcal{L}^\prime(Z_i, \hz, f)^2}{\tilde \pi_{n,i}^\alpha} - \dprod{\nabla\Ln(\hz)}{f}^2 \\
		&\quad \pconv \E{ \frac{\E{\psi^\star(Z)} \mathcal{L}^\prime(Z, \hz, f)^2}{(1 - \alpha) \psi^\star(Z) + \alpha \E{ \psi^\star(Z) }} } - \dprod{\nabla\L(\hz)}{f}^2.
	\end{align*}
	Which concludes the proof.
\end{proof}

\begin{proof}[Proof of \Cref{thm:srm_alpha_pilot_normality_R}]
	Since \Cref{hyp:ssp} is verified, by application of \ref{lem:srm_normality_2} and following the proof of \Cref{thm:srm_normality_R}, one gets
	\begin{align*}
		\sqrt{s_n} \nabla^2\risk(\hz) (\hasn - \hz) + o_\Pr(1) &= - \sqrt{s_n} (\nabla\Lsna(\hz) - \nabla\Ln(\hz)) \\
		&\qquad - \sqrt{\frac{s_n}{n}} \sqrt{n} (\nabla\Ln(\hz) - \nabla\L(\hz)).
	\end{align*}
	Applying \ref{lem:srm_normality_final} with $\Fn = \sigma(\dn, \hp)$ and $H_n = \sqrt{s_n} (\nabla\Lsna(\hz) - \nabla\Ln(\hz))$, together with \Cref{lem:srm_normality_1_bis,lem:srm_alpha_pilot_normality_1} yields
	\begin{equation*}
		\sqrt{s_n} \nabla^2\risk(\hz) (\hasn - \hz) \dconv \GP(0, \Sigma^*_\mathrm{M}(\alpha) + c \Sigma). \qedhere
	\end{equation*}
\end{proof}

\subsection{Proof of \Cref{thm:srm_alpha_pilot_normality_P}}

In this section, as of the previous one, denote for all $n > 0$ and $i \in \{ 1, \dots, n \}$,
\begin{gather*}
	\psi_i = |\ell^\prime(\hz(X_i), Y_i)| \sqrt{k(X_i, X_i)}\quad \text{and} \quad \psi_{n,i} = |\ell^\prime(\hp(X_i), Y_i)| \sqrt{k(X_i, X_i)},
\end{gather*}
such that $\tilde \pi_{n, i} \propto \psi_{n,i}$ and
\begin{equation*}
	\tilde \pi_{n,i}^\alpha = (1 - \alpha) \tilde\pi_{n,i} + \alpha\frac{1}{n}.
\end{equation*}
As $\tilde \pi_{n,i}^\alpha \geq \alpha n^{-1}$ for all $i \in \{ 1, \dots, n \}$ then $\tilde\pi_{n, i}^\alpha \wedge s_n^{-1} \geq (\alpha n^{-1}) \wedge s_n^{-1} \geq \alpha n^{-1}$ since $\alpha \in (0, 1)$ and $n \geq s_n$.
Therefore
\begin{equation*}
	\E{ \max_{1 \le i \le n} \cb*{\frac{1}{n (\tilde\pi_{n, i}^\alpha \wedge s_n^{-1})}}^2 } \leq \frac{1}{\alpha^2},
\end{equation*}
hence the sequence of vectors $(\tilde\ppi_n^\alpha \wedge s_n^{-1})_{n > 0}$ verifies \Cref{hyp:ssp}, thus we can deduce the consistency of $\hasn$ by application of \Cref{thm:srm_consistency_core}.
Therefore
\begin{equation}
	\| \hasn - \hz \|_\H \pconv 0.
\end{equation}
Moreover, the results of \Cref{lem:srm_normality_2,lem:srm_normality_1_bis} still apply, if replacing $\hsn$ by $\hasn$ and $\nabla\Lsn$ by $\nabla\Lsna$, because $(\tilde\ppi_n^\alpha)_{n > 0}$ verifies \Cref{hyp:ssp}.

\begin{lemma} \label{lem:poisson_srm_alpha_pilot_normality_1}
	Under \Cref{hyp:kernel,hyp:loss,hyp:distribution,hyp:lsn_speed}, if $\hp \pconv \hz$, one has for all $t \in \R$ and $f \in \H$,
	\begin{equation*}
		\E{\exp\pa*{it S_{s_n}^{*\alpha}} \middle| \dn, \hp} \pconv \exp\pa*{-\frac{1}{2} t^2 \dprod{\Sigma^*_{\mathrm{P},c}(\alpha) f}{f}},
	\end{equation*}
	with
	\begin{gather*}
		S_{s_n}^{*\alpha} \defeq \sqrt{s_n} \dprod{\nabla\Lsna(\hz) - \nabla\Ln(\hz)}{f}, \\
		\nabla\Lsna(\hz) = \frac{1}{n} \sum_{i = 1}^{n} \frac{\ell^\prime(\hz(X_i), Y_i)}{(s_n \tilde \pi_{n,i}^\alpha) \wedge 1} K_{X_i} \delta_{n,i},
	\end{gather*}
	and
	\begin{equation*}
		\Sigma^*_{\mathrm{P},c}(\alpha) \defeq \E{ \frac{\E{ \psi^\star(Z) } \ell^\prime(\hz(X), Y)^2}{ \Psi^\star_\alpha(Z) \wedge (c^{-1} \E{\psi^\star(Z)})} K_X \otimes K_X } - c \E{ \ell^\prime(\hz,(X), Y)^2 K_X \otimes K_X },
	\end{equation*}
	where $\Psi^\star_\alpha(Z) \defeq (1 - \alpha) \psi^\star(Z) + \alpha \Ex\psi^\star(Z)$.
\end{lemma}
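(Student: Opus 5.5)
We follow the proof of \Cref{lem:poisson_srm_normality_1}, with the filtration enlarged by the pilot estimator, and then reuse the plug-in arguments of \Cref{lem:srm_alpha_pilot_normality_1}. Write $p_{n,i} \defeq (s_n \tilde\pi_{n,i}^\alpha) \wedge 1$ for the Bernoulli parameter of $\delta_{n,i}$. Then
\[
S_{s_n}^{*\alpha} = \sum_{i=1}^n \eta_{n,i}, \qquad \eta_{n,i} \defeq \frac{\sqrt{s_n}\,\mathcal{L}^\prime(Z_i,\hz,f)}{n\,p_{n,i}}(\delta_{n,i} - p_{n,i}).
\]
Let $\tFni$ be the $\sigma$-algebra generated by $\dn$, $\hp$ and $\delta_{n,1},\dots,\delta_{n,i}$. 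Conditionally on $(\dn,\hp)$ the $\eta_{n,i}$ are independent and centred, so they form a martingale difference array. Hence $\E{e^{itS^{*\alpha}_{s_n}} | \dn,\hp} = \prod_i \E{e^{it\eta_{n,i}} | \tFnim}$, and it suffices to verify the two conditions of \Cref{thm:technical_conditional_dconv}.

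\textbf{Conditional variance.} A direct computation gives
\[
\sum_i \E{\eta_{n,i}^2 | \tFnim} = \frac{s_n}{n^2}\sum_i \mathcal{L}^\prime(Z_i,\hz,f)^2\Big(\frac{1}{p_{n,i}} - 1\Big).
\]
The term $-\frac{s_n}{n^2}\sum_i \mathcal{L}^{\prime 2}$ converges to $-c\E{\mathcal{L}^\prime(Z,\hz,f)^2}$ by the law of large numbers. For the main term, write
\[
\frac{s_n}{n p_{n,i}} = \frac{1}{n\tilde\pi^\alpha_{n,i} \wedge (n/s_n)} = \frac{\bar\psi_n}{\psi_{n,i}^\alpha \wedge (\tfrac{n}{s_n}\bar\psi_n)}, \qquad \bar\psi_n \defeq \tfrac1n\sum_j\psi_{n,j},
\]
using the notation $\psi_{n,i}^\alpha$ of the proof of \Cref{lem:srm_alpha_pilot_normality_1}. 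The target is $\frac1n\sum_i \mathcal{L}^{\prime2}_i\, \Ex\psi^\star/(\psi_i^\alpha \wedge c^{-1}\Ex\psi^\star)$, whose limit is the first term of $\Sigma^*_{\mathrm{P},c}(\alpha)$ by the LLN; when $c=0$ the truncation at $+\infty$ is inactive.

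To pass from the plug-in quantity to the target, I use that $(a,b)\mapsto 1/(a\wedge b)$ is Lipschitz on $[m,\infty)^2$ with constant $m^{-2}$. Here $m$ is a lower bound on both arguments: $\psi_{n,i}^\alpha \ge \alpha\bar\psi_n$ and $\frac{n}{s_n}\bar\psi_n \ge \bar\psi_n$. This gives
\[
\Big|\frac{1}{\psi_{n,i}^\alpha \wedge \tfrac{n}{s_n}\bar\psi_n} - \frac{1}{\psi_i^\alpha \wedge c^{-1}\Ex\psi^\star}\Big| \lesssim |\psi_{n,i}-\psi_i| + |\bar\psi_n-\Ex\psi^\star| + \Big|\tfrac{n}{s_n}\bar\psi_n - c^{-1}\Ex\psi^\star\Big|,
\]
with the last difference read as $|1/(\frac{n}{s_n}\bar\psi_n) - c/\Ex\psi^\star|$ so that the case $c=0$ is included. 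Each piece tends to $0$ by the arguments already used in \Cref{lem:srm_alpha_pilot_normality_1}: the local Lipschitz bound $|\psi_{n,i}-\psi_i| \le \kappa^2\|\hp-\hz\|_\H\,\phi(Y_i)$ on $\{\|\hp-\hz\|_\H<1\}$, \Cref{lem:technical_pconv_2} for the complementary event, the fact that $\bar\psi_n \pconv \Ex\psi^\star$, and the moment conditions of \Cref{hyp:distribution}. Multiplying by $\mathcal{L}^{\prime 2}_i$ and averaging keeps the error $o_\Pr(1)$. This handling of the truncation, uniformly in $c\in[0,1]$, is the step I expect to be the main obstacle.

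\textbf{Lindeberg condition.} As in \Cref{lem:poisson_srm_normality_1}, the fourth central moment of a Bernoulli$(p)$ is at most $4p$, so
\[
\sum_i \E{\eta_{n,i}^2\mathbbm{1}_{|\eta_{n,i}|>\varepsilon} | \tFnim} \le \frac{4 s_n^2}{\varepsilon^2 n^4}\sum_i \frac{\mathcal{L}^{\prime4}_i}{p_{n,i}^3}.
\]
Since $p_{n,i} \ge (s_n\alpha/n) \wedge 1 = s_n\alpha/n$, the right-hand side is at most
\[
\frac{4}{\varepsilon^2 \alpha^3 s_n}\cdot\frac1n\sum_i \mathcal{L}^{\prime4}_i,
\]
which is $O_\Pr(1/s_n)$ and hence tends to $0$ in probability. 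Applying \Cref{thm:technical_conditional_dconv} then yields the claimed limit.
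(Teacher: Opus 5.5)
Your proof is correct, and its overall architecture matches the paper's. Both proofs apply Theorem~\ref{thm:technical_conditional_dconv} conditionally on $(\dn,\hp)$ and obtain the same conditional variance; your $p_{n,i}$ equals the paper's $s_n\check\pi^\alpha_{n,i}$. Both then remove the pilot with the estimates already established in Lemma~\ref{lem:srm_alpha_pilot_normality_1}.

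The real difference is in how the truncation is handled.

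\begin{itemize}
\item \textbf{The paper's route.} It passes through the intermediate oracle $\pi_i^\alpha\wedge(n/s_n)$. First it uses $|a\wedge b-a'\wedge b|\le|a-a'|$ at the common finite level $n/s_n$ to eliminate the plug-in error. Then it sends $n/s_n\to c^{-1}$ with a separate dominated-convergence argument on $g_n\to g$.
\item \textbf{Your route.} You treat both effects in one bound. Writing $1/(a\wedge b)=\max(1/a,1/b)$ produces an error term $|s_n/(n\bar\psi_n)-c/\Ex\psi^\star|$ that is uniform in $i$ and valid also for $c=0$. This makes the dominated-convergence step unnecessary; the same trick would even give the paper a rate $|s_n/n-c|$ for $|g_n-g|$.
\end{itemize}

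Two small points of presentation. First, what justifies reading your third error term in reciprocal form is this identity, not the $m^{-2}$-Lipschitz property of $(a,b)\mapsto 1/(a\wedge b)$: when $c=0$ the level $c^{-1}\Ex\psi^\star$ is infinite, so $|b-b'|$ is meaningless. Second, you should also record the harmless replacement of the numerator $\bar\psi_n$ by $\Ex\psi^\star$.

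Finally, you verify the Lindeberg condition explicitly through $p_{n,i}\ge\alpha s_n/n$ (using $s_n\le n$), giving a bound of order $1/s_n$. The paper only asserts that this part carries over unchanged from Lemma~\ref{lem:poisson_srm_normality_1}.
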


\begin{proof}
	As in the proof of \Cref{lem:poisson_srm_normality_1}, the goal is to apply \Cref{thm:technical_conditional_dconv}.
	The only differences lie in the sequence of filtration to handle a martingale difference array, the conditional variance and its limit.
	As the rest remains unchanged, this is what we will focus on in this proof.
	Let $f \in \H$.
	Define $S_{s_n}^{*\alpha} \defeq \dprod{\sqrt{s_n} (\nabla\Lsna - \nabla\Ln)(\hz)}{f}$ a real-valued random variable and $\mathcal{L}^\prime(Z, h, f) \defeq \ell^\prime(h(X), Y)f(X)$, then
	\begin{align*}
		S_{s_n}^{*\alpha} &= \frac{\sqrt{s_n}}{n} \sum_{i = 1}^n \frac{\mathcal{L}^\prime(Z_i, \hz, f)}{(s_n \tilde \pi_{n,i}^\alpha) \wedge 1} \delta_{n,i} - \sqrt{s_n} \dprod{\nabla\Ln(\hz)}{f} \\
		&= \sum_{j = 1}^{s_n} \tilde \eta_{n,i},
	\end{align*}
	where
	\begin{equation*}
		\tilde \eta_{n,i} \defeq \frac{\mathcal{L}^\prime(Z_i, \hz, f)}{n \sqrt{s_n} \check\pi_{n,i}^\alpha} (\delta_{n,i} - s_n \check\pi_{n, i}^\alpha),
	\end{equation*}
	for all $n \in \N$ and $i \in \{ 1, \dots, n \}$, with $\check\pi_{n,i}^\alpha = \tilde\pi_{n, i}^\alpha \wedge s_n^{-1}$.
	Moreover, define the filtrations $(\tFni)_{n \in \N, 0 \leq i \leq n}$ such that $\tFni$ is the $\sigma$-algebra generated by $\dn$, $\hp$ and $\delta_{n,1}, \dots, \delta_{n,i}$ for $0 \leq i \leq n$ and $n \in \N$.
	Notice that for all $n \in \N$ and $i \in \{ 1, \dots, n \}$ the conditional distributions of $\tilde \eta_{n,i}$ with respect to $(\dn, \hp)$ and $\tFnim$ are the same.
	Remark as well that, conditionally to $\dn$ and $\hp$, $S_{s_n}^*$ is a sum of independent random variables such that $\Ex\br{ \tilde \eta_{n, j} | \dn, \hp } = 0$ since $\Pr\pa{ \delta_{n,j} = i | \dn, \hp } = s_n \check \pi_{n, i}^\alpha$.
	Hence $\{ \tilde \eta_{n,i} : n \in \N, 1 \leq i \leq n \}$ is a martingale difference array for the filtration $(\tFni)_{n \in \N, 0 \leq i \leq n}$.
	Moreover, its conditional variance reads
	\begin{align*}
		\sum_{i = 1}^{n} \E{ \tilde \eta_{n, i}^2 \middle| \tFnim} &= \sum_{i = 1}^{n} \E{ \pa*{ \frac{\mathcal{L}^\prime(Z_i, \hz, f)}{n \sqrt{s_n} \check \pi_{n,i}^\alpha} (\delta_{n,i} - s_n \check\pi_{n, i}^\alpha) }^2 \middle| \tFnim} \\
		&= \frac{1}{n^2} \sum_{i = 1}^{n} \frac{(1 - s_n \check\pi_{n, i}^\alpha)}{\check{\pi}_{n, i}^\alpha} \mathcal{L}^\prime(Z_i, \hz, f)^2 \\
		&= \frac{1}{n^2} \sum_{i = 1}^{n} \frac{\mathcal{L}^\prime(Z_i, \hz, f)^2}{\check\pi_{n, i}^\alpha} - \frac{s_n}{n^2} \sum_{i = 1}^n \mathcal{L}^\prime(Z_i, \hz, f)^2.
	\end{align*}
	By the law of large number and the assumption that $s_n / n \conv c \in [0, 1]$, the second term of the right hand side converges in probability to $c \E{ \mathcal{L}^\prime(Z, \hz, f)^2 }$.
	For the first term, denote
	\begin{equation*}
		\pi_i^\alpha \defeq (1 - \alpha) \frac{\psi_i}{\Ex \psi^\star(Z)} + \alpha,
	\end{equation*}
	and bound the following quantity
	\begin{align*}
		\av*{\frac{1}{n \check\pi_{n, i}^\alpha} - \frac{1}{\pi^\alpha_i \wedge (n/s_n)}} &= \frac{| n \check\pi_{n, i}^\alpha - \pi^\alpha_i \wedge (n/s_n) |}{n \check\pi_{n, i}^\alpha (\pi^\alpha_i \wedge (n/s_n))} \\
		&=\frac{| (n \tilde\pi_{n, i}^\alpha) \wedge (n/s_n) - \pi^\alpha_i \wedge (n/s_n) |}{n \check\pi_{n, i}^\alpha (\pi^\alpha_i \wedge (n/s_n))} \\
		&\leq \frac{| n \tilde\pi_{n,i}^\alpha - \pi^\alpha_i |}{n \check\pi_{n, i}^\alpha (\pi^\alpha_i \wedge (n/s_n))},
	\end{align*}
	Using the inequality $| a \wedge c - b \wedge c | \leq | a - b |$ for $a, b, c \geq 0$.
	However, $\tilde \pi_{n, i}^\alpha \geq \alpha / n$, $\pi_i^\alpha \geq \alpha$ and $\alpha \leq n / s_n$ hence
	\begin{equation*}
		\av*{\frac{1}{n \check\pi_{n, i}^\alpha} - \frac{1}{\pi^\alpha_i \wedge (n/s_n)}} \leq \frac{1}{\alpha^2} | n \tilde\pi_{n,i}^\alpha - \pi^\alpha_i |,
	\end{equation*}
	and
	\begin{equation*}
		\av*{ \frac{1}{n^2} \sum_{i = 1}^{n} \frac{\mathcal{L}^\prime(Z_i, \hz, f)^2}{\check\pi_{n, i}^\alpha} - \frac{1}{n} \sum_{i = 1}^{n} \frac{\mathcal{L}^\prime(Z_i, \hz, f)^2}{\pi_i^\alpha \wedge (n/s_n)} } \leq \frac{1}{\alpha^2} \sum_{i = 1}^{n} \mathcal{L}^\prime(Z_i, \hz, f)^2 | n \tilde\pi_{n,i}^\alpha - \pi^\alpha_i |.
	\end{equation*}
	Furthermore
	\begin{align*}
		| n \tilde\pi_{n,i}^\alpha - \pi^\alpha_i | &= (1 - \alpha) \av*{ \frac{n}{\sum_{j=1}^n \psi_{n,j}} \psi_{n,i} - \frac{\psi_i^\star}{\Ex \psi^\star} } \\
		&\leq (1 - \alpha) \frac{n}{\sum_{j=1}^n \psi_{n,j}} \av{ \psi_{n,i} - \psi_i^\star } + (1 - \alpha) \psi_i^\star \av*{ \frac{n}{\sum_{j=1}^n \psi_{n,j}} - \frac{1}{\Ex \psi^\star} }.
	\end{align*}
	We showed in the proof of \Cref{lem:srm_alpha_pilot_normality_1} that $\frac{1}{n} \sum_{i = 1}^n | \psi_{n,i} - \psi_i^\star | \pconv 0$,
	\begin{equation*}
		\frac{1}{n} \sum_{j = 1}^n \psi_{n,j} \pconv \Ex\psi^\star(Z) \quad \text{and} \quad \frac{1}{n} \sum_{i = 1}^n \mathcal{L}^\prime(Z_i, \hz, f)^2 |\psi_{n,i} - \psi_i^\star | \pconv 0,
	\end{equation*}
	by \Cref{hyp:kernel,hyp:loss,hyp:distribution} and the assumption that $\hp$ is consistent.
	Hence
	\begin{equation*}
		(1 - \alpha) \frac{n}{\sum_{j = 1}^n \psi_{n,j}} \pa*{ \frac{1}{n} \sum_{i = 1}^{n} \mathcal{L}^\prime(Z_i, \hz, f)^2 | \psi_{n,i} - \psi_i^\star | } \pconv 0,
	\end{equation*}
	and
	\begin{equation*}
		(1 - \alpha) \av*{ \frac{n}{\sum_{j = 1}^{n} \psi_{n,j}} - \frac{1}{\Ex \psi^\star(Z)} } \pa*{ \frac{1}{n} \sum_{i = 1}^{n} \mathcal{L}^\prime(Z_i, \hz, f)^2 \psi_i^\star } \pconv 0.
	\end{equation*}
	Therefore
	\begin{equation*}
		\av*{ \frac{1}{n} \sum_{i = 1}^{n} \frac{\mathcal{L}^\prime(Z_i, \hz, f)^2}{n \check\pi_{n, i}^\alpha} - \frac{1}{n} \sum_{i = 1}^{n} \frac{\mathcal{L}^\prime(Z_i, \hz, f)^2}{\pi_i^\alpha \wedge (n/s_n)} } \pconv 0.
	\end{equation*}
	Now define the functions
	\begin{equation*}
		g_n: \bm z \mapsto \frac{\mathcal{L}^\prime(\bm z, \hz, f)^2}{\pi^\alpha(\bm z) \wedge (n/s_n)} \quad \text{and} \quad g : \bm z \mapsto \frac{\mathcal{L}^\prime(\bm z, \hz, f)^2}{\pi^\alpha(\bm z) \wedge c^{-1}},
	\end{equation*}
	for $n > 0$ with $\pi^\alpha(\bm z) \defeq (1 - \alpha) \psi^\star(\bm z) / \E{ \psi^\star(Z) } + \alpha$.
	Note that even though $c$ may be null here, the function $g$ is well defined as $\pi^\alpha(\bm z) \wedge c^{-1}$ would simply be equal to $\pi^\alpha(\bm z)$.
	One has that the sequence of functions $(g_n)_{n > 0}$ is dominated by the integrable function $\bm z \mapsto \alpha^{-1} \mathcal{L}^\prime(\bm z, \hz, f)^2$ and converges pointwise towards $g$.
	Thus
	\begin{align*}
		\E{ \av*{ \frac{1}{n} \sum_{i = 1}^n g_n(Z_i) - \frac{1}{n} \sum_{i = 1}^n g(Z_i) } } &\leq \frac{1}{n} \sum_{i=1}^n \E{\av{ g_n(Z_i) - g(Z_i) }} \\
		&= \E{\av*{ g_n(Z) - g(Z) }} \conv 0,
	\end{align*}
	by the dominated convergence theorem.
	Therefore
	\begin{equation*}
		\av*{ \frac{1}{n} \sum_{i = 1}^{n} \frac{\mathcal{L}^\prime(Z_i, \hz, f)^2}{\pi_i^\alpha \wedge (n/s_n)} - \frac{1}{n} \sum_{i = 1}^{n} \frac{\mathcal{L}^\prime(Z_i, \hz, f)^2}{\pi_i^\alpha \wedge c^{-1}} } \pconv 0.
	\end{equation*}
	and
	\begin{equation*}
		\frac{1}{n^2} \sum_{i = 1}^{n} \frac{\mathcal{L}^\prime(Z_i, \hz, f)^2}{\check\pi_{n,i}^\alpha} \pconv \E{ \frac{\Ex \psi^\star(Z)}{ \Psi^\star_\alpha(Z) \wedge (c^{-1} \Ex \psi^\star(Z))} \mathcal{L}^\prime(Z, \hz, f)^2 }.
	\end{equation*}
	Which finally yields
	\begin{align*}
		\sum_{i = 1}^{n} \E{ \tilde \eta_{n, i}^2 \middle| \tFnim} &= \frac{s_n}{n^2} \sum_{i = 1}^{n} \frac{\mathcal{L}^\prime(Z_i, \hz, f)^2}{s_n \check\pi_{n, i}^\alpha} - \frac{s_n}{n^2} \sum_{i = 1}^n \mathcal{L}^\prime(Z_i, \hz, f)^2 \\
		& \pconv \E{ \frac{\Ex \psi^\star(Z)}{ \Psi^\star_\alpha(Z) \wedge (c^{-1} \E{\psi^\star(Z)})} \mathcal{L}^\prime(Z, \hz, f)^2 } - c \E{ \mathcal{L}^\prime(Z, \hz, f)^2 },
	\end{align*}
	the wanted result.
\end{proof}

\begin{proof}[Proof of \Cref{thm:srm_alpha_pilot_normality_P}]
	Since \Cref{hyp:ssp} is verified, by application of \ref{lem:srm_normality_2} and following the proof of \Cref{thm:srm_normality_R}, one gets
	\begin{align*}
		\sqrt{s_n} \nabla^2\risk(\hz) (\hasn - \hz) + o_\Pr(1) &= - \sqrt{s_n} (\nabla\Lsna(\hz) - \nabla\Ln(\hz)) \\
		&\qquad - \sqrt{\frac{s_n}{n}} \sqrt{n} (\nabla\Ln(\hz) - \nabla\L(\hz)).
	\end{align*}
	Applying \ref{lem:srm_normality_final} with $\Fn = \sigma(\dn, \hp)$ and $H_n = \sqrt{s_n} (\nabla\Lsna(\hz) - \nabla\Ln(\hz))$, together with \Cref{lem:poisson_srm_alpha_pilot_normality_1,lem:srm_normality_1_bis} yields
	\begin{equation*}
		\sqrt{s_n} \nabla^2\risk(\hz) (\hasn - \hz) \dconv \GP(0, \Sigma^*_{\mathrm{P},c}(\alpha) + c \Sigma). \qedhere
	\end{equation*}
\end{proof}


\section{Additional numerical results} \label{sec:additional}

In the first paragraph of \Cref{sec:numerical_results}, we present a study of the impact of the smoothing hyperparameter \(\alpha \in (0, 1]\) on the generalization error.
Since our framework is based on the consistency of \(\hasn\), it is also of interest to analyze \( L^2 \), \( L^\infty \) or \( \H \)-distances between \(\hasn\) and \(\hz\).
These distances are represented (with respect to \(\alpha\)) in \Cref{fig:regression_alpha_distances,fig:classification_alpha_distances}, respectively for the synthetic classification and regression tasks.
Let us remark that, in practice, since \(\hz\) is unreachable, it is approximated by the \erm \(\hn\) computed on the full dataset (\(n = 10^5\)) with a very good Nyström approximation.
The results are similar than those depicted in \Cref{fig:varying_alpha} and show that a value of \(\alpha\) between \(0.1\) and \(0.4\) leads to a better \srm estimator than uniform subsampling.
This means that, in this setting, statistical and learning performances are in line with each other.

\begin{figure}[ht]
	\centering
	\begin{subfigure}{0.48\linewidth}
		\includegraphics[width=\linewidth]{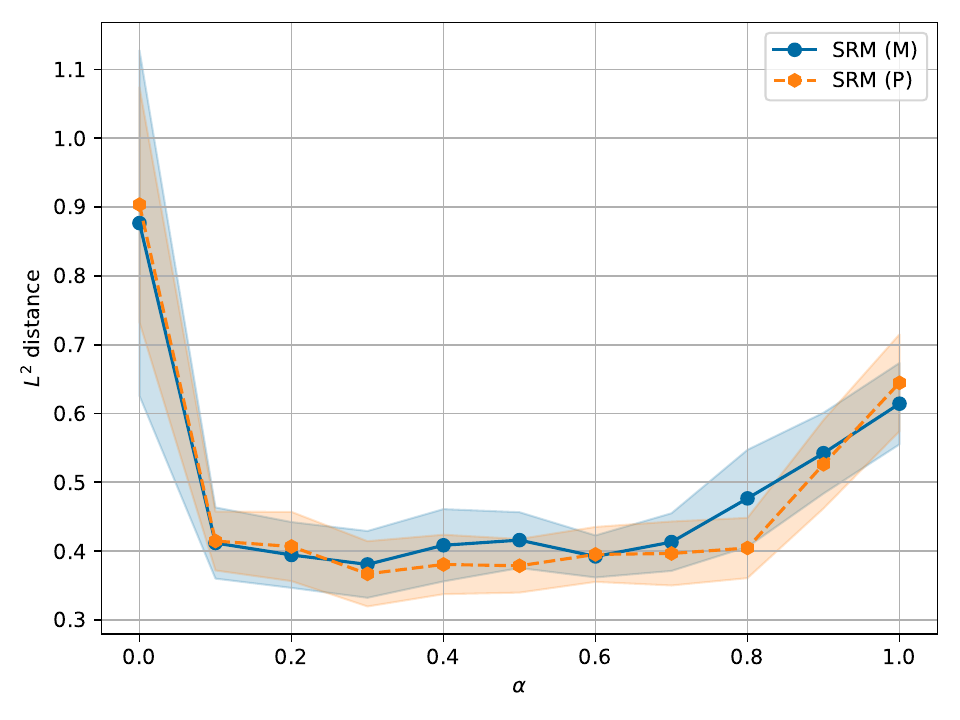}
		\caption{$L^2$-distance.}
		\label{subfig:classification_alpha_L2_distance}
	\end{subfigure}
	\hfill
	\begin{subfigure}{0.48\linewidth}
		\includegraphics[width=\linewidth]{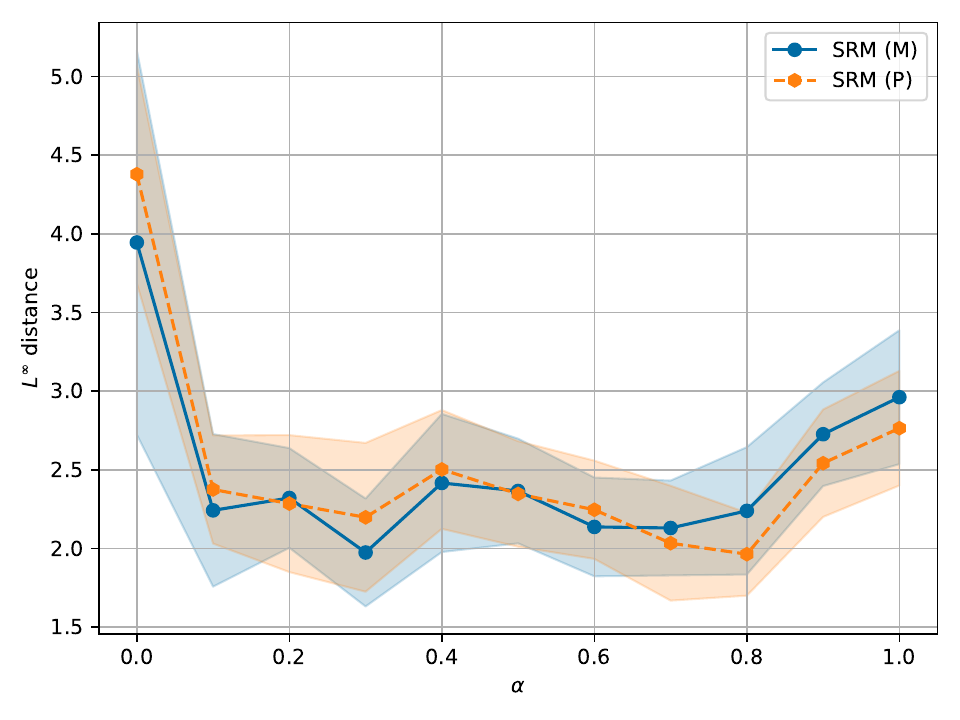}
		\caption{$L^\infty$-distance.}
		\label{subfig:classification_alpha_Linf_distance}
	\end{subfigure} \\
	\begin{subfigure}{0.48\linewidth}
		\includegraphics[width=\linewidth]{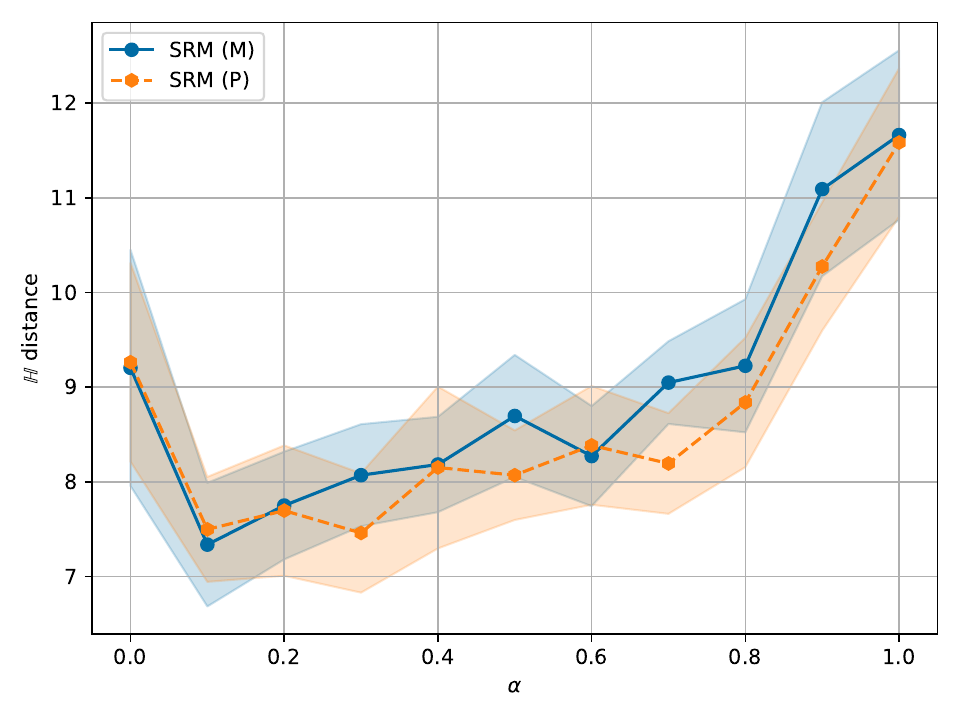}
		\caption{$\H$-distance.}
		\label{subfig:classification_alpha_RKHS_distance}
	\end{subfigure}
	\caption{
		Approximated distances between the SRM \(\hasn\) and the target \(\hz\) for the synthetic classification task (\( n=10^5 \) and \( b_n / n = 0.02\)).}
	\label{fig:classification_alpha_distances}
\end{figure}

\begin{figure}[ht]
	\centering
	\begin{subfigure}{0.48\linewidth}
		\includegraphics[width=\linewidth]{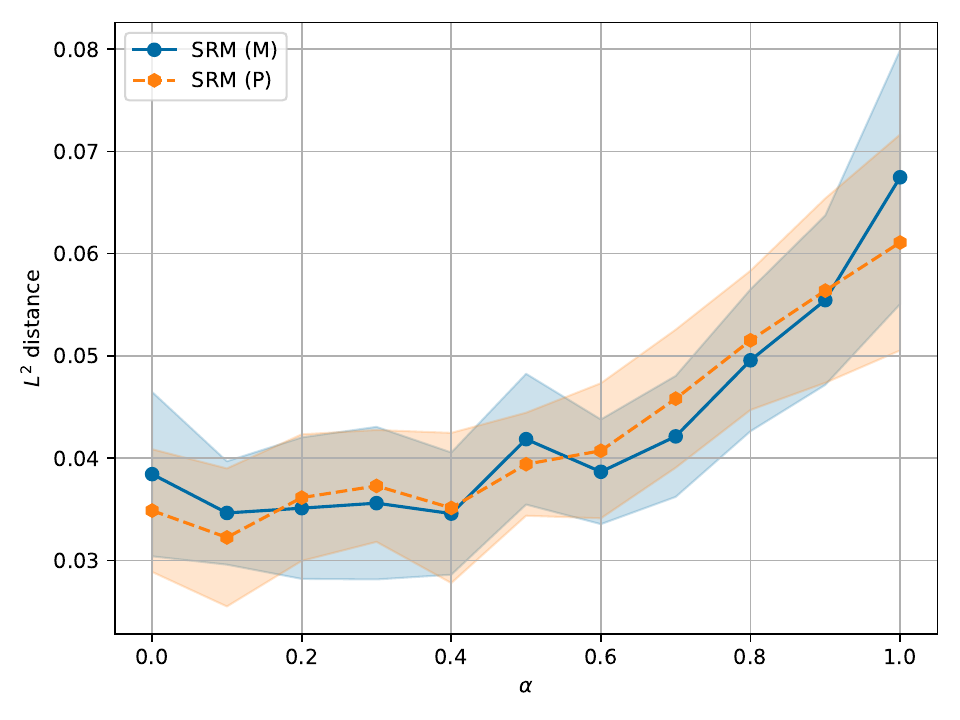}
		\caption{$L^2$-distance.}
		\label{subfig:regression_alpha_L2_distance}
	\end{subfigure}
	\hfill
	\begin{subfigure}{0.48\linewidth}
		\includegraphics[width=\linewidth]{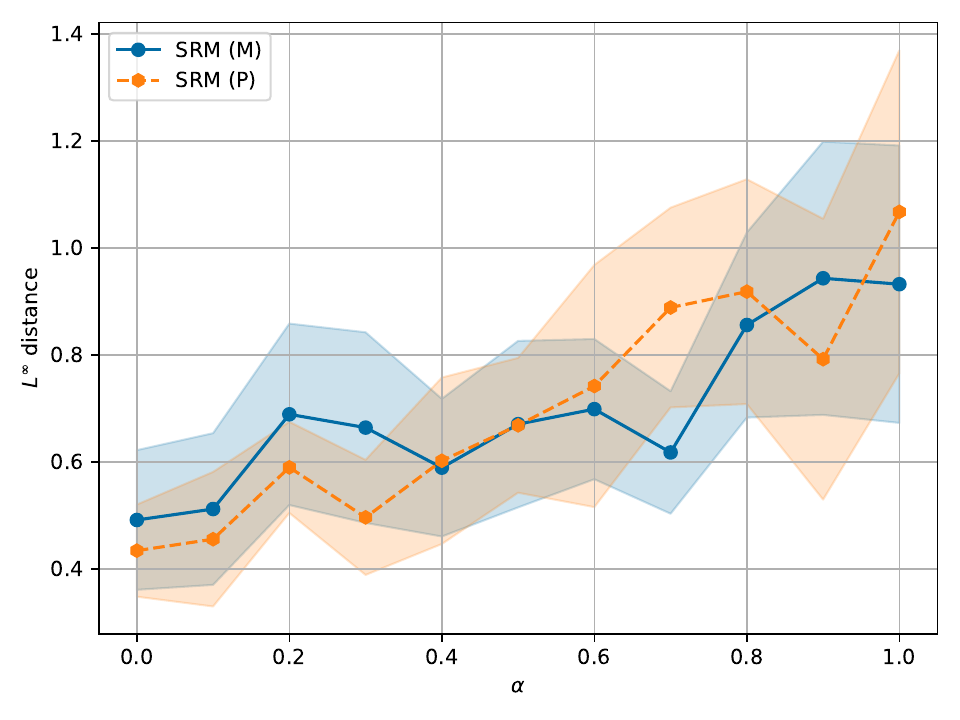}
		\caption{$L^\infty$-distance.}
		\label{subfig:regression_alpha_Linf_distance}
	\end{subfigure} \\
	\begin{subfigure}{0.48\linewidth}
		\includegraphics[width=\linewidth]{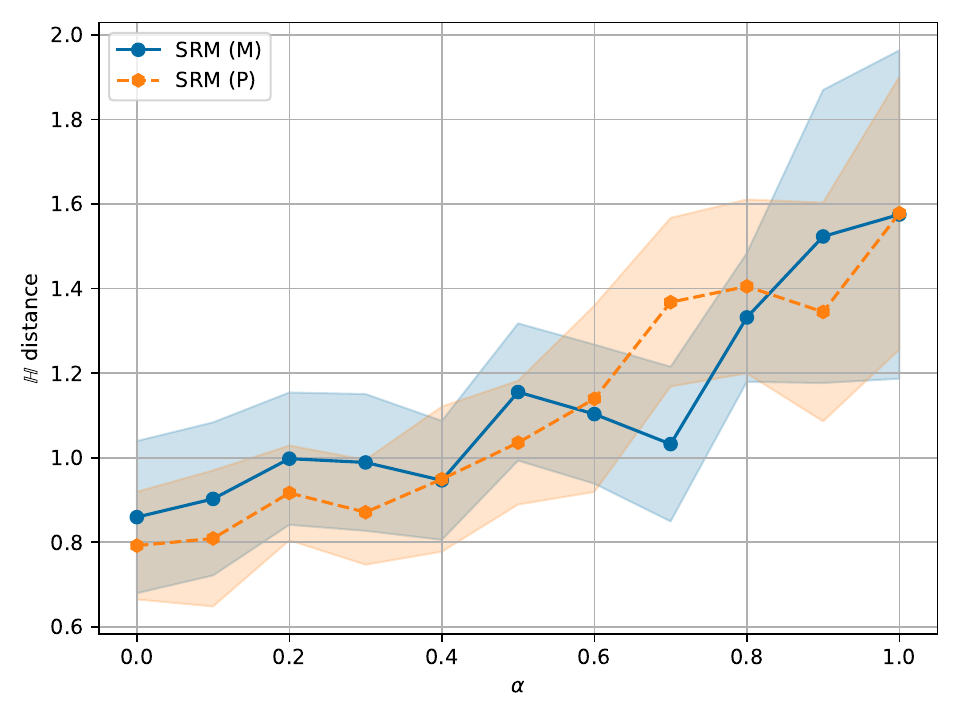}
		\caption{$\H$-distance.}
		\label{subfig:regression_alpha_RKHS_distance}
	\end{subfigure}
	\caption{
		Approximated distances between the SRM \(\hasn\) and the target \(\hz\) for the synthetic regression task (\( n=10^5 \) and \( b_n / n = 0.01\)).}
	\label{fig:regression_alpha_distances}
\end{figure}


\end{document}